\let\pdfoutput=\undefined\fi
\chardef\@x10\chardef\@xv60
\def\tcitime{
\def\@time{%
  \@minute\time\@hour\@minute\divide\@hour\@xv
  \ifnum\@hour<\@x 0\fi\the\@hour:%
  \multiply\@hour\@xv\advance\@minute-\@hour
  \ifnum\@minute<\@x 0\fi\the\@minute
  }}%
\def\x@hyperref#1#2#3{%
   \catcode`\~ = 12
   \catcode`\$ = 12
   \catcode`\_ = 12
   \catcode`\# = 12
   \catcode`\& = 12
   \y@hyperref{#1}{#2}{#3}%
}
\def\y@hyperref#1#2#3#4{%
   #2\ref{#4}#3
   \catcode`\~ = 13
   \catcode`\$ = 3
   \catcode`\_ = 8
   \catcode`\# = 6
   \catcode`\& = 4
}
\def\QCTOpt[#1]#2{%
  \def\QCTOptB{#1}
  \def\QCTOptA{#2}
}
\def\QCTNOpt#1{%
  \def\QCTOptA{#1}
  \let\QCTOptB\empty
}
\def\Qct{%
  \@ifnextchar[{%
    \QCTOpt}{\QCTNOpt}
}
\def\QCBOpt[#1]#2{%
  \def\QCBOptB{#1}%
  \def\QCBOptA{#2}%
}
\def\QCBNOpt#1{%
  \def\QCBOptA{#1}%
  \let\QCBOptB\empty
}
\def\Qcb{%
  \@ifnextchar[{%
    \QCBOpt}{\QCBNOpt}%
}
\def\PrepCapArgs{%
  \ifx\QCBOptA\empty
    \ifx\QCTOptA\empty
      {}%
    \else
      \ifx\QCTOptB\empty
        {\QCTOptA}%
      \else
        [\QCTOptB]{\QCTOptA}%
      \fi
    \fi
  \else
    \ifx\QCBOptA\empty
      {}%
    \else
      \ifx\QCBOptB\empty
        {\QCBOptA}%
      \else
        [\QCBOptB]{\QCBOptA}%
      \fi
    \fi
  \fi
}
\def\GRAPHICSPS#1{%
 \ifcase\GRAPHICSTYPE
   \special{ps: #1}%
 \or
   \special{language "PS", include "#1"}%
 \fi
}%
\def\graffile#1#2#3#4{%
    \bgroup
	   \@inlabelfalse
       \leavevmode
       \@ifundefined{bbl@deactivate}{\def~{\string~}}{\activesoff}%
        \raise -#4 \BOXTHEFRAME{%
           \hbox to #2{\raise #3\hbox to #2{\null #1\hfil}}}%
    \egroup
}%
\def\draftbox#1#2#3#4{%
 \leavevmode\raise -#4 \hbox{%
  \frame{\rlap{\protect\tiny #1}\hbox to #2%
   {\vrule height#3 width\z@ depth\z@\hfil}%
  }%
 }%
}%
\let\nographics=\@msidraft
\newif\ifwasdraft
\def\GRAPHIC#1#2#3#4#5{%
   \ifnum\@msidraft=\@ne\draftbox{#2}{#3}{#4}{#5}%
   \else\graffile{#1}{#3}{#4}{#5}%
   \fi
}
\def\addtoLaTeXparams#1{%
    \edef\LaTeXparams{\LaTeXparams #1}}%
\newif\ifBoxFrame \BoxFramefalse
\newif\ifOverFrame \OverFramefalse
\newif\ifUnderFrame \UnderFramefalse
\def\BOXTHEFRAME#1{%
   \hbox{%
      \ifBoxFrame
         \frame{#1}%
      \else
         {#1}%
      \fi
   }%
}
\def\doFRAMEparams#1{\BoxFramefalse\OverFramefalse\UnderFramefalse\readFRAMEparams#1\end}%
\def\readFRAMEparams#1{%
 \ifx#1\end%
  \let\next=\relax
  \else
  \ifx#1i\dispkind=\z@\fi
  \ifx#1d\dispkind=\@ne\fi
  \ifx#1f\dispkind=\tw@\fi
  \ifx#1t\addtoLaTeXparams{t}\fi
  \ifx#1b\addtoLaTeXparams{b}\fi
  \ifx#1p\addtoLaTeXparams{p}\fi
  \ifx#1h\addtoLaTeXparams{h}\fi
  \ifx#1X\BoxFrametrue\fi
  \ifx#1O\OverFrametrue\fi
  \ifx#1U\UnderFrametrue\fi
  \ifx#1w
    \ifnum\@msidraft=1\wasdrafttrue\else\wasdraftfalse\fi
    \@msidraft=\@ne
  \fi
  \let\next=\readFRAMEparams
  \fi
 \next
 }%
\def\IFRAME#1#2#3#4#5#6{%
      \bgroup
      \let\QCTOptA\empty
      \let\QCTOptB\empty
      \let\QCBOptA\empty
      \let\QCBOptB\empty
      #6%
      \parindent=0pt
      \leftskip=0pt
      \rightskip=0pt
      \setbox0=\hbox{\QCBOptA}%
      \@tempdima=#1\relax
      \ifOverFrame
          \typeout{This is not implemented yet}%
          \show\HELP
      \else
         \ifdim\wd0>\@tempdima
            \advance\@tempdima by \@tempdima
            \ifdim\wd0 >\@tempdima
               \setbox1 =\vbox{%
                  \unskip\hbox to \@tempdima{\hfill\GRAPHIC{#5}{#4}{#1}{#2}{#3}\hfill}%
                  \unskip\hbox to \@tempdima{\parbox[b]{\@tempdima}{\QCBOptA}}%
               }%
               \wd1=\@tempdima
            \else
               \textwidth=\wd0
               \setbox1 =\vbox{%
                 \noindent\hbox to \wd0{\hfill\GRAPHIC{#5}{#4}{#1}{#2}{#3}\hfill}\\%
                 \noindent\hbox{\QCBOptA}%
               }%
               \wd1=\wd0
            \fi
         \else
            \ifdim\wd0>0pt
              \hsize=\@tempdima
              \setbox1=\vbox{%
                \unskip\GRAPHIC{#5}{#4}{#1}{#2}{0pt}%
                \break
                \unskip\hbox to \@tempdima{\hfill \QCBOptA\hfill}%
              }%
              \wd1=\@tempdima
           \else
              \hsize=\@tempdima
              \setbox1=\vbox{%
                \unskip\GRAPHIC{#5}{#4}{#1}{#2}{0pt}%
              }%
              \wd1=\@tempdima
           \fi
         \fi
         \@tempdimb=\ht1
         \advance\@tempdimb by -#2
         \advance\@tempdimb by #3
         \leavevmode
         \raise -\@tempdimb \hbox{\box1}%
      \fi
      \egroup%
}%
\def\DFRAME#1#2#3#4#5{%
  \vspace\topsep
  \hfil\break
  \bgroup
     \leftskip\@flushglue
	 \rightskip\@flushglue
	 \parindent\z@
	 \parfillskip\z@skip
     \let\QCTOptA\empty
     \let\QCTOptB\empty
     \let\QCBOptA\empty
     \let\QCBOptB\empty
	 \vbox\bgroup
        \ifOverFrame 
           #5\QCTOptA\par
        \fi
        \GRAPHIC{#4}{#3}{#1}{#2}{\z@}%
        \ifUnderFrame 
           \break#5\QCBOptA
        \fi
	 \egroup
  \egroup
  \vspace\topsep
  \break
}%
\def\FFRAME#1#2#3#4#5#6#7{%
  \@ifundefined{floatstyle}
    {
     \begin{figure}[#1]%
    }
    {
	 \ifx#1h
      \begin{figure}[H]%
	 \else
      \begin{figure}[#1]%
	 \fi
	}
  \let\QCTOptA\empty
  \let\QCTOptB\empty
  \let\QCBOptA\empty
  \let\QCBOptB\empty
  \ifOverFrame
    #4
    \ifx\QCTOptA\empty
    \else
      \ifx\QCTOptB\empty
        \caption{\QCTOptA}%
      \else
        \caption[\QCTOptB]{\QCTOptA}%
      \fi
    \fi
    \ifUnderFrame\else
      \label{#5}%
    \fi
  \else
    \UnderFrametrue%
  \fi
  \begin{center}\GRAPHIC{#7}{#6}{#2}{#3}{\z@}\end{center}%
  \ifUnderFrame
    #4
    \ifx\QCBOptA\empty
      \caption{}%
    \else
      \ifx\QCBOptB\empty
        \caption{\QCBOptA}%
      \else
        \caption[\QCBOptB]{\QCBOptA}%
      \fi
    \fi
    \label{#5}%
  \fi
  \end{figure}%
 }%
\def\makeactives{
  \catcode`\"=\active
  \catcode`\;=\active
  \catcode`\:=\active
  \catcode`\'=\active
  \catcode`\~=\active
}
   \gdef\activesoff{%
      \def"{\string"}%
      \def;{\string;}%
      \def:{\string:}%
      \def'{\string'}%
      \def~{\string~}%
    }
\def\FRAME#1#2#3#4#5#6#7#8{%
 \bgroup
 \ifnum\@msidraft=\@ne
   \wasdrafttrue
 \else
   \wasdraftfalse%
 \fi
 \def\LaTeXparams{}%
 \dispkind=\z@
 \def\LaTeXparams{}%
 \doFRAMEparams{#1}%
 \ifnum\dispkind=\z@\IFRAME{#2}{#3}{#4}{#7}{#8}{#5}\else
  \ifnum\dispkind=\@ne\DFRAME{#2}{#3}{#7}{#8}{#5}\else
   \ifnum\dispkind=\tw@
    \edef\@tempa{\noexpand\FFRAME{\LaTeXparams}}%
    \@tempa{#2}{#3}{#5}{#6}{#7}{#8}%
    \fi
   \fi
  \fi
  \ifwasdraft\@msidraft=1\else\@msidraft=0\fi{}%
  \egroup
 }%
\def\TEXUX#1{"texux"}
\long\def\QQQ#1#2{%
     \long\expandafter\def\csname#1\endcsname{#2}}%
\long\def\QQA#1#2{}%
\def\QTR#1#2{{\csname#1\endcsname {#2}}}%
\def\EXPAND#1[#2]#3{}%
\def\NOEXPAND#1[#2]#3{}%
\def\LaTeXparent#1{}%
\def\ChildStyles#1{}%
\def\ChildDefaults#1{}%
\def\QTagDef#1#2#3{}%
  \providecommand{\UNICODE}[2][]{\protect\rule{.1in}{.1in}}
  \providecommand{\U}[1]{\protect\rule{.1in}{.1in}}
\def\QQfnmark#1{\footnotemark}
 \def\abstract{%
  \if@twocolumn
   \section*{Abstract (Not appropriate in this style!)}%
   \else \small 
   \begin{center}{\bf Abstract\vspace{-.5em}\vspace{\z@}}\end{center}%
   \quotation 
   \fi
  }%
   \def\registered{\relax\ifmmode{}\r@gistered
                    \else$\m@th\r@gistered$\fi}%
 \def\r@gistered{^{\ooalign
  {\hfil\raise.07ex\hbox{$\scriptstyle\rm\text{R}$}\hfil\crcr
  \mathhexbox20D}}}}{}%
\newdimen\theight
\def\newfmtname{LaTeX2e}
  \DeclareOldFontCommand{\rm}{\normalfont\rmfamily}{\mathrm}
  \DeclareOldFontCommand{\sf}{\normalfont\sffamily}{\mathsf}
  \DeclareOldFontCommand{\tt}{\normalfont\ttfamily}{\mathtt}
  \DeclareOldFontCommand{\bf}{\normalfont\bfseries}{\mathbf}
  \DeclareOldFontCommand{\it}{\normalfont\itshape}{\mathit}
  \DeclareOldFontCommand{\sl}{\normalfont\slshape}{\@nomath\sl}
  \DeclareOldFontCommand{\sc}{\normalfont\scshape}{\@nomath\sc}
\def\alpha{{\Greekmath 010B}}%
\def\beta{{\Greekmath 010C}}%
\def\gamma{{\Greekmath 010D}}%
\def\delta{{\Greekmath 010E}}%
\def\epsilon{{\Greekmath 010F}}%
\def\zeta{{\Greekmath 0110}}%
\def\eta{{\Greekmath 0111}}%
\def\theta{{\Greekmath 0112}}%
\def\iota{{\Greekmath 0113}}%
\def\kappa{{\Greekmath 0114}}%
\def\lambda{{\Greekmath 0115}}%
\def\mu{{\Greekmath 0116}}%
\def\nu{{\Greekmath 0117}}%
\def\xi{{\Greekmath 0118}}%
\def\pi{{\Greekmath 0119}}%
\def\rho{{\Greekmath 011A}}%
\def\sigma{{\Greekmath 011B}}%
\def\tau{{\Greekmath 011C}}%
\def\upsilon{{\Greekmath 011D}}%
\def\phi{{\Greekmath 011E}}%
\def\chi{{\Greekmath 011F}}%
\def\psi{{\Greekmath 0120}}%
\def\omega{{\Greekmath 0121}}%
\def\varepsilon{{\Greekmath 0122}}%
\def\vartheta{{\Greekmath 0123}}%
\def\varpi{{\Greekmath 0124}}%
\def\varrho{{\Greekmath 0125}}%
\def\varsigma{{\Greekmath 0126}}%
\def\varphi{{\Greekmath 0127}}%
\def\nabla{{\Greekmath 0272}}
\def\FindBoldGroup{%
   {\setbox0=\hbox{$\mathbf{x\global\edef\theboldgroup{\the\mathgroup}}$}}%
}
\def\Greekmath#1#2#3#4{%
    \if@compatibility
        \ifnum\mathgroup=\symbold
           \mathchoice{\mbox{\boldmath$\displaystyle\mathchar"#1#2#3#4$}}%
                      {\mbox{\boldmath$\textstyle\mathchar"#1#2#3#4$}}%
                      {\mbox{\boldmath$\scriptstyle\mathchar"#1#2#3#4$}}%
                      {\mbox{\boldmath$\scriptscriptstyle\mathchar"#1#2#3#4$}}%
        \else
           \mathchar"#1#2#3#4%
        \fi 
    \else 
        \FindBoldGroup
        \ifnum\mathgroup=\theboldgroup 
           \mathchoice{\mbox{\boldmath$\displaystyle\mathchar"#1#2#3#4$}}%
                      {\mbox{\boldmath$\textstyle\mathchar"#1#2#3#4$}}%
                      {\mbox{\boldmath$\scriptstyle\mathchar"#1#2#3#4$}}%
                      {\mbox{\boldmath$\scriptscriptstyle\mathchar"#1#2#3#4$}}%
        \else
           \mathchar"#1#2#3#4%
        \fi     	    
	  \fi}
\newif\ifGreekBold  \GreekBoldfalse
\let\SAVEPBF=\pbf
\def\pbf{\GreekBoldtrue\SAVEPBF}%
  \newcounter{equationnumber}  
  \def\mathletters{%
     \addtocounter{equation}{1}
     \edef\@currentlabel{\theequation}%
     \setcounter{equationnumber}{\c@equation}
     \setcounter{equation}{0}%
     \edef\theequation{\@currentlabel\noexpand\alph{equation}}%
  }
    \def\BibTeX{{\rm B\kern-.05em{\sc i\kern-.025em b}\kern-.08em
                 T\kern-.1667em\lower.7ex\hbox{E}\kern-.125emX}}}{}%
\def\AmS{{\protect\usefont{OMS}{cmsy}{m}{n}%
                A\kern-.1667em\lower.5ex\hbox{M}\kern-.125emS}}}{}%
\def\@@eqncr{\let\@tempa\relax
    \ifcase\@eqcnt \def\@tempa{& & &}\or \def\@tempa{& &}%
      \else \def\@tempa{&}\fi
     \@tempa
     \if@eqnsw
        \iftag@
           \@taggnum
        \else
           \@eqnnum\stepcounter{equation}%
        \fi
     \fi
     \global\tag@false
     \global\@eqnswtrue
     \global\@eqcnt\z@\cr}
\def\TCItag{\@ifnextchar*{\@TCItagstar}{\@TCItag}}
\def\@TCItag#1{%
    \global\tag@true
    \global\def\@taggnum{(#1)}}
\def\@TCItagstar*#1{%
    \global\tag@true
    \global\def\@taggnum{#1}}
\def\ExitTCILatex{\makeatother }
\if@compatibility\message{amsmath already loaded}\fi\aftergroup\ExitTCILatex}
\if@compatibility\message{amstex already loaded}\fi\aftergroup\ExitTCILatex}
\if@compatibility\message{amsgen already loaded}\fi\aftergroup\ExitTCILatex}
\let\DOTSI\relax
\def\RIfM@{\relax\ifmmode}%
\def\FN@{\futurelet\next}%
\def\iint{\DOTSI\intno@\tw@\FN@\ints@}%
\def\iiint{\DOTSI\intno@\thr@@\FN@\ints@}%
\def\iiiint{\DOTSI\intno@4 \FN@\ints@}%
\def\idotsint{\DOTSI\intno@\z@\FN@\ints@}%
\def\ints@{\findlimits@\ints@@}%
\newif\iflimtoken@
\newif\iflimits@
\def\findlimits@{\limtoken@true\ifx\next\limits\limits@true
 \else\ifx\next\nolimits\limits@false\else
 \limtoken@false\ifx\ilimits@\nolimits\limits@false\else
 \ifinner\limits@false\else\limits@true\fi\fi\fi\fi}%
\def\multint@{\int\ifnum\intno@=\z@\intdots@                          
 \else\intkern@\fi                                                    
 \ifnum\intno@>\tw@\int\intkern@\fi                                   
 \ifnum\intno@>\thr@@\int\intkern@\fi                                 
 \int}
\def\multintlimits@{\intop\ifnum\intno@=\z@\intdots@\else\intkern@\fi
 \ifnum\intno@>\tw@\intop\intkern@\fi
 \ifnum\intno@>\thr@@\intop\intkern@\fi\intop}%
\def\intic@{%
    \mathchoice{\hskip.5em}{\hskip.4em}{\hskip.4em}{\hskip.4em}}%
\def\negintic@{\mathchoice
 {\hskip-.5em}{\hskip-.4em}{\hskip-.4em}{\hskip-.4em}}%
\def\ints@@{\iflimtoken@                                              
 \def\ints@@@{\iflimits@\negintic@
   \mathop{\intic@\multintlimits@}\limits                             
  \else\multint@\nolimits\fi                                          
  \eat@}
 \else                                                                
 \def\ints@@@{\iflimits@\negintic@
  \mathop{\intic@\multintlimits@}\limits\else
  \multint@\nolimits\fi}\fi\ints@@@}%
\def\intkern@{\mathchoice{\!\!\!}{\!\!}{\!\!}{\!\!}}%
\def\plaincdots@{\mathinner{\cdotp\cdotp\cdotp}}%
\def\intdots@{\mathchoice{\plaincdots@}%
 {{\cdotp}\mkern1.5mu{\cdotp}\mkern1.5mu{\cdotp}}%
 {{\cdotp}\mkern1mu{\cdotp}\mkern1mu{\cdotp}}%
 {{\cdotp}\mkern1mu{\cdotp}\mkern1mu{\cdotp}}}%
\def\RIfM@{\relax\protect\ifmmode}
\def\text{\RIfM@\expandafter\text@\else\expandafter\mbox\fi}
\let\nfss@text\text
\def\text@#1{\mathchoice
   {\textdef@\displaystyle\f@size{#1}}%
   {\textdef@\textstyle\tf@size{\firstchoice@false #1}}%
   {\textdef@\textstyle\sf@size{\firstchoice@false #1}}%
   {\textdef@\textstyle \ssf@size{\firstchoice@false #1}}%
   \glb@settings}
\def\textdef@#1#2#3{\hbox{{%
                    \everymath{#1}%
                    \let\f@size#2\selectfont
                    #3}}}
\newif\iffirstchoice@
\def\Let@{\relax\iffalse{\fi\let\\=\cr\iffalse}\fi}%
\def\vspace@{\def\vspace##1{\crcr\noalign{\vskip##1\relax}}}%
\def\multilimits@{\bgroup\vspace@\Let@
 \baselineskip\fontdimen10 \scriptfont\tw@
 \advance\baselineskip\fontdimen12 \scriptfont\tw@
 \lineskip\thr@@\fontdimen8 \scriptfont\thr@@
 \lineskiplimit\lineskip
 \vbox\bgroup\ialign\bgroup\hfil$\m@th\scriptstyle{##}$\hfil\crcr}%
\def\Sb{_\multilimits@}%
\def\endSb{\crcr\egroup\egroup\egroup}%
\def\Sp{^\multilimits@}%
\newdimen\ex@
\def\rightarrowfill@#1{$#1\m@th\mathord-\mkern-6mu\cleaders
 \hbox{$#1\mkern-2mu\mathord-\mkern-2mu$}\hfill
 \mkern-6mu\mathord\rightarrow$}%
\def\leftarrowfill@#1{$#1\m@th\mathord\leftarrow\mkern-6mu\cleaders
 \hbox{$#1\mkern-2mu\mathord-\mkern-2mu$}\hfill\mkern-6mu\mathord-$}%
\def\leftrightarrowfill@#1{$#1\m@th\mathord\leftarrow
\mkern-6mu\cleaders
 \hbox{$#1\mkern-2mu\mathord-\mkern-2mu$}\hfill
 \mkern-6mu\mathord\rightarrow$}%
\def\overrightarrow{\mathpalette\overrightarrow@}%
\def\overrightarrow@#1#2{\vbox{\ialign{##\crcr\rightarrowfill@#1\crcr
 \noalign{\kern-\ex@\nointerlineskip}$\m@th\hfil#1#2\hfil$\crcr}}}%
\def\overleftarrow{\mathpalette\overleftarrow@}%
\def\overleftarrow@#1#2{\vbox{\ialign{##\crcr\leftarrowfill@#1\crcr
 \noalign{\kern-\ex@\nointerlineskip}$\m@th\hfil#1#2\hfil$\crcr}}}%
\def\overleftrightarrow{\mathpalette\overleftrightarrow@}%
\def\overleftrightarrow@#1#2{\vbox{\ialign{##\crcr
   \leftrightarrowfill@#1\crcr
 \noalign{\kern-\ex@\nointerlineskip}$\m@th\hfil#1#2\hfil$\crcr}}}%
\def\underrightarrow{\mathpalette\underrightarrow@}%
\def\underrightarrow@#1#2{\vtop{\ialign{##\crcr$\m@th\hfil#1#2\hfil
  $\crcr\noalign{\nointerlineskip}\rightarrowfill@#1\crcr}}}%
\def\underleftarrow{\mathpalette\underleftarrow@}%
\def\underleftarrow@#1#2{\vtop{\ialign{##\crcr$\m@th\hfil#1#2\hfil
  $\crcr\noalign{\nointerlineskip}\leftarrowfill@#1\crcr}}}%
\def\underleftrightarrow{\mathpalette\underleftrightarrow@}%
\def\underleftrightarrow@#1#2{\vtop{\ialign{##\crcr$\m@th
  \hfil#1#2\hfil$\crcr
 \noalign{\nointerlineskip}\leftrightarrowfill@#1\crcr}}}%
\def\qopnamewl@#1{\mathop{\operator@font#1}\nlimits@}
\let\nlimits@\displaylimits
\def\setboxz@h{\setbox\z@\hbox}
\def\varlim@#1#2{\mathop{\vtop{\ialign{##\crcr
 \hfil$#1\m@th\operator@font lim$\hfil\crcr
 \noalign{\nointerlineskip}#2#1\crcr
 \noalign{\nointerlineskip\kern-\ex@}\crcr}}}}
 \def\rightarrowfill@#1{\m@th\setboxz@h{$#1-$}\ht\z@\z@
  $#1\copy\z@\mkern-6mu\cleaders
  \hbox{$#1\mkern-2mu\box\z@\mkern-2mu$}\hfill
  \mkern-6mu\mathord\rightarrow$}
\def\leftarrowfill@#1{\m@th\setboxz@h{$#1-$}\ht\z@\z@
  $#1\mathord\leftarrow\mkern-6mu\cleaders
  \hbox{$#1\mkern-2mu\copy\z@\mkern-2mu$}\hfill
  \mkern-6mu\box\z@$}
\def\projlim{\qopnamewl@{proj\,lim}}
\def\injlim{\qopnamewl@{inj\,lim}}
\def\varinjlim{\mathpalette\varlim@\rightarrowfill@}
\def\varprojlim{\mathpalette\varlim@\leftarrowfill@}
\def\varliminf{\mathpalette\varliminf@{}}
\def\varliminf@#1{\mathop{\underline{\vrule\@depth.2\ex@\@width\z@
   \hbox{$#1\m@th\operator@font lim$}}}}
\def\varlimsup{\mathpalette\varlimsup@{}}
\def\varlimsup@#1{\mathop{\overline
  {\hbox{$#1\m@th\operator@font lim$}}}}
\def\align{\@verbatim \frenchspacing\@vobeyspaces \@alignverbatim
You are using the "align" environment in a style in which it is not defined.}
\let\csname endalign*\endcsname =\endtrivlist
\def\alignat{\@verbatim \frenchspacing\@vobeyspaces \@alignatverbatim
You are using the "alignat" environment in a style in which it is not defined.}
\let\csname endalignat*\endcsname =\endtrivlist
\def\xalignat{\@verbatim \frenchspacing\@vobeyspaces \@xalignatverbatim
You are using the "xalignat" environment in a style in which it is not defined.}
\let\csname endxalignat*\endcsname =\endtrivlist
\def\gather{\@verbatim \frenchspacing\@vobeyspaces \@gatherverbatim
You are using the "gather" environment in a style in which it is not defined.}
\let\csname endgather*\endcsname =\endtrivlist
\def\multiline{\@verbatim \frenchspacing\@vobeyspaces \@multilineverbatim
You are using the "multiline" environment in a style in which it is not defined.}
\let\csname endmultiline*\endcsname =\endtrivlist
\def\arrax{\@verbatim \frenchspacing\@vobeyspaces \@arraxverbatim
You are using a type of "array" construct that is only allowed in AmS-LaTeX.}
\def\tabulax{\@verbatim \frenchspacing\@vobeyspaces \@tabulaxverbatim
You are using a type of "tabular" construct that is only allowed in AmS-LaTeX.}
\let\csname endarrax*\endcsname =\endtrivlist
\let\csname endtabulax*\endcsname =\endtrivlist
 \def\endequation{%
     \ifmmode\ifinner 
      \iftag@
        \addtocounter{equation}{-1} 
        $\hfil
           \displaywidth\linewidth\@taggnum\egroup \endtrivlist
        \global\tag@false
        \global\@ignoretrue   
      \else
        $\hfil
           \displaywidth\linewidth\@eqnnum\egroup \endtrivlist
        \global\tag@false
        \global\@ignoretrue 
      \fi
     \else   
      \iftag@
        \addtocounter{equation}{-1} 
        \eqno \hbox{\@taggnum}
        \global\tag@false%
        $$\global\@ignoretrue
      \else
        \eqno \hbox{\@eqnnum}
        $$\global\@ignoretrue
      \fi
     \fi\fi
 } 
 \newif\iftag@ \tag@false
 \def\TCItag{\@ifnextchar*{\@TCItagstar}{\@TCItag}}
 \def\@TCItag#1{%
     \global\tag@true
     \global\def\@taggnum{(#1)}}
 \def\@TCItagstar*#1{%
     \global\tag@true
     \global\def\@taggnum{#1}}
     \def\tag{\@ifnextchar*{\@tagstar}{\@tag}}
     \def\@tag#1{%
         \global\tag@true
         \global\def\@taggnum{(#1)}}
     \def\@tagstar*#1{%
         \global\tag@true
         \global\def\@taggnum{#1}}
\newcommand{\rb}{{\rangle}}
\newcommand{\lb}{{\langle}}
\newcommand{\R}{{\mathbb{R}}}
\newcommand{\X}{{\cal X}}
\newcommand{\beq}{\begin{equation}}
\newcommand{\eeq}{\end{equation}} 
\newcommand{\bea}{\begin{eqnarray}}
\newcommand{\eea}{\end{eqnarray}}
\let\sup\relax \DeclareMathOperator*\sup{\vphantom{p}sup}
\newcommand{\N}{{\mathbb N}}
\newcommand{\Sb}{{\bf S}} 
\def\boldf#1{\hbox{\rlap{$#1$}\kern.4pt{$#1$}}}
\begin{document}

\newtheorem{fact}[theorem]{Fact}

\title{The Benefit of Multitask Representation Learning}

\author{\name Andreas Maurer \email am@andreas-maurer.eu\\
\addr 
Adalbertstrasse 55, D-80799 M\"{u}nchen, Germany \\
\name Massimiliano Pontil \email massimiliano.pontil@iit.it\\
		\addr 
		Istituto Italiano di Tecnologia, 16163, Genoa, Italy \\ 
		Department of Computer Science, University College London, WC1E 6BT, UK \\
        \name Bernardino Romera-Paredes \email  bernard@robots.ox.ac.uk\\
        \addr Department of Engineering Science, University of Oxford, OX1 3PJ, UK\\
       }

\editor{Charles Sutton}

\maketitle

\begin{abstract}
We discuss a general method to learn data representations from multiple tasks. We provide a justification for this method in both settings of multitask learning and learning-to-learn. The method is illustrated in detail in the special case of linear feature learning. Conditions on the theoretical advantage offered by multitask representation learning over independent task learning are established. In particular, focusing on the important example of half-space learning, we derive the regime in which multitask representation learning is beneficial over independent task learning, as a function of the sample size, the number of tasks and the intrinsic data dimensionality. Other potential applications of our results include multitask feature learning in reproducing kernel Hilbert spaces and multilayer, deep networks.
\end{abstract}

\vspace{.2truecm}

\begin{keywords}
learning-to-learn, 
multitask learning, representation learning, statistical learning theory, transfer learning
\end{keywords}

\section{Introduction}
Multitask learning (MTL) can be characterized as the problem of learning multiple tasks {\em jointly}, as opposed to learning each task in isolation. This problem is becoming increasingly important due to its relevance in many applications, ranging from modelling users' preferences for products, to multiple object classification in computer vision, to patient healthcare data analysis in health informatics, to mention but a few.  Multitask learning algorithms which exploit structure and similarities across different learning problems have been studied by the machine learning community since the mid 90's, initially in connection to neural network models \citep[see][and reference therein]{Baxter 2000,Caruana 1998,Thrun 1998}. More recent approaches have been based on kernel methods \citep{Evgeniou 2005}, structured sparsity and convex optimization \citep{AEP}, among others.

Closely related to multitask learning but more challenging is the problem of {\em learning-to-learn} (LTL), namely learning to perform a new task by exploiting knowledge acquired when solving previous tasks.  Arguably, a solution to this problem would have major impact in Artificial Intelligence as we could build machines which learn from experience to perform new tasks, similar to what we observe in human behavior. 

An influential line of research on multitask and transfer learning is based on the idea that the tasks are related by means of a common low dimensional representation, which is learned jointly with the tasks' parameters. This approach was first advocated in \citep{Baxter 2000,Caruana 1998,Thrun 1998} and more recently reconsidered in \citep{AEP} from the perspective of convex optimization and sparsity regularization. Representation learning is also a key problem in AI, and in the past years there has been much renewed interest in learning nonlinear hierarchical representations from multiple tasks using multilayer, deep networks. Researchers have shown improved results in a number of empirical domains; the case of computer vision is perhaps most remarkable, \citep[see e.g.][and references therein]{Darrell}. This success has increased interest in multitask representation learning (MTRL) as it is a core component of deep networks. Still, the understanding of why this methodology works remains largely unexplored. 

In this paper we analyze a general method for MTRL and discuss its potential advantage in both the MTL setting, where the learned representation is applied to the same tasks used during training, and in the domain of LTL, where the representation is applied to new tasks. We derive upper bounds on the error of these methods and quantify their advantage over independent task learning. When the original data representation is high dimensional and the number of examples provided to solve a regression or classification problem is limited, any learning algorithm which does not use any sort of prior knowledge will perform poorly because there is not enough data to reliably estimate the model parameters. We make this statement precise by considering the example of half space learning.

\subsection{Previous Work}

Many papers have proposed multitask learning methods and studied their applications to specific problems \citep[see][and references therein]{Zhang 2005,AEP,Baxter 2000,Ben-David 2003,Caruana 1998,Cesa,KO13,MPR,PL14,Wilmer}. There is a vast literature on these subjects and the list of papers provided here is necessarily incomplete.

Despite the considerable success of multitask learning and in particular multitask representation learning there are only few theoretical investigations 
\citep{Zhang 2005,Baxter 2000,Ben-David 2003}. Other statistical learning bounds are restricted to linear multitask learning such as \citep{Cesa,Lounici 2011,Maurer 2005,Maurer 2005}. 

Learning-to-learn (also called inductive bias learning or transfer learning) has been proposed by \cite{Thrun 1998} and theoretically studied by \cite{Baxter 2000} where an error analysis is provided, showing that a common representation
which performs well on the training tasks will also generalize to new tasks obtained from the same ``environment''. More recent papers which present dimension independent bounds appear in \cite{Maurer 2005,Maurer 2006,MP13,MPR,PL14}. 

\subsection{Our Contributions}

There are two main contributions of this work. 
First we present bounds to both the MTL and LTL settings, which apply to a very general MTRL method. Our analysis goes well beyond linear representation learning considered in most previous works. It improves over the analysis by \cite{Baxter 2000} based on covering numbers. We use more recent  techniques of empirical
process theory to achieve bounds which are independent of the input dimension (hence also valid in reproducing kernel Hilbert spaces) and to avoid logarithmic factors. Furthermore our analysis can be made fully data dependent. When specialized to subspace learning (i.e. linear feature learning) we get best bounds valid for infinite dimensional input spaces. 

As the second main contribution of this paper, we explain the advantage of MTRL in terms of specificity of
feature maps and expose conditions when MTRL is beneficial or when it is not worth the effort. We further specialize our upper bounds to half-space learning (noiseless binary classification) and compare them to a general lower bound for learning isolated tasks. We observe that if the number of tasks grows then the performance of the method (both in the MTL and LTL setting) matches the performance of square norm regularization with best a priori known representation. This analysis highlights the advantage of multitask learning over learning the tasks independently. We also present numerical experiments for half-space learning, which indicate the good agreement between theory and experiments.

\subsection{Organization}

The paper is organized as follows. In Section \ref{sec:2}, we introduce the problem and present our main results.
In Section \ref{sec:SL}, we specialize these results to subspace learning and illustrate the role played by the data covariance matrices in our bounds. In Section \ref{subsec:HL} we further illustrate our results in the case of half-space learning, rigorously comparing our upper bounds to a general lower bound for orthogonal equivariant algorithms. In Section \ref{sec:proofs}, we present the proof of our main results, developing in particular uniform bounds on the estimation error. Finally, in Section \ref{sec:discussion} we summarize our findings and suggest directions for future research.  

\section{Multitask Representation Learning}
\label{sec:2}
The set of possible observations is denoted by $\mathcal{Z=}\left( \mathcal{X
},\mathbb{R}\right) $, where the members of $\mathcal{X}$ are interpreted as inputs and
the members of $\mathbb{R}$ are interpreted as outputs, or labels. A learning task is modelled by a
probability measure $\mu $ on $\mathcal{Z}$ where $\mu \left( x,y\right) $
is the probability to encounter the input-output pair $\left( x,y\right) \in 
\mathcal{Z}$ in the context of task $\mu $. We want to learn how to predict
outputs. If we predict $y$ while the true output is $y^{\prime }$, we suffer
a loss $\ell \left( y,y^{\prime }\right) $, where the loss function $\ell :\mathbb{R}\times 
\mathbb{R}\rightarrow \left[ 0,1\right] $ is assumed to be $1$-Lipschitz in the first
argument for every value of the second argument. Different Lipschitz
constants can be absorbed in the scaling of the predictors and different
ranges than $\left[ 0,1\right] $ can be handled by a simple scaling of our
results.

If $g$ is a real function defined on $\mathcal{X}$, then the values $g\left(
x\right) $ can be interpreted as predictors and the expectation $\mathbb{E}
_{\left( X,Y\right) \sim \mu }\left[ \ell \left( g\left( X\right) ,Y\right) 
\right] $ is the risk associated with hypothesis $g$ on the task $\mu $.

Multitask learning simultaneously considers many tasks $\mu _{1},\dots,\mu_{T}$ 
and hopes to exploit some suspected common property of these tasks.
For the purpose of this paper this property is the existence of a
representation or common feature-map, which simultaneously simplifies the
learning problem for most, or all of the tasks at hand. We consider
predictors $g$ which factorize 
\begin{equation*}
g=f\circ h,
\end{equation*}
where \textquotedblleft $\circ $\textquotedblright\ stands for functional
composition, that is, $(f\circ h)( x) =f\left( h\left( x\right) \right)$, for every $x \in \X$. The function $h:\mathcal{X\rightarrow \mathbb{R}}^{K}$ is called the
representation, or feature-map, and it is used across different tasks, while $
f$ is a function defined on $\mathcal{\mathbb{R}}^{K}$, a predictor
specialized to the task at hand. In the sequel $K$ will always be the
dimension of the representation space.

As usual in learning theory the functions $h:\mathcal{X}\rightarrow \mathbb{R}^{K}$ and $f:\mathbb{R}^K \rightarrow \mathbb{R}$ are chosen from respective hypothesis classes $\mathcal{H}$ and $\tciFourier $, which we refer to as the class of representations and the class of specialized predictors, respectively. These classes can be quite general, but we require that the functions in $\mathcal{\tciFourier }$ have Lipschitz constant at most $L$, for some positive real
number $L$.

The choice of representation and specialized predictors is based on the data
observed for all the tasks. This data takes the form of a multi-sample 
$\mathbf{\bar{Z}=}\left( \mathbf{Z}_{1},\dots,\mathbf{Z}_{T}\right) $, with 
$\mathbf{Z}_{t}=\left( Z_{t1},\dots,Z_{tn}\right) \sim \mu _{t}^{n}$. Here and
in the sequel an exponent on a measure indicates a product measure, so that 
$\mu _{t}^{n}$ is a measure on $\mathcal{Z}^{n}$ and $\mathbf{Z}_{t}$ is an
iid sample of $n$ random variables distributed as $\mu _{t}$. We also write 
$Z_{ti}=\left( X_{ti},Y_{ti}\right) $, $\mathbf{Z}_{t}=\left( \mathbf{X}_{t},
\mathbf{Y}_{t}\right) $ and $\mathbf{\bar{Z}=}\left( \mathbf{\bar{X},\bar{Y}}\right) $.

Multitask representation learning (MTRL) solves the optimization problem 
\begin{equation}
\min\left\{\frac{1}{nT}
\sum_{t=1}^{T}\sum_{i=1}^{n}\ell \left( f_{t}\left( h\left( X_{ti}\right)
\right) ,Y_{ti}\right): {h\in \mathcal{H},~(f_{1},\dots,f_{T})\in \tciFourier ^{T}}\right\} .  \label{the optimization problem}
\end{equation}
In this paper, we are not concerned with the algorithmics of this problem, but rather with the statistical properties of its solutions $\hat{h}
$ and $\hat{f}_{1},\dots,\hat{f}_{T}$. Note that these are functional random
variables in their dependence on $\mathbf{\bar{Z}}$.

We consider two possible applications of these solutions. One application, which we will refer to as multitask learning (MTL), retains both the representation $\hat{h}$ and the specializations $\hat{f}
_{1},\dots,\hat{f}_{T}$ to be applied to the tasks at hand. The other, perhaps
more important, application assumes that the tasks $\mu _{t}$ are related by
a probabilistic law, called an {\em environment}, and keeps only the
representation $\hat{h}$ to be used when specializing to new tasks obeying
the same law. In this way the parametrization of a learning algorithm is
learned, hence the name ``learning-to-learn" (LTL).
\bigskip

We will give general statistical guarantees in both cases. Our bounds
consist of three terms. The first term can be interpreted as the cost of
estimating the representation $h$ and decreases with the number $T$ of tasks
available for training. The second term corresponds to the cost of
estimating task-specific predictors and decreases with the number $n$ of
training examples available for each task. The last term contains the
confidence parameter and typically makes only a very small contribution.

It is not surprising that the complexity of the representation class 
$\mathcal{H}$ (first term in the bounds) plays a central role. We measure this complexity on the
observed input data $\mathbf{\bar{X}}\in \mathcal{X}^{Tn}$. Define a random
set $\mathcal{H}\left( \mathbf{\bar{X}}\right) \subseteq \mathbb{R}^{KTn}$ by
\begin{equation*}
\mathcal{H}\left( \mathbf{\bar{X}}\right) =\left\{ \left( h_{k}\left(
X_{ti}\right) \right) :h\in \mathcal{H}\right\} \text{.}
\end{equation*}
The complexity measure relevant to estimation of the representation is the
Gaussian average 
\begin{equation}
G\left( \mathcal{H}\left( \mathbf{\bar{X}}\right) \right) =\mathbb{E}\left[
\sup_{h\in \mathcal{H}}\sum_{kti}\gamma _{kti}h_{k}\left( X_{ti}\right)
|X_{ti}\right] ,  \label{Gaussian average of H(x)}
\end{equation}
where the $\gamma _{kti}$ are independent standard normal variables. The
Gaussian average is of order $\sqrt{nT}$ in $T$ and $n$ for many classes of interest. These include
kernel machines with Lipschitz kernels (e.g. Gaussian RBF) and arbitrarily
deep compositions thereof, see \cite{Maurer 2014} for a discussion. 
As we shall see, this increase of $O(\sqrt{nT})$ is compensated in our bounds and the cost of learning the representation vanishes in the multi-task limit $T\rightarrow \infty$.

The second term in the bounds is governed by the quantity
\begin{equation}
\sup_{h\in \mathcal{H}}\frac{1}{n\sqrt{T}}\left\Vert h\left( \mathbf{\bar{X}}
\right) \right\Vert =\frac{1}{\sqrt{n}}\sup_{h\in \mathcal{H}}\sqrt{\frac{1}{
nT}\sum_{kti}h_{k}\left( X_{ti}\right) ^{2}}
\label{specific representations}
\end{equation}
or an equivalent distribution-dependent expression. If the feature-maps in $
\mathcal{H}$ are very specific, in the sense that their components are
appreciably different from zero only for very special data, the quantity in 
\eqref{specific representations} can become much smaller than $1/\sqrt{n}$, a
phenomenon which can give a considerable competitive edge to MTRL, in particular if the per-task sample size $n$ is small. We will
demonstrate this in Section \ref{sec:SL}, where we apply Theorems \ref{thm:MTL} and \ref{thm:LTL} to
subspace-learning and show that the above quantity is related to the
operator norm of the data covariance.

\subsection{Bounding the Excess Task-averaged Risk (MTL)}

If we make no further assumptions on the generation of the task-measures $\mu
_{1},\dots,\mu _{T}$, a conceptually simple performance measure for a
representation $h$ and specialized predictors $f_{1},\dots,f_{T}$ is the
task-averaged risk 
\begin{equation*}
\mathcal{E}_{\mathrm{avg}}\left( h,f_{1},\dots,f_{T}\right) =\frac{1}{T}
\sum_{t=1}^{T}\mathbb{E}_{\left( X,Y\right) \sim \mu _{t}}\ell \left(
f_{t}\left( h\left( X\right) \right) ,Y\right) .
\end{equation*}
We want to compare this to the very best we can do using the classes 
$\mathcal{H}$ and $\tciFourier $, given complete knowledge of the
distributions $\mu _{1},\dots,\mu _{T}$. The minimal risk is clearly 
\begin{equation*}
\mathcal{E}_{\mathrm{avg}}^{\ast }=\min_{h\in \mathcal{H},\left(
f_{1},\dots,f_{T}\right) \in \mathbf{\tciFourier }^{T}}\mathcal{E}_{\mathrm{avg}}
\left( h,f_{1},\dots,f_{T}\right) \text{.}
\end{equation*}
It is a fundamental hope underlying our approach that the classes $\mathcal{H}$ 
and $\tciFourier $ are large enough for this quantity to be sufficiently
small for practical purposes. We use the words \textquotedblleft hope" and
\textquotedblleft belief" because an \textquotedblleft assumption" would
imply a statement to be used in analytical reasoning. Instead our approach
is agnostic, and our results are valid independent of the size of the
minimal risk above.

Our first result bounds the excess average risk, which measures the
difference between the task-averaged true risk of the solutions to (\ref{the
optimization problem}) and the theoretical optimum above. 

\begin{theorem}
Let $\mu _{1},\dots,\mu _{T}$, $\mathcal{H}$ and $\tciFourier $ be as above,
and assume $0\in \mathcal{H}$ and $f\left( 0\right) =0$ for all $f\in
\tciFourier $. Then for $\delta >0$ with probability at least $1-\delta $ in
the draw of $\mathbf{\bar{Z}\sim }\prod_{t=1}^{T}\mu _{t}^{n}$ we have that 
\begin{multline*}
\mathcal{E}_{\mathrm{avg}}(\hat{h},\hat{f}_{1},\dots,\hat{f}_{T})-\mathcal{E}_{\mathrm{avg}}^{\ast } \\
\leq \frac{c_{1}L~G\left( \mathcal{H}\left( \mathbf{\bar{X}}\right) \right) 
}{nT}+\frac{c_{2}Q\sup_{h\in \mathcal{H}}\left\Vert h\left( \mathbf{\bar{X}}
\right) \right\Vert }{n\sqrt{T}}+\sqrt{\frac{8\ln \left( 4/\delta \right) }{nT}},
\end{multline*}
where $c_{1}$ and $c_{2}$ are universal constants, $G(\mathcal{H}(\mathbf{\bar{X}}))$ 
is the Gaussian average in Equation 
\eqref{Gaussian average of
H(x)}, and $Q$ is the quantity 
\begin{equation}
Q\equiv Q({\cal F})\sup_{y\neq y^{\prime }\in \mathbb{R}^{Kn}}\frac{1}{\left\Vert y-y^{\prime
}\right\Vert }\mathbb{E}\sup_{f\in \tciFourier }\sum_{i=1}^{n}\gamma
_{i}\left( f\left( y_{i}\right) -f\left( y_{i}^{\prime }\right) \right) .
\label{eq:Q}
\end{equation}
\label{thm:MTL}
\end{theorem}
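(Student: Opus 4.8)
The plan is to reduce the excess task-averaged risk to a one-sided uniform deviation of the empirical objective from the true risk, and then to control that deviation by a Gaussian average that \emph{decouples} into a representation-learning term and a specialization term. Write $\widehat{\mathcal{E}}_{\mathrm{avg}}(h,f_1,\dots,f_T)=\frac{1}{nT}\sum_{t,i}\ell(f_t(h(X_{ti})),Y_{ti})$ for the empirical objective in \eqref{the optimization problem}, and let $(h^\ast,f_1^\ast,\dots,f_T^\ast)$ attain $\mathcal{E}_{\mathrm{avg}}^\ast$. Since $(\hat h,\hat f_1,\dots,\hat f_T)$ minimizes $\widehat{\mathcal{E}}_{\mathrm{avg}}$, the standard empirical-risk-minimization decomposition gives
\[
\mathcal{E}_{\mathrm{avg}}(\hat h,\hat f_1,\dots,\hat f_T)-\mathcal{E}_{\mathrm{avg}}^\ast \le \sup_{h,f_1,\dots,f_T}\big[\mathcal{E}_{\mathrm{avg}}(h,f)-\widehat{\mathcal{E}}_{\mathrm{avg}}(h,f)\big] + \big[\widehat{\mathcal{E}}_{\mathrm{avg}}(h^\ast,f^\ast)-\mathcal{E}_{\mathrm{avg}}(h^\ast,f^\ast)\big].
\]
Because the $nT$ samples are mutually independent (across tasks by the product measure, within a task i.i.d.) and $\mathcal{E}_{\mathrm{avg}}$ is exactly the expectation of $\widehat{\mathcal{E}}_{\mathrm{avg}}$, both bracketed quantities are deviations of an average of independent variables from its mean.

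Since $\ell\in[0,1]$, replacing a single $Z_{ti}$ changes either bracket by at most $1/(nT)$, so McDiarmid's bounded-difference inequality, applied to each bracket and combined by a union bound, produces the confidence term $\sqrt{8\ln(4/\delta)/(nT)}$ (the precise constants following from the $1/(nT)$ bounded-difference property and the two-event union bound) and reduces the problem to bounding the expectation of the supremum. I would bound that expectation by the usual symmetrization step, introducing Rademacher variables $\sigma_{ti}$ and then passing to Gaussian variables $\gamma_{ti}$ at the cost of a universal constant, arriving at a bound of the form
\[
\frac{c}{nT}\,\mathbb{E}\sup_{h\in\mathcal{H},\,(f_t)\in\tciFourier^T}\sum_{t=1}^T\sum_{i=1}^n\gamma_{ti}\,\ell\!\left(f_t(h(X_{ti})),Y_{ti}\right).
\]
Because $\ell$ is $1$-Lipschitz in its first argument, the contraction principle removes the loss and the labels $Y_{ti}$, leaving the Gaussian average of the composed class $\{(t,i)\mapsto f_t(h(X_{ti}))\}$.

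The heart of the argument is to bound $\mathbb{E}\sup_{h,(f_t)}\sum_{t,i}\gamma_{ti}f_t(h(X_{ti}))$, where the inner representation $h$ is \emph{shared} across all tasks while each outer map $f_t$ is task-specific and $L$-Lipschitz. Here I would invoke the chain rule for expected suprema of Gaussian processes of \cite{Maurer 2014}, with base set $\mathcal{H}(\mathbf{\bar{X}})\subseteq\mathbb{R}^{KTn}$ and outer map acting blockwise by task (so its Lipschitz constant is $L$). This splits the Gaussian average into two pieces. The first is the complexity inherited from the representation, bounded by $c_1 L\,G(\mathcal{H}(\mathbf{\bar{X}}))$; dividing by $nT$ gives the first term of the theorem. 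The second is the complexity of the specialized predictors evaluated at a fixed representation. Using the anchor assumptions $0\in\mathcal{H}$ and $f(0)=0$ to take $h=0$ as reference, the per-task complexity is $\mathbb{E}\sup_{f\in\tciFourier}\sum_i\gamma_{ti}(f(y_i)-f(0))\le Q\,\|y\|$ with $y_i=h(X_{ti})$, by the very definition \eqref{eq:Q} of $Q$; summing over $t$ and applying Cauchy--Schwarz yields $Q\sqrt{T}\,\sup_h\|h(\mathbf{\bar{X}})\|$, which after division by $nT$ is the second term $c_2 Q\sup_h\|h(\mathbf{\bar{X}})\|/(n\sqrt{T})$.

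The main obstacle is precisely this two-layer decoupling. A naive layerwise application of the contraction lemma would either couple the task-specific predictors incorrectly or fail to isolate the $1/\sqrt{T}$ gain that makes the representation cost vanish as $T\to\infty$; the chain rule of \cite{Maurer 2014} is what correctly attributes complexity to the shared representation (first term, of order $G/(nT)$) versus the independent specializations (second term, of order $1/(n\sqrt{T})$ scaled by the data-dependent norm $\sup_h\|h(\mathbf{\bar{X}})\|$). The remaining work is bookkeeping: verifying the bounded-difference constant, tracking the universal constants through symmetrization, Gaussian comparison and contraction, and checking that the anchor assumptions supply the reference point the chain rule requires.
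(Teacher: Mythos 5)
Your proposal is correct and follows essentially the same route as the paper: the ERM decomposition with Hoeffding/McDiarmid for the confidence term, contraction to strip the Lipschitz loss, and---crucially---the chain rule of Maurer (2014) applied to the base set $\mathcal{H}(\mathbf{\bar{X}})$ with the blockwise outer class $\tciFourier^{T}$, where Cauchy--Schwarz over tasks gives $Q(\tciFourier^{T})\leq\sqrt{T}\,Q$ and the anchor assumptions $0\in\mathcal{H}$, $f(0)=0$ annihilate the residual term $G(\tciFourier^{T}(y_{0}))$. The only cosmetic difference is that the paper packages the symmetrization step into a cited multitask generalization theorem stated directly in terms of Gaussian averages, while you unpack it via Rademacher variables and Gaussian comparison; the substance is identical.
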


\noindent Remarks:

\begin{enumerate}
\item The assumptions $0\in \mathcal{H}$ and $f\left( 0\right) =0$ for all 
$f\in \tciFourier $ are made to give the result a simpler appearance. They
are not essential, as the reader can verify from the proof.

\item If $G\left( \mathcal{H}\left( \mathbf{\bar{x}}\right) \right) $ is of
order $\sqrt{nT}$ then the first term on the right hand side above is of order $1/\sqrt{Tn}$ 
and vanishes in the \textit{multi-task limit} $T\rightarrow \infty $
even for small values of $n$.

\item For reasonable classes $\tciFourier $ one can find a bound on $Q$,
which is independent of $n$, because the $\left\Vert y-y^{\prime}\right\Vert$ 
in the denominator balances the Gaussian average depending on the class $\tciFourier $. 

\item The quantity $\sup_{h}\left\Vert h\left( \mathbf{\bar{X}}\right)
\right\Vert $ is of order $\sqrt{nT}$ whenever $\mathcal{H}$ is uniformly
bounded, a crude bound being $\sqrt{nT}\sup_{h\in \mathcal{H}
}\max_{ti}\left\Vert h\left( x_{ti}\right) \right\Vert $. The second term is
thus typically of order $1/\sqrt{n}$. As explained in the discussion of Equation \eqref{specific representations} above it can be very small if the representation components in $\mathcal{H}$ are very data-specific.
\end{enumerate}

\subsection{Bounding the Excess Risk for Learning-to-learn (LTL)}

Now we consider the case where we only retain the representation $\hat{h}$
obtained from (\ref{the optimization problem}) and specialize it to future,
hitherto unknown tasks. This is of course only possible, if there is some
common law underlying the generation of tasks. Following \cite{Baxter 2000}
we suppose that the tasks originate in a common environment $\eta $, which
is by definition a probability measure on the set of probability measures on 
$\mathcal{Z}$. The draw of $\mu \sim \eta $ models the encounter of a
learning task $\mu $ in the environment $\eta $.

The environment $\eta $ induces a measure $\mu _{\eta }$ on $\mathcal{Z}$ by 
\begin{equation*}
\mu _{\eta }\left( A\right) =\mathbb{E}_{\mu \sim \eta }\left[ \mu \left(
A\right) \right]~\text{for}~A\subseteq \mathcal{Z}.
\end{equation*}%
This simple mixture plays an important role in the interpretation of our
results.

The measure $\eta$ also induces a measure $\rho_{\eta}$ on $\mathcal{Z}^{n}$ which corresponds to the draw of an $n$-sample from a random task in
the environment. To draw a sample $\mathbf{Z\in }\mathcal{Z}^{n}$ from $\rho
_{\eta }$ we first draw a task $\mu $ from $\eta $ and then generate the
sample $\mathbf{Z=}\left( Z_{1},\dots,Z_{T}\right) $ from $n$ independent
draws from $\mu $. Formally 
\begin{equation*}
\rho _{\eta }\left( A\right) =\mathbb{E}_{\mu \sim \eta }\left[ \mu
^{n}\left( A\right) \right] \text{ for }A\subseteq \mathcal{Z}^{n}\text{.}
\end{equation*}%
We assume that the tasks $\mu _{1},\dots ,\mu _{T}$ are drawn independently
from $\eta $ and, consequently, that the multisample $\mathbf{\bar{Z}}%
=\left( \mathbf{Z}_{1}\mathbf{,\dots,T}_{T}\right) $ is obtained in $T$
independent draws from $\rho _{\eta }$, that is, $\mathbf{\bar{Z}\sim }\rho
_{\eta }^{T}$.

The way we plan to use a representation $h\in \mathcal{H}$ on a new task $%
\mu \sim \eta $ is as follows: we draw a training sample $\mathbf{Z}=\left(
Z_{1},\dots,Z_{n}\right) $ from $\mu ^{n}$ and solve the optimization problem%
\begin{equation*}
\min_{f\in \tciFourier }\frac{1}{n}\sum_{i=1}^{n}\ell \left( f\left( h\left(
X_{i}\right) \right) ,Y_{i}\right) .
\end{equation*}%
Let $\hat{f}_{h,\mathbf{Z}}$ denote the minimizer and $m_{h,\mathbf{Z}}$ the
corresponding minimum. We will then use the hypothesis $a\left( h\right) _{%
\mathbf{Z}}=\hat{f}_{h,\mathbf{Z}}\circ h=\hat{f}_{h,\mathbf{Z}}(h(\cdot))$ for the new task. In this way any representation $%
h\in \mathcal{H}$ parametrizes a learning algorithm, which is a function $a(h):\mathcal{Z}^{n} \rightarrow\tciFourier \circ h$, defined, for every $\mathbf{Z\in }\mathcal{Z}^{n}$, as
\begin{equation*}
a\left( h\right) _{%
\mathbf{Z}}=\hat{f}_{h,\mathbf{Z}}\circ h.
\end{equation*}%
In this sense the problem of optimizing such a representation can properly
be called ``learning-to-learn". It can also be interpreted
as ``learning a hypothesis space" as in \citep{Baxter 2000}, namely
selecting a hypothesis space $\tciFourier \circ h$ from the collection of
hypothesis spaces $\left\{ \tciFourier \circ h:h\in \mathcal{H}\right\} $.

We can test the algorithm $a\left( h\right) $ on the environment $\eta $ in
the following way:

\begin{itemize}
\item we draw a task $\mu \sim \eta,$

\item we draw a sample $\mathbf{Z}\in \mathcal{Z}^{n}$ from $\mu ^{n},$

\item we run the algorithm to obtain $a\left( h\right) _{\mathbf{Z}}=\hat{f}%
_{h,\mathbf{Z}}\circ h$,

\item finally, we measure the loss of $a\left( h\right) _{\mathbf{Z}}$ on a random
data-point $Z=\left( X,Y\right) \sim \mu$.
\end{itemize}

To define the risk $\mathcal{E}_{\eta }\left( h\right) $ associated with the
algorithm $a\left( h\right) $ parametrized by $h$ we just replace all random
draws with corresponding expectations, so%
\begin{equation*}
\mathcal{E}_{\eta }\left( h\right) =\mathbb{E}_{\mu \sim \eta }\mathbb{E}_{%
\mathbf{Z\sim }\mu ^{n}}\mathbb{E}_{\left( X,Y\right) \sim \mu }\left[ \ell
\left( a\left( h\right) _{\mathbf{Z}}\left( X\right) ,Y\right) \right] .
\end{equation*}%
The best value for any representation $h$ in $a\left( h\right) ,$ given
complete knowledge of the environment, is then%
\begin{equation*}
\min_{h\in \mathcal{H}}\mathcal{E}_{\eta }\left( h\right) .
\end{equation*}%
But, given complete knowledge of the environment, this is still not the best
we can do using the classes $\tciFourier $ and $\mathcal{H}$, because for
given $\mu $ and $h$ we still use the expected performance $\mathbb{E}_{%
\mathbf{Z\sim }\mu ^{n}}\mathbb{E}_{Z\sim \mu }~\ell \left( a\left( h\right)
_{\mathbf{Z}}\left( X\right) ,Y\right) $ of the empirical risk minimization
algorithm $a\left( h\right) $, instead of using knowledge of $\mu $ to
replace it by $\min_{f\in \tciFourier }\mathbb{E}_{Z\sim \mu }\ell \left(
f\left( h\left( X\right) \right) ,Y\right) $. The very best we can do is
thus 
\begin{equation*}
\mathcal{E}_{\eta }^{\ast }=\min_{h\in \mathcal{H}}\mathbb{E}_{\mu \sim \eta
}\left[ \min_{f\in \tciFourier }\mathbb{E}_{Z\sim \mu }\ell \left( f\left(
h\left( X\right) \right) ,Y\right) \right] .
\end{equation*}%
The excess risk associated with any representation $h$ is thus 
\begin{equation*}
\mathcal{E}_{\eta }\left( h\right) -\mathcal{E}_{\eta }^{\ast }.
\end{equation*}%
We give the following bound for the excess risk associated with the
representation $\hat{h}$ found as solution to the optimization problem (\ref%
{the optimization problem}).

\begin{theorem}
Let $\eta $ be an environment on $\mathcal{Z}$ and $\mathcal{H}$ and $%
\tciFourier $ as above. Then: (i) with probability at least $1-\delta $ in the
draw of $\mathbf{\bar{Z}}\sim \rho _{\eta }^{T}$ 
\vspace{.2truecm}
$$
\mathcal{E}_{\eta }( \hat{h}) -\mathcal{E}_{\eta }^{\ast }\leq \frac{\sqrt{2\pi }L~G\left( \mathcal{H}\left( \mathbf{\bar{X}}\right)
\right) }{T\sqrt{n}}+\sqrt{2\pi }Q^{\prime }\sup_{h\in \mathcal{H}}\sqrt{%
\frac{\mathbb{E}_{\left( X,Y\right) \sim \mu _{\eta }}\left[ \left\Vert
h\left( X\right) \right\Vert ^{2}\right] }{n}}+\sqrt{\frac{8\ln \left(
4/\delta \right) }{T}},
$$
\vspace{.2truecm}
and (ii) with the same probability
\vspace{.2truecm}
$$
\mathcal{E}_{\eta }( \hat{h}) -\mathcal{E}_{\eta }^{\ast } \leq \frac{\sqrt{2\pi }L~G\left( \mathcal{H}\left( \mathbf{\bar{X}}\right)
\right) }{T\sqrt{n}}+\frac{\sqrt{2\pi }Q^{\prime }\left( 1/T\right)
\sum_{t}\sup_{h\in \mathcal{H}}\left\Vert h\left( \mathbf{X}_{t}\right)
\right\Vert }{n}+5\sqrt{\frac{\ln \left( 8/\delta \right) }{T}},
$$
\vspace{.2truecm}
where $\hat{h}$ is solution to the problem (\ref{the optimization problem}), 
$G\left( \mathcal{H}\left( \mathbf{\bar{X}}\right) \right) $ is the Gaussian
average introduced in (\ref{Gaussian average of H(x)}), and $Q^{\prime }$ is
the quantity%
\begin{equation}
Q^{\prime }\equiv Q^{\prime }({\cal F}) = \sup_{y\in \mathbb{R}^{Kn}\backslash \left\{ 0\right\} }\frac{1}{%
\left\Vert y\right\Vert }\mathbb{E}\sup_{f\in \tciFourier
}\sum_{i=1}^{n}\gamma _{i}f\left( y_{i}\right) .
\label{eq:Qp}
\end{equation}
\label{thm:LTL}
\end{theorem}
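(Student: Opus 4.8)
The plan is to compare the true transfer risk $\mathcal{E}_{\eta}(\hat h)$ against the empirical quantity actually minimized by the algorithm. Write $\hat R(h)=\frac1T\sum_{t=1}^{T}m_{h,\mathbf Z_t}$ for the average minimal empirical risk and $\bar R(h)=\mathbb{E}_{\mathbf Z\sim\rho_{\eta}}[m_{h,\mathbf Z}]$ for its expectation over the environment. Since the $f_t$ in (\ref{the optimization problem}) decouple, $\hat h$ is exactly a minimizer of $\hat R$ over $\mathcal{H}$. Letting $h^{\ast}$ attain $\mathcal{E}_{\eta}^{\ast}$, I would split
\begin{equation*}
\mathcal{E}_{\eta}(\hat h)-\mathcal{E}_{\eta}^{\ast}\le \underbrace{\sup_{h\in\mathcal{H}}\big(\mathcal{E}_{\eta}(h)-\bar R(h)\big)}_{(\mathrm I)}+\underbrace{\sup_{h\in\mathcal{H}}\big(\bar R(h)-\hat R(h)\big)}_{(\mathrm{II})}+\underbrace{\big(\hat R(h^{\ast})-\bar R(h^{\ast})\big)}_{(\mathrm{III})},
\end{equation*}
using $\hat R(\hat h)\le\hat R(h^{\ast})$ together with the Jensen-type inequality $\bar R(h^{\ast})=\mathbb{E}_{\mu}\mathbb{E}_{\mathbf Z}\min_{f}(\cdots)\le\mathbb{E}_{\mu}\min_{f}\mathbb{E}_{\mathbf Z}(\cdots)=\mathcal{E}_{\eta}^{\ast}$ to absorb the optimum. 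This isolates the two genuinely different averaging scales.

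Term $(\mathrm I)$ is deterministic (both $\mathcal{E}_{\eta}$ and $\bar R$ are full expectations over $\eta$ and $\rho_{\eta}$) and measures the within-task generalization of single-task empirical risk minimization. For each fixed task and sample I would bound $\mathbb{E}_{(X,Y)}\ell(\hat f_{h,\mathbf Z}(h(X)),Y)-m_{h,\mathbf Z}$ by $\sup_{f\in\tciFourier}$ of the true-minus-empirical loss, symmetrize (factor $2$), apply the $1$-Lipschitz contraction to strip $\ell$, and pass to Gaussian averages (the $\sqrt{\pi/2}$ Rademacher-to-Gaussian factor combining with the $2$ to give $\sqrt{2\pi}$). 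The definition of $Q'$ in (\ref{eq:Qp}) yields $\mathbb{E}\sup_{f}\sum_{i}\gamma_i f(h(X_i))\le Q'\|h(\mathbf X)\|$. Averaging over $\mathbf Z\sim\mu^{n}$ and $\mu\sim\eta$, using the mixture identity $\mathbb{E}_{\mu\sim\eta}\mathbb{E}_{X\sim\mu}=\mathbb{E}_{X\sim\mu_{\eta}}$ and Jensen to obtain $\mathbb{E}\|h(\mathbf X)\|\le\sqrt{n\,\mathbb{E}_{\mu_{\eta}}\|h(X)\|^{2}}$, and taking $\sup_{h}$ produces the second term $\sqrt{2\pi}\,Q'\sup_{h}\sqrt{\mathbb{E}_{(X,Y)\sim\mu_{\eta}}\|h(X)\|^{2}/n}$.

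Term $(\mathrm{II})$ is where the number of tasks enters and is the crux. The $T$ super-samples $\mathbf Z_1,\dots,\mathbf Z_T$ are i.i.d.\ from $\rho_{\eta}$ and $m_{h,\mathbf Z}\in[0,1]$, so I would control $\sup_{h}(\bar R-\hat R)$ by bounded differences (changing one task moves $\hat R$ by $\le 1/T$) followed by symmetrization at the task level, turning the deviation into $\tfrac1T$ times the Gaussian average of $\{\mathbf Z\mapsto m_{h,\mathbf Z}:h\in\mathcal{H}\}$. The main obstacle is relating this back to $G(\mathcal{H}(\mathbf{\bar X}))$: one must show the functional $u\mapsto\min_{f\in\tciFourier}\frac1n\sum_i\ell(f(u_i),y_i)$ is Lipschitz in $u=h(\mathbf X_t)\in\mathbb{R}^{Kn}$, and the key observation is that, since each $f$ is $L$-Lipschitz and $\ell$ is $1$-Lipschitz, Cauchy--Schwarz over the $n$ coordinates gives Lipschitz constant $L/\sqrt n$, and a pointwise minimum of $L/\sqrt n$-Lipschitz functions remains $L/\sqrt n$-Lipschitz. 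A vector-valued contraction inequality for Gaussian averages then reduces the complexity of $\{m_{h,\cdot}\}$ to $\tfrac{L}{\sqrt n}G(\mathcal{H}(\mathbf{\bar X}))$, and the $\tfrac1T$ from task averaging yields the first term $\tfrac{\sqrt{2\pi}\,L\,G(\mathcal{H}(\mathbf{\bar X}))}{T\sqrt n}$. I expect the careful bookkeeping of the contraction constant and its interaction with the Gaussian-versus-Rademacher factor to be the delicate point, which is presumably what the uniform estimation-error lemmas of Section \ref{sec:proofs} are designed to supply. Term $(\mathrm{III})$ involves only the fixed $h^{\ast}$, so a one-sided Hoeffding bound for the i.i.d.\ average $\hat R(h^{\ast})$ gives an $O(\sqrt{\ln(1/\delta)/T})$ fluctuation; a union bound over the confidence events of $(\mathrm{II})$ and $(\mathrm{III})$ produces $\sqrt{8\ln(4/\delta)/T}$ and establishes part (i).

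For part (ii) I would rerun the argument but keep $(\mathrm I)$ data-dependent: instead of bounding $\sup_{h}\mathbb{E}\|h(\mathbf X)\|$ by the distributional quantity I would pass to $\mathbb{E}\sup_{h}\|h(\mathbf X)\|$ and then concentrate $\frac1T\sum_t\sup_{h}\|h(\mathbf X_t)\|$ about its mean, again by bounded differences, replacing the second term by its empirical analogue $\frac{\sqrt{2\pi}\,Q'(1/T)\sum_{t}\sup_{h}\|h(\mathbf X_t)\|}{n}$ at the cost of one extra confidence event, which accounts for the slightly larger $5\sqrt{\ln(8/\delta)/T}$. The assumptions $0\in\mathcal{H}$ and $f(0)=0$ serve only to keep the Gaussian averages homogeneous and are otherwise inessential, exactly as in the remarks after Theorem \ref{thm:MTL}.
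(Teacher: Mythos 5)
Your proof of part (i) is correct and follows essentially the same route as the paper: the same three-way split into a within-task generalization term, a task-level deviation term, and a Hoeffding term for $h^{\ast}$ (with the empirical-minimizer and Jensen arguments killing the two cross terms), the same $Q^{\prime}$-plus-Jensen treatment of the within-task term, and the same key observation that $h(\mathbf{\bar X})\mapsto\bigl(m(h)_{\mathbf{Z}_1},\dots,m(h)_{\mathbf{Z}_T}\bigr)$ is $L/\sqrt n$-Lipschitz, so that a Slepian-type vector-valued contraction (the paper's Corollary \ref{Contraction Lemma}) reduces the task-level Gaussian average to $\tfrac{L}{\sqrt n}G(\mathcal{H}(\mathbf{\bar X}))$. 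That contraction step is indeed the crux, and it is exactly why the paper works with Gaussian rather than Rademacher averages.

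There is one concrete problem in your part (ii). You propose to concentrate $\tfrac1T\sum_t\sup_{h\in\mathcal{H}}\|h(\mathbf{X}_t)\|$ about its mean by bounded differences, but this random variable has no a priori uniform bound: nothing in the hypotheses forces $\sup_{h}\|h(\mathbf{X}_t)\|$ to be bounded, so Hoeffding/McDiarmid does not apply to it directly. The paper sidesteps this by concentrating a different quantity, namely $\sup_{h}G\bigl(\ell(\tciFourier(h(\mathbf{X})),\mathbf{Y})\bigr)/n$, which it shows lies in $[0,1]$ using Cauchy--Schwarz and the fact that $\ell$ takes values in $[0,1]$; only \emph{after} the concentration step does it bound each per-task Gaussian average by $Q^{\prime}\|h(\mathbf{X}_t)\|$. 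Your version would need either this reordering or an explicit boundedness assumption on $\mathcal{H}$ to go through.
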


We make some remarks and comparison to the previous result.
\begin{enumerate}
\item The constants are now explicit and small. For Theorem \ref{thm:MTL}, 
uniform estimation had to be controlled simultaneously in $\mathcal{H}$ and $%
\tciFourier $, while for LTL the problem can be more easily
decoupled.

\item The first term is equivalent to the first term in Theorem \ref{thm:MTL}
except for $\sqrt{n}$ replacing $n$ in the denominator. It is therefore
typically of order $1/\sqrt{T}$ instead of $1/\sqrt{nT}$. The different
order is due to the estimation of a hitherto unknown task, for which the
sample sizes are irrelevant. To understand this point assume the $\eta $ has
the property that every $\mu \sim \eta $ is deterministic, that is supported
on a single point $z_{\mu }\in \mathcal{Z}$. Then clearly the sample size $n$
is irrelevant, and the problem becomes equivalent to learning a single task
with a sample of size $T$.

\item The quantity $Q^{\prime }$ is very much like the quantity $Q$ in Equation \eqref{eq:Q}, and it is uniformly bounded in $n$ for the classes we consider. For linear
classes $Q=Q^{\prime }$.

\item The bound in part (i) is not fully data-dependent, but more convenient
for our applications below. The quantity 
\begin{equation*}
\sup_{h\in \mathcal{H}}\sqrt{\mathbb{E}_{\left( X,Y\right) \sim \mu _{\eta
}}\left\Vert h\left( X\right) \right\Vert ^{2}}=\sup_{h\in \mathcal{H}}\sqrt{%
\sum_{k}\mathbb{E}_{\left( X,Y\right) \sim \mu _{\eta }}\left[ h_{k}\left(
X\right) ^{2}\right] }
\end{equation*}%
plays a similar role to (\ref{specific representations}), which is its
empirical counterpart. Again, if the features $h_{k}$ are very specific, as the
dictionary atoms of the next section or the atoms in a radial basis function
network, then the above quantity can become very small.
\end{enumerate}

\subsection{Comparison to Previous Bounds}

The first and most important theoretical study of MTL and LTL was carried
out by \cite{Baxter 2000}, where sample complexity bounds are given for both settings. Instead of a feature map a hypothesis space is selected from a
class of hypothesis spaces. Clearly every feature map with values in $\mathbb{R}^{K}$ 
defines a hypothesis space while the reverse is not true in general,
so Baxter's setting is certainly more general than ours. On the other hand
the practical applications discussed in \citep{Baxter 2000} can be cast in the
language of feature learning.

To prove his sample complexity bounds Baxter uses covering numbers. This
classical method requires to cover a (meta-)hypothesis space (or its
evaluation on a sample) with a set of balls in an appropriately chosen
metric. The uniform bound is then obtained as a union bound over the cover
and bounds valid on the individual balls. The latter bounds follow from
Lipschitz properties $L$ of the loss function relative to the chosen metric.
For a bound of order $\epsilon $ the radius of the balls has to be of order 
$\epsilon /L$. This leads to covering numbers of order $\epsilon ^{-d}$,
where $d$ is some exponent (see the last inequalities in the proof of
 in \citep{Baxter 2000}, and has the consequence that the dominant term
in the bound has an additional factor of $\ln \left( 1/\epsilon \right) $.
This is manifest in Theorem 8, Theorem 12 and Corollary 13 in \citep{Baxter 2000} 
and constitutes an essential weakness of the method of covering numbers. For
bounds on the excess risk it implies that the orders of $\sqrt{1/T}$ and $
\sqrt{1/n}$ obtained from Rademacher or Gaussian complexities have to be
replaced by $\sqrt{\ln \left( T\right) /T}$ and $\sqrt{\ln \left( n\right) /n
}$.

Rademacher and Gaussian complexities make it easy to handle infinite
dimensional input spaces (see our Theorems \ref{Theorem MTL for subspace} and \ref{Theorem LTL for subspace} below). 
They also lead to data dependent bounds, which allows us to explain the
benefits of multi-task learning in terms of the spectrum of the data
covariance operator and the effective input dimension. Bounding Gaussian
complexities for linear classes is comparatively simple, see the proof of
our Lemma \ref{lemma:3}. There is a wealth of recent literature on the Rademacher
complexity of matrices with spectral regularizers \citep[see e.g.][and references therein]{KakadeEtAl 2012,MP13}, while it is unclear to us
how Baxter's method could be applied if the feature map is constrained by a
bound on, say, the trace norm of the associated matrix. In the case of LTL, our approach also leads to explicit and small constant
factors.

On the other hand it must be admitted, that it is relatively easy to obtain
bounds (also provided by Baxter) of order $\ln \left( n\right) /n$ or $\ln
\left( T\right) /T$ with covering numbers in the realizable case. Such
bounds would be more difficult to obtain with our techniques.

The work of \cite{Zhang 2005} proposes the use of MTL as a method of
semi-supervised learning through the creation of artificial tasks from
unlabelled data, for example predicting concealed components of vectors.
They analyze a specific algorithm where the class of feature maps can be
seen as a linear mixture of a fixed feature map with subspace projections as
discussed in our paper. The bounds given apply to the task-averaged risk and
not to LTL. The analysis is based on Rademacher averages and
is independent of the input dimension. The bound itself is expressed as an
entropy integral as given by Dudley \citep[see e.g.][]{Van Der Vaart and Wellner} but it is not very explicit. In
particular the role of the spectrum of the data covariance is not apparent.

\section{Multi-task Subspace Learning}

\label{sec:SL}

We illustrate the general results of the previous section with an important
special case. We assume that the input space $\mathcal{X}$ is a bounded subset of a Hilbert space $H$, which could for example be a reproducing kernel Hilbert space. We denote by $\langle \cdot,\cdot\rangle$ the inner product in $H$ and by $\|\cdot\|$ the induced norm. We hope that sufficiently good results can be
obtained by predictors of the form $g$, where $g:H\rightarrow \mathbb{R}$ is
linear with bounded norm. We also suspect that only few linear features in $H
$ suffice for most tasks, so that the vectors defining the hypotheses $g$
can all be chosen from one and the same, albeit unknown, $K$-dimensional
subspace $M$ of $H$.

Consequently we will factorize predictors as $f\circ h$, where $h$ is a
partial isometry $h:H\rightarrow \mathbb{R}^{K}$ and $f$ is a linear
functional on $\mathbb{R}^{K}$ chosen from some ball of bounded radius. Specifically, we introduce the classes 
\begin{eqnarray*}
\mathcal{H} &=&\left\{ H \ni x \mapsto \left( \left\langle
d_{1},x\right\rangle ,\dots,\left\langle d_{K},x\right\rangle \right) \in 
\mathbb{R}^{K}~:~D=\left( d_{1},\dots,d_{K}\right) \in H^{K}\text{ orthonormal}%
\right\} \\
\tciFourier &=&\left\{ \mathbb{R}^{K} \ni y \mapsto \sum_{k}w_{k}y_{k}\in 
\mathbb{R}~:~\sum_{k}w_{k}^{2}\leq B^{2}\right\} .
\end{eqnarray*}%
The $D$'s appearing in the definition of $\mathcal{H}$ are also called
dictionaries and the individual $d_{k}$ are called atoms \citep[see][]{MPR}.

It does no harm to our analysis if we immediately generalize the class $%
\mathcal{H}$ so as to include certain two-layer neural networks by allowing
a nonlinear activation function $\phi $ with Lipschitz constant $L_{\phi }$
and satisfying $\phi \left( 0\right) =0$, to be applied with each atom. We
can also drop the condition of orthonormality and allow the atoms to trade
some of their norms when needed. The enlarged class of representations is 
\begin{equation*}
\mathcal{H}=\left\{ x\in H\mapsto \left( \phi \left( \left\langle
d_{1},x\right\rangle \right) ,\dots,\phi \left( \left\langle
d_{K},x\right\rangle \right) \right) \in \mathbb{R}^{K}:d_{1},\dots,d_{K}\in
H,~\sum_{k}\left\Vert d_{k}\right\Vert ^{2}\leq K\right\} .
\end{equation*}%
The results can then be re-specialized to subspace learning by setting $\phi 
$ to the identity and $L_{\phi }$ to one.

When applied to subspace learning, our bounds are expressed in terms of
covariances. If $\nu $ is a probability measure on $H$ the corresponding
covariance operator $C_{\nu }$ is defined by%
\begin{equation*}
\left\langle C_{\nu }v,w\right\rangle =\mathbb{E}_{X\sim \nu }\left\langle
v,X\right\rangle \left\langle X,w\right\rangle \text{ for }v,w\in H\text{.}
\end{equation*}%
For an environment $\eta $ we denote the covariance operator corresponding
to the data-marginal of the mixture measure $\mu _{\eta }$ simply by $C$.

If $\mathbf{x}=\left( x_{1},\dots,x_{m}\right) \in H^{m}$ we define the
empirical covariance operator $\hat{C}\left( \mathbf{x}\right) $ by%
\begin{equation*}
\left\langle \hat{C}\left( \mathbf{x}\right) v,w\right\rangle =\frac{1}{m}%
\sum_{i}\left\langle v,x_{i}\right\rangle \left\langle x_{i},w\right\rangle 
\text{ for }v,w\in H\text{,}
\end{equation*}%
in particular 
\begin{equation*}
\left\langle \hat{C}\left( \mathbf{\bar{X}}\right) v,w\right\rangle =\frac{1%
}{nT}\sum_{ti}\left\langle v,X_{ti}\right\rangle \left\langle
X_{ti},w\right\rangle \text{.}
\end{equation*}

The following lemma establishes the necessary ingredients for the
application of Theorems \ref{thm:MTL} and \ref{thm:LTL} to the case of
subspace learning. Recall that if $A$ is a selfadjoint positive linear operator on ${H}$, we denote by $\|A\|_\infty$ and $\|A\|_1$ its spectral and trace norms, respectively. They are defined as $\|A\|_\infty = \sup_{\|z\|\leq 1} \|Az\|$ and $\|A\|_1 = \sum_{i\in \N} \langle e_i, A e_i\rangle$, where $\{e_i\}_{i \in \N}$ is an orthonormal basis in $H$. Recall also the definition of $Q\left( \tciFourier \right)$ and $Q'\left( \tciFourier \right)$ given in Equations \eqref{eq:Q} and \eqref{eq:Qp}, respectively.

\begin{lemma}
\label{lemma:3}
Let $\mathbf{\bar{x}}=\left( x_{ti}\right) $ be a $T\times n$ matrix with
values in a Hilbert space and let $\phi $, $\mathcal{H}$ and $\tciFourier $
be defined as above. Then 

(i) $G\left( \mathcal{H}\left( \mathbf{\bar{x}}\right) \right) \leq L_{\phi
}K\sqrt{nT\left\Vert \hat{C}\left( \mathbf{\bar{x}}\right) \right\Vert _{1}}.$

(ii) For every $h\in \mathcal{H}$,~$\left\Vert h\left( \mathbf{\bar{x}}\right)
\right\Vert \leq L_{\phi }\sqrt{KnT\left\Vert \hat{C}\left( \mathbf{\bar{x}}
\right) \right\Vert _{\infty }}.$

(iii) For an environment $\eta $ and every $h\in \mathcal{H}$
\begin{equation*}
\mathbb{E}_{\left( X,Y\right) \sim \mu _{\eta }}\left\Vert h\left( X\right)
\right\Vert ^{2}\leq L_{\phi }^{2}K\left\Vert C\right\Vert _{\infty }.
\end{equation*}

(iv) $L\left( \tciFourier \right) \leq B$.

(v) $Q\left( \tciFourier \right) \leq B$ and $Q^{\prime }\left( \tciFourier
\right) \leq B$.
\end{lemma}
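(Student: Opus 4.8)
The plan is to dispatch the five claims in increasing order of difficulty, isolating the Gaussian-average bound (i) as the only genuinely nontrivial step. First I would handle the two statements about $\tciFourier$, namely (iv) and (v), by exploiting that every $f\in\tciFourier$ is a linear functional $f(y)=\langle w,y\rangle$ with $\|w\|\leq B$. Its Lipschitz constant is exactly $\|w\|$, giving $L(\tciFourier)\leq B$ immediately. For $Q$ and $Q'$ I would pull the supremum over $w$ through the linear form: since $\sum_i\gamma_i\bigl(f(y_i)-f(y_i')\bigr)=\langle w,\sum_i\gamma_i(y_i-y_i')\rangle$, the supremum over $\{\|w\|\leq B\}$ equals $B\,\|\sum_i\gamma_i(y_i-y_i')\|$. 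Taking the Gaussian expectation and applying Jensen's inequality to the concave square root yields $\mathbb{E}\|\sum_i\gamma_i(y_i-y_i')\|\leq(\sum_i\|y_i-y_i'\|^2)^{1/2}=\|y-y'\|$, which cancels the denominator and gives $Q\leq B$; the argument for $Q'$ is identical with $y'=0$.

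Next I would treat the two pointwise covariance bounds (ii) and (iii), which require only that $\phi(0)=0$ together with the Lipschitz property, giving the elementary estimate $|\phi(s)|\leq L_\phi|s|$. For (ii) this yields $\|h(\mathbf{\bar{x}})\|^2=\sum_{kti}\phi(\langle d_k,x_{ti}\rangle)^2\leq L_\phi^2\sum_{kti}\langle d_k,x_{ti}\rangle^2=L_\phi^2\,nT\sum_k\langle d_k,\hat{C}(\mathbf{\bar{x}})d_k\rangle$, using the definition of $\hat{C}(\mathbf{\bar{x}})$. Bounding each quadratic form by $\|\hat{C}(\mathbf{\bar{x}})\|_\infty\|d_k\|^2$ and summing under the budget $\sum_k\|d_k\|^2\leq K$ produces the claimed $L_\phi\sqrt{KnT\|\hat{C}(\mathbf{\bar{x}})\|_\infty}$. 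Part (iii) is the distribution-level mirror: I would replace the empirical covariance by $C$ and the sum over samples by $\mathbb{E}_{\mu_\eta}\langle d_k,X\rangle^2=\langle d_k,Cd_k\rangle$, then repeat the spectral bound and the budget argument.

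The hard part will be (i), the only place needing a comparison argument. The Gaussian average $G(\mathcal{H}(\mathbf{\bar{x}}))=\mathbb{E}\sup_D\sum_{kti}\gamma_{kti}\phi(\langle d_k,x_{ti}\rangle)$ has the nonlinearity $\phi$ entangled with the optimization variable $D$, so the first move is a Gaussian contraction inequality that strips $\phi$ at the cost of its Lipschitz constant, reducing the problem to $L_\phi\,\mathbb{E}\sup_D\sum_{kti}\gamma_{kti}\langle d_k,x_{ti}\rangle$; this is the step where I expect to invoke a contraction lemma for Gaussian processes (e.g.\ the machinery of \cite{Maurer 2014}) and where one must use $\phi(0)=0$ so that no additive term survives. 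Once linearized, I would rewrite the process as $\sum_k\langle d_k,v_k\rangle$ with $v_k=\sum_{ti}\gamma_{kti}x_{ti}$, apply Cauchy--Schwarz across $k$ under $\sum_k\|d_k\|^2\leq K$ to get $\sqrt{K}\,(\sum_k\|v_k\|^2)^{1/2}$, and then use Jensen to pass the expectation inside the root. The concluding computation $\mathbb{E}\|v_k\|^2=\sum_{ti}\|x_{ti}\|^2=nT\,\mathrm{tr}\,\hat{C}(\mathbf{\bar{x}})=nT\|\hat{C}(\mathbf{\bar{x}})\|_1$, which uses independence of the $\gamma_{kti}$ and the definition of the trace norm, then assembles into the stated $L_\phi K\sqrt{nT\|\hat{C}(\mathbf{\bar{x}})\|_1}$.
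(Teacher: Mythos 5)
Your proposal is correct and follows essentially the same route as the paper's proof: Gaussian contraction to strip $\phi$, then Cauchy--Schwarz and Jensen for (i); the bound $|\phi(s)|\le L_\phi|s|$ combined with the budget $\sum_k\|d_k\|^2\le K$ and the spectral norm for (ii)--(iii); and linearity of $\tciFourier$ with Cauchy--Schwarz and Jensen for (iv)--(v). No substantive differences.
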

\begin{proof}
\emph{(i)} Using the contraction lemma, Corollary \ref{Contraction Lemma}, in the first \ inequality and Cauchy-Schwarz and
Jensen's inequality in the second we get
\begin{eqnarray*}
G\left( \mathcal{H}\left( \mathbf{\bar{x}}\right) \right)  &\leq &L_{\phi }
\mathbb{E}\sup_{d\in \mathcal{H}}\sum_{kti}\gamma _{kti}\left\langle
d_{k},x_{ti}\right\rangle  \\
&=&L_{\phi }\mathbb{E}\sup_{d\in \mathcal{H}}\sum_{k}\left\langle
d_{k},\sum_{ti}\gamma _{kti}x_{ti}\right\rangle  \\
&\leq &L_{\phi }\sqrt{K}\left( \sum_{k}\mathbb{E}\left\Vert \sum_{ti}\gamma
_{ti}x_{ti}\right\Vert ^{2}\right) ^{1/2} \\
&\leq &L_{\phi }K\left( \sum_{ti}\left\Vert x_{ti}\right\Vert ^{2}\right)
^{1/2}=L_{\phi }K\sqrt{nT\left\Vert \hat{C}\left( \mathbf{x}\right)
\right\Vert _{1}}.
\end{eqnarray*}

\noindent \emph{(ii)} For any $D\in \mathcal{H}$
\begin{eqnarray*}
\sum_{kti}\phi \left( \left\langle d_{k},x_{ti}\right\rangle \right) ^{2}
&\leq &L_{\phi }^{2}\sum_{kti}\left\langle d_{k},x_{ti}\right\rangle ^{2} \\
&=&L_{\phi }^{2}\sum_{k}\left\Vert d_{k}\right\Vert
^{2}\sum_{ti}\left\langle \frac{d_{k}}{\left\Vert d_{k}\right\Vert }
,x_{ti}\right\rangle ^{2} \\
&\leq &L_{\phi }^{2}K\sup_{v:\left\Vert v\right\Vert \leq
1}\sum_{ti}\left\langle v,x_{ti}\right\rangle ^{2} \\
&=&L_{\phi }^{2}KnT\left\Vert \hat{C}\left( \mathbf{\bar{x}}\right)
\right\Vert _{\infty },
\end{eqnarray*}
where we used $\phi \left( 0\right) =0$ in the first step.

\noindent {\em (iii)} Similarly, we have that
\begin{eqnarray*}
\mathbb{E}_{\left( X,Y\right) \sim \mu _{\eta }}\sum_{k}\phi \left(
\left\langle d_{k},X\right\rangle \right) ^{2} &\leq &L_{\phi
}^{2}\sum_{k}\left\Vert d_{k}\right\Vert ^{2}\mathbb{E}_{\left( X,Y\right)
\sim \mu _{\eta }}\left\langle \frac{d_{k}}{\left\Vert d_{k}\right\Vert }
,X\right\rangle ^{2} \\
&\leq &L_{\phi }^{2}K\sup_{\left\Vert v\right\Vert \leq 1}\mathbb{E}_{\left(
X,Y\right) \sim \mu _{\eta }}\left\langle v,X\right\rangle ^{2} \\
&=&L_{\phi }^{2}K\left\Vert C\right\Vert _{\infty }.
\end{eqnarray*}

\noindent \emph{(iv)} Let $y,y^{\prime }\in \mathbb{R}^{K}$. Then
\begin{equation*}
\sup_{w\in \tciFourier }\left\{
\sum_{k}w_{k}y_{k}-\sum_{k}w_{k}y_{k}^{\prime }\right\} \leq \left(
\sum_{k}w_{k}^{2}\right) ^{1/2}\left\Vert y-y^{\prime }\right\Vert \leq
B\left\Vert y-y^{\prime }\right\Vert ,
\end{equation*}
so $L\leq B$.

\noindent \emph{(v)} Similarly, we have that
\begin{eqnarray*}
\mathbb{E}\sup_{w\in \mathcal{\tciFourier }}\sum_{i}\gamma _{i}\left(
\sum_{k}w_{k}y_{ki}-\sum_{k}w_{k}y_{ki}\right) &=&\mathbb{E}\sup_{w\in \mathcal{\tciFourier }}\sum_{k}w_{k}\sum_{i}\gamma
_{i}\left( y_{ki}-y_{ki}\right) \\
&\leq &\sup_{w\in \mathcal{\tciFourier }}\sqrt{ \sum_{k}w_{k}^{2} \sum_{k}\mathbb{E}\left( \sum_{i}\gamma _{i}\left(
y_{ki}-y_{ki}\right) \right) ^{2}} \\
&\leq &B\sqrt{ \sum_{ki}\left( y_{ki}-y_{ki}\right) ^{2}}=B\left\Vert y-y^{\prime }\right\Vert ,
\end{eqnarray*}%
so $Q\leq B$. The same proof works for $Q^{\prime }$.
\end{proof}

Substitution in Theorem \ref{thm:MTL} immediately gives
\begin{theorem}[subspace MTL]
\label{Theorem MTL for subspace}With probability at least $1-\delta $ in 
$\mathbf{\bar{X}}$ the excess risk is bounded by 
\begin{equation}
\mathcal{E}_{\mathrm{avg}}(\hat{h},\hat{f}_{1},\dots,\hat{f}_{T})-\mathcal{E}_{
\mathrm{avg}}^{\ast }\leq c_{1}L_{\phi }BK\sqrt{\frac{\left\Vert \hat{C}
\left( \mathbf{\bar{X}}\right) \right\Vert _{1}}{nT}}+c_{2}L_{\phi }B\sqrt{
\frac{K\left\Vert \hat{C}\left( \mathbf{\bar{X}}\right) \right\Vert _{\infty
}}{n}}+\sqrt{\frac{8\ln \left(2/\delta \right) }{nT}}.  \label{New MTL subspace bound}
\end{equation}
\end{theorem}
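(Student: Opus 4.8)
The plan is to derive this bound as an immediate corollary of the general multitask bound, Theorem~\ref{thm:MTL}, by feeding in the five estimates collected in Lemma~\ref{lemma:3}. First I would check that the two standing hypotheses of Theorem~\ref{thm:MTL} hold for the subspace classes defined in this section: $0\in\mathcal{H}$ follows by taking all atoms $d_{k}=0$ together with $\phi(0)=0$, and $f(0)=0$ holds for every $f\in\tciFourier$ since each such $f$ is the linear map $y\mapsto\sum_{k}w_{k}y_{k}$. With these in place, Theorem~\ref{thm:MTL} applies verbatim and bounds $\mathcal{E}_{\mathrm{avg}}(\hat{h},\hat{f}_{1},\dots,\hat{f}_{T})-\mathcal{E}_{\mathrm{avg}}^{\ast}$ by a sum of three terms built from $L$, $G(\mathcal{H}(\mathbf{\bar{X}}))$, $Q$ and $\sup_{h\in\mathcal{H}}\|h(\mathbf{\bar{X}})\|$.

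The second step is pure substitution. By part~(iv) of Lemma~\ref{lemma:3} the Lipschitz constant of $\tciFourier$ satisfies $L\le B$, and by part~(v) the complexity quantity satisfies $Q\le B$. Inserting the Gaussian-average estimate $G(\mathcal{H}(\mathbf{\bar{X}}))\le L_{\phi}K\sqrt{nT\,\|\hat{C}(\mathbf{\bar{X}})\|_{1}}$ from part~(i) into the first term of Theorem~\ref{thm:MTL} turns $\tfrac{c_{1}L\,G}{nT}$ into $c_{1}L_{\phi}BK\sqrt{\|\hat{C}(\mathbf{\bar{X}})\|_{1}/(nT)}$, the factor $\sqrt{nT}$ from $G$ being divided by $nT$ to leave $1/\sqrt{nT}$. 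Inserting the norm estimate $\sup_{h\in\mathcal{H}}\|h(\mathbf{\bar{X}})\|\le L_{\phi}\sqrt{KnT\,\|\hat{C}(\mathbf{\bar{X}})\|_{\infty}}$ from part~(ii) into the second term turns $\tfrac{c_{2}Q\sup_{h}\|h(\mathbf{\bar{X}})\|}{n\sqrt{T}}$ into $c_{2}L_{\phi}B\sqrt{K\|\hat{C}(\mathbf{\bar{X}})\|_{\infty}/n}$, where now the $\sqrt{nT}$ from the norm is divided by $n\sqrt{T}$ to leave $1/\sqrt{n}$. The confidence term of Theorem~\ref{thm:MTL} is untouched by the specialization.

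Because all of the genuine analysis has already been carried out --- the contraction step and Gaussian-average computation behind Lemma~\ref{lemma:3}(i), the covariance bounds in parts (ii)--(v), and the uniform estimation argument inside Theorem~\ref{thm:MTL} --- there is essentially no obstacle at this level: the proof reduces to the one-line substitution above followed by elementary manipulation of the powers of $n$ and $T$. The only point I would double-check is the numerical constant inside the logarithm of the confidence term, since the stated bound carries $\ln(2/\delta)$ whereas Theorem~\ref{thm:MTL} is written with $\ln(4/\delta)$; I would trace this back through the union bound in the proof of Theorem~\ref{thm:MTL} and, failing an exact match, absorb the difference into the universal constants, since it does not affect the order of the bound.
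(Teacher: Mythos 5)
Your proposal is correct and follows exactly the paper's route: the paper proves this theorem by direct substitution of the estimates from Lemma~\ref{lemma:3} into Theorem~\ref{thm:MTL}, which is precisely what you do, including the same power-counting in $n$ and $T$. Your remark about the mismatch between $\ln(2/\delta)$ here and $\ln(4/\delta)$ in Theorem~\ref{thm:MTL} flags a genuine small inconsistency in the paper's statements, and your resolution (trace the union bound or absorb it, since it does not affect the order) is the right attitude.
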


We remark that in the linear case the best competing bound for MTL, obtained by 
\cite{MP13} from noncommutative Bernstein inequalities, is 
\begin{equation}
2B\sqrt{\frac{K\left\Vert \hat{C}\left( \mathbf{\bar{X}}\right) \right\Vert
_{1}\ln \left( Tn\right) }{nT}}+B\sqrt{\frac{8K\left\Vert \hat{C}\left( 
\mathbf{\bar{X}}\right) \right\Vert _{\infty }}{n}}+\sqrt{\frac{8\ln
\left(2/\delta \right) }{nT}}.  \label{Tracenorm bound}
\end{equation}%
If we disregard the constants this is worse than the bound (\ref{New MTL
subspace bound}) whenever $K<\ln \left( Tn\right) $. Its approach to the
multitask limit is slower ($\sqrt{\ln \left( T\right) /T}$ as opposed to $\sqrt{1/T}$), but of course it has the advantage of smaller constants. The
methods used to obtain (\ref{Tracenorm bound}), however, break down for
nonlinear dictionaries.

For the LTL setting, we use the distribution dependent bound,
Theorem \ref{thm:LTL} (i), and obtain

\begin{theorem}[subspace LTL]
\label{Theorem LTL for subspace}With probability at least $1-\delta $ in $
\mathbf{\bar{X}}$, the excess risk is bounded by 
\begin{equation*}
\mathcal{E}_{\eta } (\hat{h}) -\mathcal{E}_{\eta }^{\ast }\leq 
\sqrt{2\pi }L_{\phi }B\left( \frac{K\sqrt{\left\Vert \hat{C}\left( \mathbf{
\bar{X}}\right) \right\Vert _{1}}}{\sqrt{T}}+\sqrt{\frac{K\left\Vert
C\right\Vert _{\infty }}{n}}\right) +\sqrt{\frac{8\ln \left(4/\delta \right)}{T}}.
\end{equation*}
\end{theorem}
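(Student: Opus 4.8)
The plan is to invoke part (i) of Theorem~\ref{thm:LTL} and to bound each of the three quantities on its right-hand side by the corresponding estimate from Lemma~\ref{lemma:3}, specialized to the present dictionary class. Since Theorem~\ref{thm:LTL}(i) already holds with probability at least $1-\delta$ in the draw of $\mathbf{\bar{Z}}\sim\rho_\eta^T$, while all the estimates of Lemma~\ref{lemma:3} hold for every realization of the data and of the environment (they are deterministic once $\mathbf{\bar{X}}$ and $\eta$ are fixed), the resulting inequality will hold with exactly the same confidence $1-\delta$, matching the statement.

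First I would replace the Lipschitz constant $L$ of $\tciFourier$ by $B$ using part (iv) of the lemma, and $Q'(\tciFourier)$ by $B$ using part (v). For the representation-complexity term I would substitute the Gaussian-average bound $G(\mathcal{H}(\mathbf{\bar{X}}))\le L_\phi K\sqrt{nT\,\|\hat{C}(\mathbf{\bar{X}})\|_1}$ from part (i). The crucial simplification occurs in the first summand, where the prefactor $1/(T\sqrt{n})$ combines with the $\sqrt{nT}$ produced by the Gaussian average:
\begin{equation*}
\frac{\sqrt{2\pi}\,L\,G(\mathcal{H}(\mathbf{\bar{X}}))}{T\sqrt{n}}
\le
\frac{\sqrt{2\pi}\,B\,L_\phi K\sqrt{nT\,\|\hat{C}(\mathbf{\bar{X}})\|_1}}{T\sqrt{n}}
=
\sqrt{2\pi}\,L_\phi B\,\frac{K\sqrt{\|\hat{C}(\mathbf{\bar{X}})\|_1}}{\sqrt{T}},
\end{equation*}
which is precisely the first term in the statement.

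For the second summand I would use part (iii), namely $\mathbb{E}_{(X,Y)\sim\mu_\eta}\|h(X)\|^2\le L_\phi^2 K\|C\|_\infty$ for every $h\in\mathcal{H}$, so that the supremum over $h$ of $\sqrt{\mathbb{E}_{\mu_\eta}\|h(X)\|^2/n}$ is at most $L_\phi\sqrt{K\|C\|_\infty/n}$; multiplying by $\sqrt{2\pi}\,Q'\le\sqrt{2\pi}\,B$ yields the second term $\sqrt{2\pi}\,L_\phi B\sqrt{K\|C\|_\infty/n}$. The third summand $\sqrt{8\ln(4/\delta)/T}$ is carried over unchanged from Theorem~\ref{thm:LTL}(i). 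Summing the three contributions gives the claimed bound.

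There is no genuine obstacle here: the theorem is a direct corollary of Theorem~\ref{thm:LTL}(i) and Lemma~\ref{lemma:3}, and the only point requiring care is the bookkeeping in the first term together with the distinction between the two covariances involved, since the \emph{environment} covariance $C$ enters through part (iii) while the trace norm of the \emph{empirical} covariance $\hat{C}(\mathbf{\bar{X}})$ enters through the Gaussian average in part (i). The substantive content lives entirely in Theorem~\ref{thm:LTL} and in the five estimates of Lemma~\ref{lemma:3}, so this final statement is simply their clean specialization to subspace learning.
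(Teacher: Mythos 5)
Your proposal is correct and coincides with the paper's own derivation: Theorem~\ref{Theorem LTL for subspace} is obtained exactly by substituting the deterministic estimates of Lemma~\ref{lemma:3} (parts (i), (iii), (iv), (v)) into Theorem~\ref{thm:LTL}(i), with the same cancellation of $\sqrt{nT}/(T\sqrt{n})=1/\sqrt{T}$ in the first term. Nothing is missing.
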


The two most important common features of Theorems \ref{Theorem MTL for subspace} and \ref{Theorem LTL for subspace} are the decay to zero
of the first term, as $T\rightarrow \infty $, and the occurrence of the
operator norm of the empirical or true covariances in the second term. The first
implies that for very large numbers of tasks the bounds are dominated by the
second term.

To understand the second term we must first realize that the ratio of trace
and operator norms of the true covariances can be interpreted as an
effective dimension of the distribution. This is easily seen if the mixture
of task-marginals is concentrated and uniform on a $d$-dimensional
unit-sphere. In this case $\left\Vert C\right\Vert _{1}=1$ and by isotropy
all eigenvalues are equal, so $\left\Vert C\right\Vert _{\infty }=1/d$,
whence $\left\Vert C\right\Vert _{1}/\left\Vert C\right\Vert _{\infty }=d$.
In such a case the second term in Theorem \ref{Theorem LTL for subspace}
above becomes 
\begin{equation}
B\sqrt{\frac{K}{dn}}.  \label{second term subspace}
\end{equation}
The appropriate standard bound for learning the tasks independently would be 
$B\sqrt{1/n}$ \citep[see][]{Bartlett 2002}. The ratio $\sqrt{K/d}$ of the two
bounds in the multitask limit is the quotient of utilized information (the
dimension of the representation space) to available information (the
dimension of the data). This highlights the potential advantages of
MTRL: if the data is already low-dimensional in the order of $
K$ then multi-task learning isn't worth the extra computational labour. If
the data is high dimensional however, then multi-task learning may be
superior.

The expression (\ref{second term subspace}) above might suggest that there
really is a benefit of high dimensions for learning-to-learn. This is of
course not the case, because the regularizer $B$ has to be chosen large, in
fact proportional to $\sqrt{d}$ to allow a small empirical error. The
correct interpretation of (\ref{second term subspace}) is that the burden of
high dimensions vanishes in the limit $T\rightarrow \infty $. In the next
section we will explain this point in more detail.

\subsection{Learning to Learn Half-spaces}
\label{subsec:HL}
In this section, we illustrate the benefit of MTRL over independent task
learning (ITL) in the case of noiseless linear binary classification (or
half-space learning). We compare our upper bounds for LTL to a general
lower bound on the performance of ITL algorithms and quantify the
parameter regimes where LTL is superior to ITL.

We assume that all the input marginals are given by the uniform distribution 
$\sigma $ on the unit sphere $\mathcal{S}_{d}$ in $\mathbb{R}^{d}$, and the
objective is for each task $\mu $ to classify membership in the half-space $
\left\{ x:\left\langle x,u_{\mu }\right\rangle >0\right\} $ defined by a
task-specific (unknown) unit vector $u_{\mu }$. In the given environment all
the vectors $u_{\mu }$ are assumed to lie in some (unknown) $K$-dimensional
subspace $M$ of $\mathbb{R}^{d}$. We are interested in the regime that
\begin{equation*}
K\ll n\ll d
\end{equation*}
and $T$ grows. This is the safe regime in which our upper bounds for MTL or LTL (cf. Theorems 
\ref{Theorem MTL for subspace} and \ref{Theorem LTL for subspace}) are smaller than a uniform lower bound for independent 
task learning, which we discuss below. We need $n \ll d$ for the lower bound to be large and $K \ll n$ for the middle term in our upper 
bounds to be small. If $T$ is large enough, the second term in our upper bounds dominates the first (task dependent) term. A safe choice is $T \gg K^2 d$, see Equation \eqref{upper bound half-space learning} below.

The $0$-$1$-loss is unsuited for our bounds because it is not Lipschitz. Instead we
will use the truncated hinge loss with unit margin given by $\ell \left(
y^{\prime },y\right) =\xi \left( y^{\prime }y\right) $, where $\xi $ is the
real function 
\begin{equation*}
\xi \left( t\right) =\left\{ 
\begin{array}{lll}
1 & \text{if} & t\leq 0, \\ 
1-t & \text{if} & 0<t\leq 1, \\ 
0 & \text{if} & 1<t.
\end{array}
\right. 
\end{equation*}
This loss is an upper bound of the $0$-$1$-loss, so upper bounds for this
loss function are also upper bounds for the classification error. 

Let $\mathcal{H}$ and $\tciFourier $ be as given at the beginning of Section 
\ref{sec:SL} in its linear variant, where $\mathcal{H}$ is defined by
orthonormal dictionaries without activation functions. Thus, $\mathcal{H}$ can 
be viewed as the set of partial isometries $D:H\rightarrow \mathbb{R}^{K}$.

Recall the definition of the minimal risk for LTL
\begin{eqnarray*}
\mathcal{E}_{\eta }^{\ast } &=&\min_{h\in \mathcal{H}}\mathbb{E}_{\mu \sim
\eta }\left[ \min_{f\in \tciFourier }\mathbb{E}_{Z\sim \mu }\ell \Big(
f\big( h\left( X\right) \big) ,Y\Big) \right]  \\
&=&\min_{D\in \mathcal{H}}\mathbb{E}_{\mu \sim \eta }\left[ \min_{\left\Vert
w\right\Vert \leq B}\mathbb{E}_{Z\sim \mu }\xi \big( \left\langle
w,DX\right\rangle \text{sgn}\left( \left\langle
u_{\mu }, X \right\rangle
\right) \big) \right].
\end{eqnarray*}
Let $D_{M}$ be the partial isometry mapping $M$ onto $\mathbb{R}^{K}$. 
Then $D_{M}\in \mathcal{H}$ and for every unit vector $u\in H$ we
have $D_{M}\left( Bu\right) \in \tciFourier $. Thus
\begin{eqnarray*}
\mathcal{E}_{\eta }^{\ast } &\leq &\mathbb{E}_{\mu \sim \eta }\left[ \mathbb{
E}_{Z\sim \mu }\xi \big( \left\langle D_{M}\left( Bu_{\mu }\right)
,D_{M}X\right\rangle \text{sgn}\left( \left\langle u_{\mu },X\right\rangle
\right) \big) \right]  \\
&=&\mathbb{E}_{\mu \sim \eta }\left[ \mathbb{E}_{X\sim \sigma }\xi \left(
B\left\vert \left\langle u_{\mu },X\right\rangle \right\vert \right) \right] 
\\
&\leq &\sup_{\left\Vert u\right\Vert \leq 1}\mathbb{E}_{X\sim \sigma }\xi
\left( B\left\vert \left\langle u,X\right\rangle \right\vert \right) .
\end{eqnarray*}
For any unit vector $u\in H$ the density of the distribution of $\left\vert
\left\langle u,X\right\rangle \right\vert $ under $\sigma $ has maximum $
A_{d-1}/A_{d}$, where $A_{d}$ is the volume of $\mathcal{S}_{d}$ in the
metric inherited from $\mathbb{R}^{d}$. This density can therefore be
bounded by $\sqrt{d}/2$. Thus 
\begin{equation*}
\mathcal{E}_{\eta }^{\ast }\leq \sqrt{d}\int_{-\infty }^{\infty }\xi \left(
B\left\vert s\right\vert \right) ds=\frac{\sqrt{d}}{2B}=\epsilon ,
\end{equation*}
if we set $B=\sqrt{d}/\left( 2\epsilon \right) $. This choice is made to
ensure that the Lipschitz loss upper bounds the 0-1-loss.

Now let $\mathbf{\bar{Z}}$ be a multi-sample generated from the environment $
\eta $ and assume that we have solved the optimization problem (\ref{the
optimization problem}) to obtain the representation (or feature-map) $\hat{D}
\in \mathcal{H}$. Using the excess risk bound, Theorem \ref{Theorem LTL for
subspace}, and the fact that $\left\Vert C\right\Vert _{\infty }=1/d$ and $
\left\Vert C\right\Vert _{1}=1$, we get with probability at least $1-\delta $
in the draw of $\mathbf{\bar{Z}}$, that
\begin{eqnarray}
\mathcal{E}_{\eta }( \hat{D})  &\leq &\epsilon +\frac{\sqrt{2\pi }
}{2\epsilon }\left( K\sqrt{\frac{d}{T}}+\sqrt{\frac{K}{n}}\right) +\sqrt{
\frac{8\ln \left(4/\delta \right)}{T}}  \notag \\
&\leq &\sqrt{\sqrt{2\pi }\left( K\sqrt{\frac{d}{T}}+\sqrt{\frac{K}{n}}
\right) }+\sqrt{\frac{8\ln \left(4/\delta \right)}{T}},
\label{upper bound half-space learning}
\end{eqnarray}
if we optimize $\epsilon $. This guarantees the expected performance of
future uses of the representation $\hat{D}$. The high dimension still is a
hindrance to the estimation of the representation, but, as announced, its
effect vanishes in the limit $T\rightarrow \infty $. The individual samples
must only well outnumber the dimension $K$, roughly the number of shared
features.

We compare this upper bound to a lower bound for a large class of algorithms
which learn the tasks independently. 

\begin{definition}
An algorithm $f:\mathcal{S}_{d}\times \left\{ -1,1\right\} ^{n}\rightarrow 
\mathcal{S}_{d}$ is called \emph{orthogonally equivariant} if 
\begin{equation}
f\left( V\mathbf{x},\mathbf{y}\right) =Vf\left( \mathbf{x},\mathbf{y}\right), ~\text{for~every~orthogonal~matrix~}V\in {\mathbb{R}}^{d\times d}.
\end{equation}
\end{definition}

For data transformed by an orthogonal transformation an orthogonally
equivariant algorithm produces a correspondingly transformed hypothesis. Any
algorithm which does not depend on a specific coordinate system is
orthogonally equivariant. This class of algorithms includes all kernel
methods, but it excludes the Lasso (L1-norm regularization). If the known properties of the problem
posses a rotation symmetry only equivariant algorithms make sense.

Below we denote by $\mathrm{err}(u,v)$ the misclassification error between
the half-spaces associated with unit vectors $u$ and $v$, that is $\mathrm{
err}(u,v)=\Pr_{x\sim \sigma }\{\langle u,x\rangle \langle v,x\rangle <0\}$.
The following lower error bound is given in \citep{MP08}.

\begin{theorem}
\label{Theorem Main} Let $n<d$ and suppose that $f:\mathcal{S}_{d}^{n}\times
\left\{ -1,1\right\} ^{n}\rightarrow \mathcal{S}_{d}$ is an orthogonally
equivariant algorithm. Then for $\delta >0$ with probability at least $
1-\delta $ in the draw of $\mathbf{X\sim \sigma }^{n}$ we have for every $
u\in \mathcal{S}_{d}$ that
\begin{equation*}
\mathrm{err}\big(u,f(\mathbf{X},u(\mathbf{X}))\big)\geq \frac{1}{\pi }\left( 
\sqrt{\frac{d-n}{d}}-\sqrt{\frac{\ln \left( 1/\delta \right) }{d}}\right) ,
\end{equation*}
where $u\left( \mathbf{X}\right) =\left( \text{sgn}\left\langle
u,X_{1}\right\rangle ,\dots,\text{sgn}\left\langle u,X_{n}\right\rangle
\right) $.
\end{theorem}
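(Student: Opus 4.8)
The plan is to combine a symmetry (equivariance) argument, which forces the learned normal vector into the span of the sample, with a purely geometric lower bound on the misclassification error and a concentration estimate for the residual component of $u$.

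First I would exploit orthogonal equivariance to show that $\hat{u} := f(\mathbf{X}, u(\mathbf{X}))$ lies in the subspace $S := \mathrm{span}\{X_1, \dots, X_n\}$, whose dimension is at most $n < d$. Fix the sample $\mathbf{X}$ and let $V$ be any orthogonal matrix that fixes $S$ pointwise, so $V X_i = X_i$ for all $i$ and hence $V\mathbf{X} = \mathbf{X}$. The labels $u(\mathbf{X})$ do not depend on $V$, so equivariance gives $\hat{u} = f(\mathbf{X}, u(\mathbf{X})) = f(V\mathbf{X}, u(\mathbf{X})) = V\hat{u}$. Since this holds for every such $V$ and $\dim S^{\perp} \ge d-n \ge 1$ (using a reflection when $\dim S^{\perp}=1$ and arbitrary rotations of $S^{\perp}$ otherwise), the component of $\hat{u}$ in $S^{\perp}$ must vanish, i.e.\ $\hat{u}\in S$. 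This symmetry step is the conceptual heart of the argument: independent, coordinate-free learning cannot recover any information about $u$ outside the span of its own sample.

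Next I would convert this into an error bound using the random-hyperplane identity $\mathrm{err}(u,v) = \frac{1}{\pi}\arccos\langle u,v\rangle$, valid for unit vectors under $\sigma$. Because $\hat{u}\in S$ with $\|\hat{u}\|=1$, we have $|\langle u,\hat{u}\rangle| = |\langle P_S u,\hat{u}\rangle| \le \|P_S u\|$, where $P_S$ is the orthogonal projection onto $S$. Applying $\arccos t \ge \sqrt{1-t^2}$ for $t\in[-1,1]$ then yields $\mathrm{err}(u,\hat{u}) \ge \frac{1}{\pi}\sqrt{1-\langle u,\hat{u}\rangle^2} \ge \frac{1}{\pi}\sqrt{1-\|P_S u\|^2} = \frac{1}{\pi}\|P_{S^{\perp}}u\|$, so it remains to lower bound the length of the projection of $u$ onto $S^{\perp}$.

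Finally I would handle the probabilistic estimate. By the rotational invariance of $\sigma$ together with the equivariance of $f$, the law of $\mathrm{err}(u, f(\mathbf{X}, u(\mathbf{X})))$ under $\mathbf{X}\sim\sigma^n$ does not depend on $u$; hence it suffices to bound $\|P_{S^{\perp}}u\|$ for one fixed $u$, which is how the clause ``for every $u$'' should be read. For fixed $u$ the random quantity $\|P_{S^{\perp}}u\|$ has the distribution of $\|u'_{>n}\|$ for $u'$ uniform on $\mathcal{S}_d$, and $\mathbb{E}\|P_{S^{\perp}}u\|^2 = (d-n)/d$. A concentration-of-measure estimate for this $1$-Lipschitz function on the sphere (equivalently a one-sided tail bound for the Beta-distributed $\|P_S u\|^2$), followed by $\sqrt{a-b}\ge\sqrt{a}-\sqrt{b}$, produces $\|P_{S^{\perp}}u\| \ge \sqrt{(d-n)/d} - \sqrt{\ln(1/\delta)/d}$ with probability at least $1-\delta$, and combining with the previous step gives exactly the claimed bound. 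The main obstacle I expect is precisely this concentration step: pinning the deviation to $\sqrt{\ln(1/\delta)/d}$ and centering at $\sqrt{(d-n)/d}$ rather than at the slightly smaller mean $\mathbb{E}\|P_{S^{\perp}}u\|$ requires the correct sharp one-sided inequality, whereas the equivariance and geometry steps are clean and exact.
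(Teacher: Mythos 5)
First, note that the paper itself does not prove this theorem: it is quoted from \citep{MP08}, so there is no in-paper proof to compare against. Your outline reproduces the structure of the argument in that reference, and its first two steps are correct and complete. The fixed-point argument (every orthogonal $V$ with $V|_{S}=\mathrm{id}$ satisfies $V\mathbf{X}=\mathbf{X}$, hence $\hat u=V\hat u$, and the common fixed-point set of all such $V$ is exactly $S$ once $\dim S^{\perp}\ge 1$) correctly forces $\hat u\in S=\mathrm{span}\{X_1,\dots,X_n\}$; and the chain $\mathrm{err}(u,\hat u)=\tfrac1\pi\arccos\langle u,\hat u\rangle\ge\tfrac1\pi\sqrt{1-\langle u,\hat u\rangle^2}\ge\tfrac1\pi\|P_{S^{\perp}}u\|$ is valid. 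Your reading of ``for every $u$'' as a per-$u$ statement (legitimate because the law of the error does not depend on $u$, by rotational invariance of $\sigma$ and equivariance of $f$) is also the only tenable one, since a bound uniform over $u$ would be vacuous for $u\in S$.

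The genuine gap is the concentration step, and it sits exactly where you flagged it. Generic L\'evy concentration for the $1$-Lipschitz function $u\mapsto\|P_{S^{\perp}}u\|$ on $\mathcal{S}_d$ will not deliver the theorem: it centers the deviation at the median (or mean) of $\|P_{S^{\perp}}u\|$, which by Jensen is strictly smaller than $\sqrt{(d-n)/d}$, and its exponent is of the form $e^{-(d-2)t^{2}/2}$ rather than the $e^{-dt^{2}}$ needed to produce $\sqrt{\ln(1/\delta)/d}$. What closes the gap is the Chernoff-type tail bound for the Beta-distributed squared projection length, as in \citep{Dasgupta 2003}: with $k=d-n$ and $Y=\|P_{S^{\perp}}u\|^{2}$ one has $\Pr\{Y\le\beta k/d\}\le\exp\big(\tfrac k2(1-\beta+\ln\beta)\big)$ for $\beta<1$. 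Putting $\sqrt{\beta}=1-x$ with $x=t\sqrt{d/k}$, so that the event is $\sqrt{Y}\le\sqrt{k/d}-t$, and using $\ln(1-x)\le-x-x^{2}/2$ gives $1-\beta+\ln\beta\le-2x^{2}$, hence $\Pr\{\|P_{S^{\perp}}u\|\le\sqrt{(d-n)/d}-t\}\le e^{-dt^{2}}$; setting this to $\delta$ yields $t=\sqrt{\ln(1/\delta)/d}$ and the stated bound. So your architecture is right and the missing piece is a specific, known inequality rather than a new idea --- but as written the final step is asserted rather than proved, and the tool you point to would not by itself suffice.
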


If we use a union bound to subtract the upper bound (\ref{upper bound
half-space learning}) from this lower bound we obtain high probability
guarantees for the advantage of representation learning over other
algorithms. 


In the following section we plot the phase diagram derived here, namely the difference between the uniform lower bound and our upper bound, and compare it with empirical results (see Figure \ref{fig:LTL_results}).

\subsection{Numerical Experiments}

The purpose of the experiments is to compare MTL and LTL to independent task learning (ITL) in the simple setting of linear feature learning (or subspace learning)\footnote{The code used for the experiments presented in this section is available at \url{http://romera-paredes.com/multitask-representation}.}. We wish to study the regime in which MTL/LTL learning is beneficial over ITL as a function of the number of tasks $T$ and the sample size per task $n$. 

We consider noiseless linear binary classification tasks, namely halfspace learning. We generated the data in the following way. The ground truth weight vectors $u_1,\dots,u_T$ are obtained by the equation $u_t=Dc_t$, where $c_t\in\mathbb{R}^K$ is sampled from the uniform distribution on the unit sphere in $\R^K$, and the dictionary $D\in\mathbb{R}^{d\times K}$ is created by first sampling a $d$-dimension orthonormal matrix from the Haar measure, and then selecting the first $K$ columns (atoms).
We create all input marginals by sampling from the uniform distribution on the $\sqrt{d}$ radius sphere in $\R^d$. For each task we sample $n$ instances to build the training set, and $1000$ instances for the test set.

We train the methods with the hinge loss function $h(z):=\max\{0,1-z/c\}$, where $c$ is the margin. We choose $c=2/\epsilon$, so that the true error relative to the best hypothesis is of order $\epsilon$. We fixed the value of $\epsilon$ to be $(K/n)^{1/2}$.
For ITL we optimize that loss function constraining the $\ell_2$-norm of the weights, for MTL and LTL we constrain $D$ to have a Frobenius norm less or equal than $1$, and each $c_t$ is constrained to have an $\ell_2$ norm less or equal than $1$.
During testing we use the $0$-$1$ loss. For example the task-average error is evaluated as
\begin{equation}
\frac{1}{T} \sum_{t=1}^T \frac{1}{1000} \sum_{i=1}^{1000} 1\{{\rm sign}(\lb u_t,x_i\rb) \neq {\rm sign} (\lb {\hat u}_t,x_i\rb)\}\label{eq:testError}
\end{equation}
where ${\hat u}_t$ are the weight vectors learned by the assessed method.

\subsection{MTL Experiment}
We first discuss the MTL experiment. We let $d=50$, and vary $T \in \{5,10,\dots,150\}$, $n \in \{5,10,\dots,150\}$ considering the cases $K=2$ and $K=5$.
In Figure \ref{fig:MTL_results} we report the difference between the classification error of the two methods. These results are obtained by repeating the experiment $10$ times, reporting the average difference. In each trial a different set of input points and underlying weight vectors are generated for each task. 
In the MTL case the training error was always below $0.1$ and on
average it was smaller than $0.04$. This suggests that despite the problem being non-convex, the gradient optimization algorithm finds a good suboptimal solution.

\begin{figure}[t]
\centering
\includegraphics[width=0.45\linewidth]{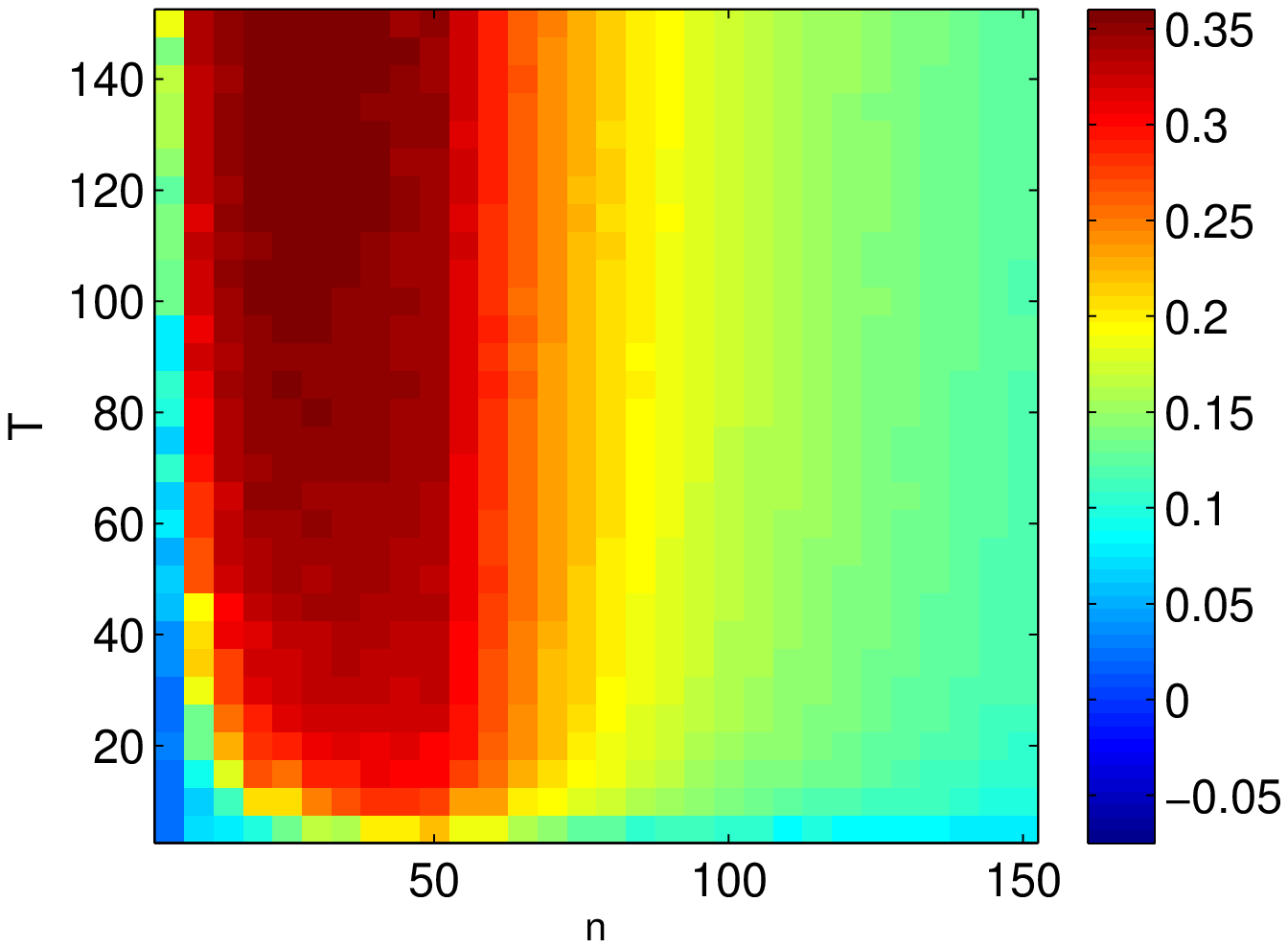}
\includegraphics[width=0.45\linewidth]{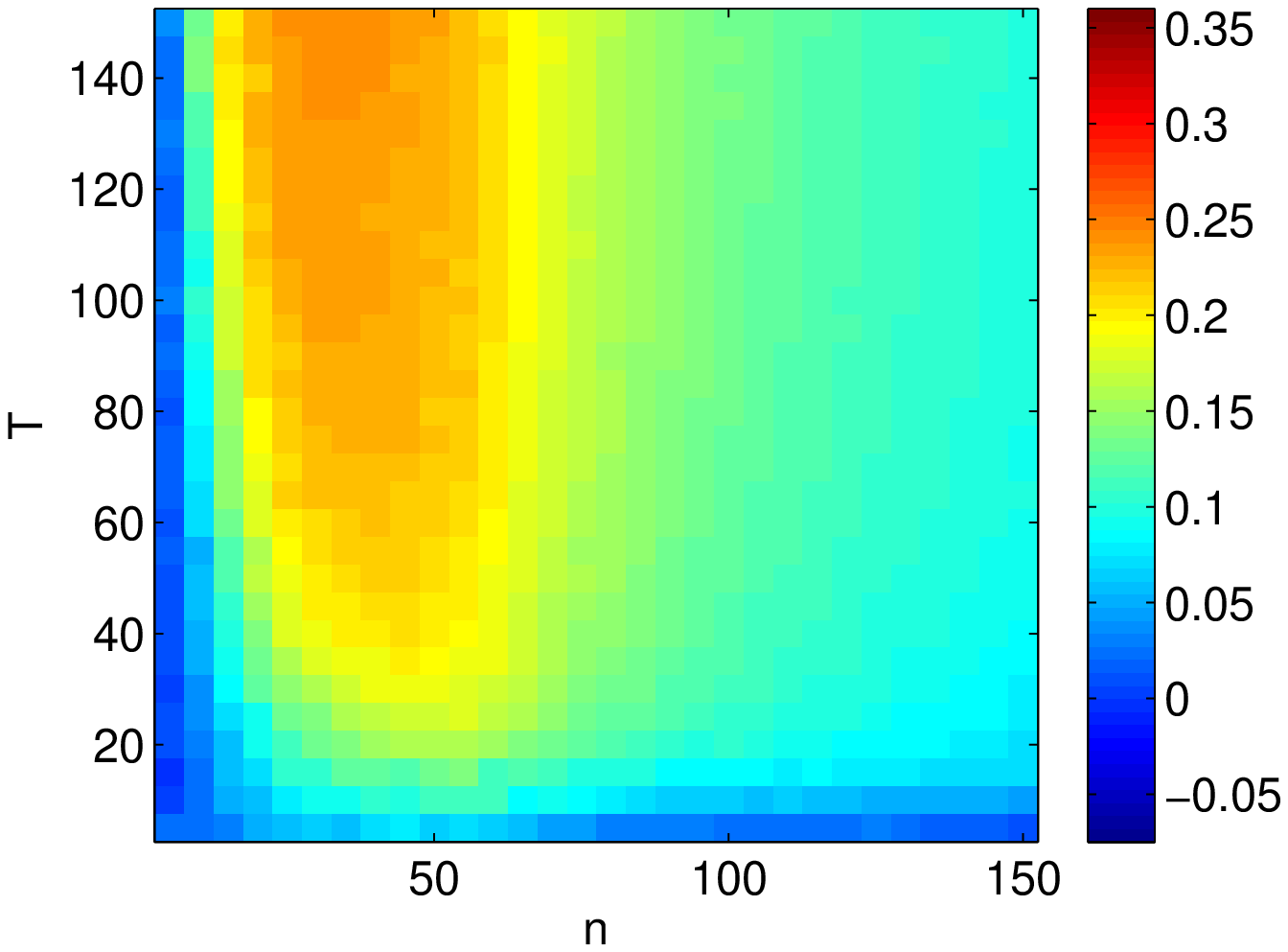}
\caption{Difference of test classification error, computed according to eq. (\ref{eq:testError}), between ITL and MTL. The vertical axis represents the number of training tasks, and the horizontal axis the number of training instances per task.
In the left column $K=2$, and in the right column $K=5$.}
\label{fig:MTL_results}
\end{figure}

We have made further experiments to assess the influence of other data settings on the difference between ITL and MTL. In the first of those experiments we have explored the cases in which the dictionary size is overestimated and underestimated. The results are shown in Figure \ref{fig:Different_K}. In the left plot the dictionary size is overestimated, in particular the ground truth number of atoms is $2$, and the number of atoms used in the MTL method is $5$. We can appreciate a similar pattern as the one we saw in Figure \ref{fig:MTL_results}, although differences between ITL and MTL are not as high. The performance is slightly hampered, as expected due to an overestimation of the number of atoms. On the other hand in Figure \ref{fig:Different_K} (right) we show the results when the number of atoms in the ground truth dictionary is $5$, whereas the number of atoms used in the MTL approach is $2$. In this case we see that the performance is severely affected by the underestimation of the size of the dictionary, yet we observe that MTL performs better than ITL in the same regime as in the previous experiments.

\begin{figure}[t]
\centering
\includegraphics[width=0.45\linewidth]{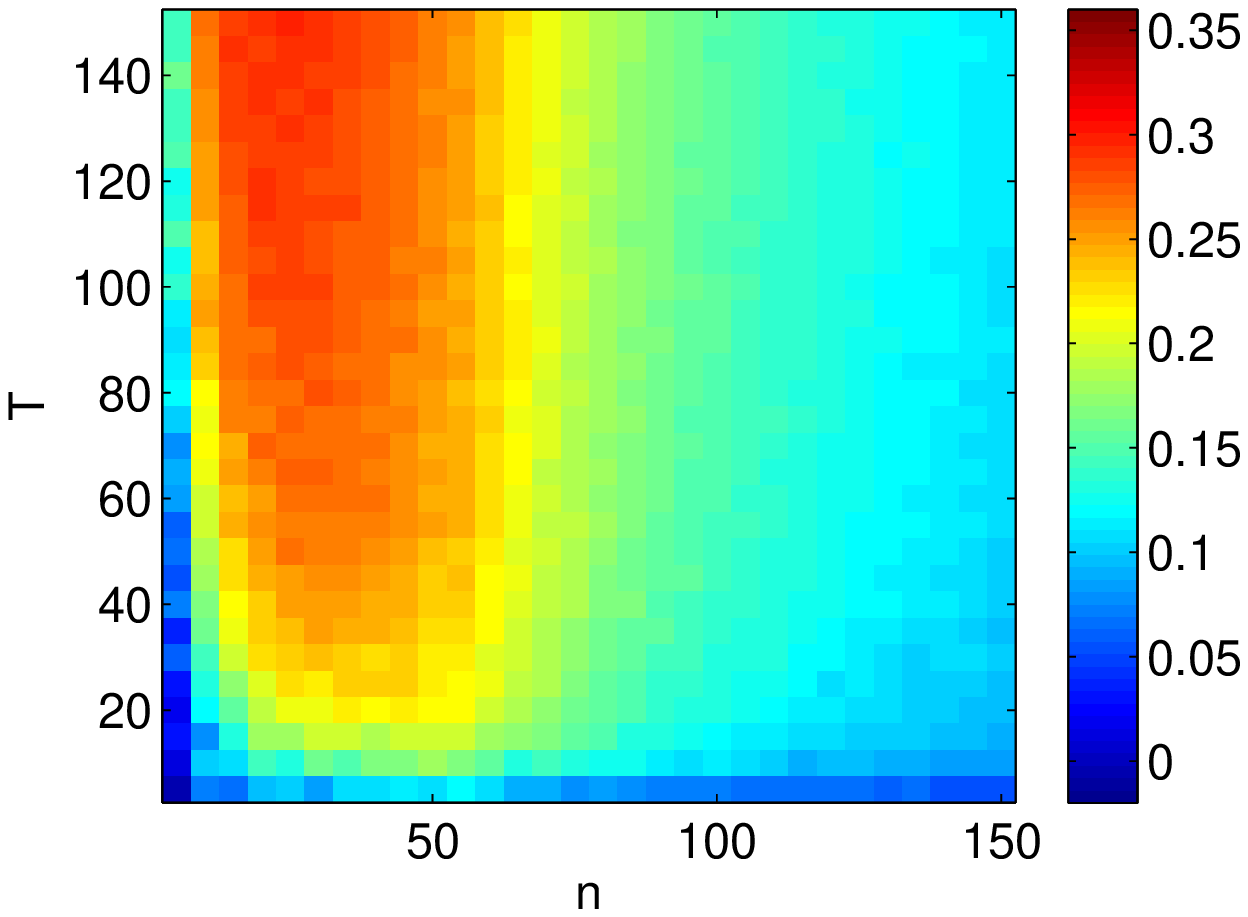}
\includegraphics[width=0.45\linewidth]{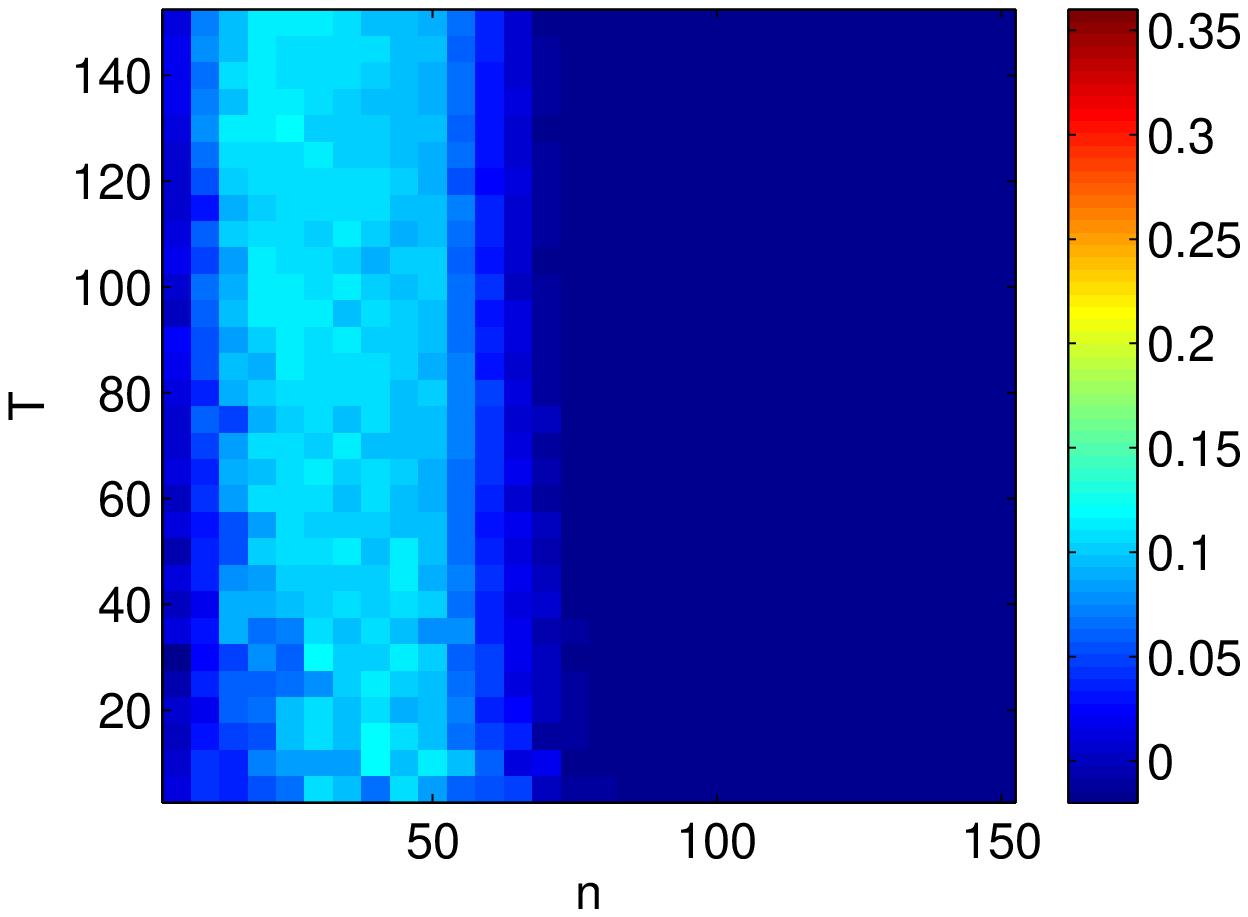}
\caption{Difference of test classification error, computed according to eq. (\ref{eq:testError}), between ITL and MTL, when the number of atoms of the ground truth dictionary does not match the number of atoms of the MTL model. The plot in the left shows the experiment in which the ground truth number of atoms is $2$, whereas the number of atoms used in the MTL approach is $5$. The plot in the right shows the opposite scenario: $5$ atoms as ground truth, and $2$ atoms in the MTL model. The vertical axis represents the number of training tasks, and the horizontal axis the number of training instances per task.}
\label{fig:Different_K}
\end{figure}

In the second of these experiments we study how the results are affected when the data are noisy. To do so, we have generated the data so that the ground truth label for instance $x_i$ for task $t$ is given by ${\rm sign}(\lb u_t,x_i\rb + \varepsilon_{ti})$, where $\varepsilon_{ti}\sim\mathcal{N}(0,1)$. The dictionary size, for both the ground truth and the MTL approach, is $K=2$. The results are shown in Figure \ref{fig:noise}, and we can see a similar behaviour as the one in Figure \ref{fig:MTL_results}, with somewhat smaller differences between ITL and MTL.

\begin{figure}[t]
\centering
\includegraphics[width=0.45\linewidth]{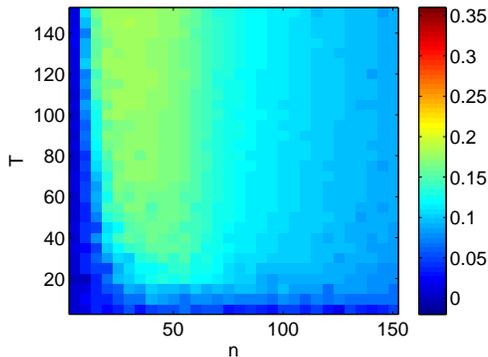}
\caption{Difference of test classification error, computed according to eq. (\ref{eq:testError}), between ITL and MTL, when adding Gaussian noise to the ground truth labels. The vertical axis represents the number of training tasks, and the horizontal axis the number of training instances per task.}
\label{fig:noise}
\end{figure}


\subsection{LTL Experiment}

In this experiment we test how the dictionary learned at the training stage helps learning new tasks, and we assess how similar the resultant figure is in comparison to the phase diagram derived in the previous section. 

The data is generated according to the settings given in the MTL experiment. Furthermore, $50$ new tasks are sampled following the same scheme previously described for the purpose of computing the LTL test error.  
We present the results in Figure \ref{fig:LTL_results} (Top). Similar to the previous experiment, we report the average difference between the test error of ITL and LTL after $10$ trials.

In Figure \ref{fig:LTL_results} (Bottom) we present the theoretical phase diagram, which was generated using $1\leq T\leq 10^{11}$, $1\leq n\leq 10^{5}$, $d=10^{5}$, $\protect\delta =0.0001$. We also plot as a dark line the points in which there is no difference in the performances between ITL and LTL.

\begin{figure}[t]
\centering
\includegraphics[width=0.45\linewidth]{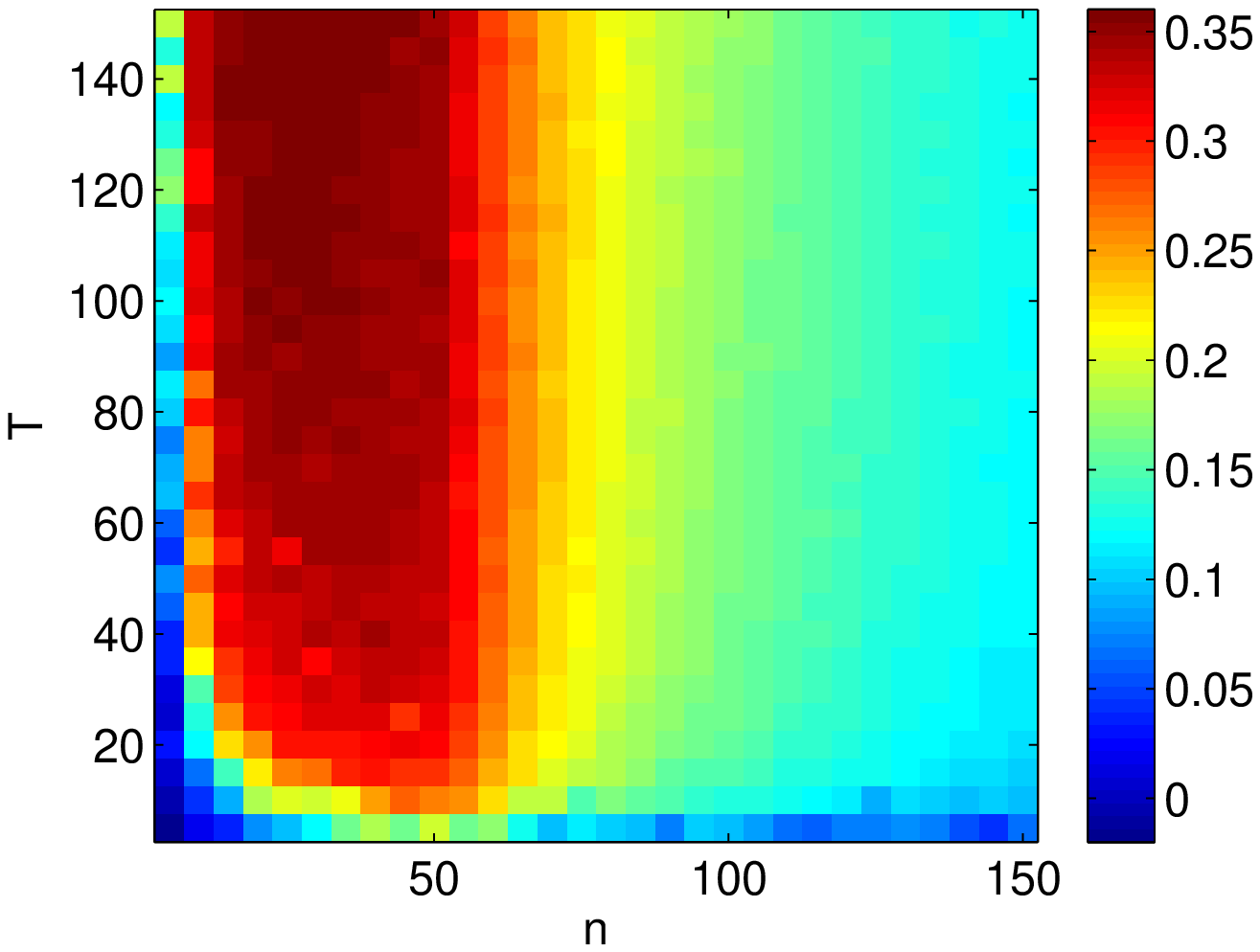}
\includegraphics[width=0.45\linewidth]{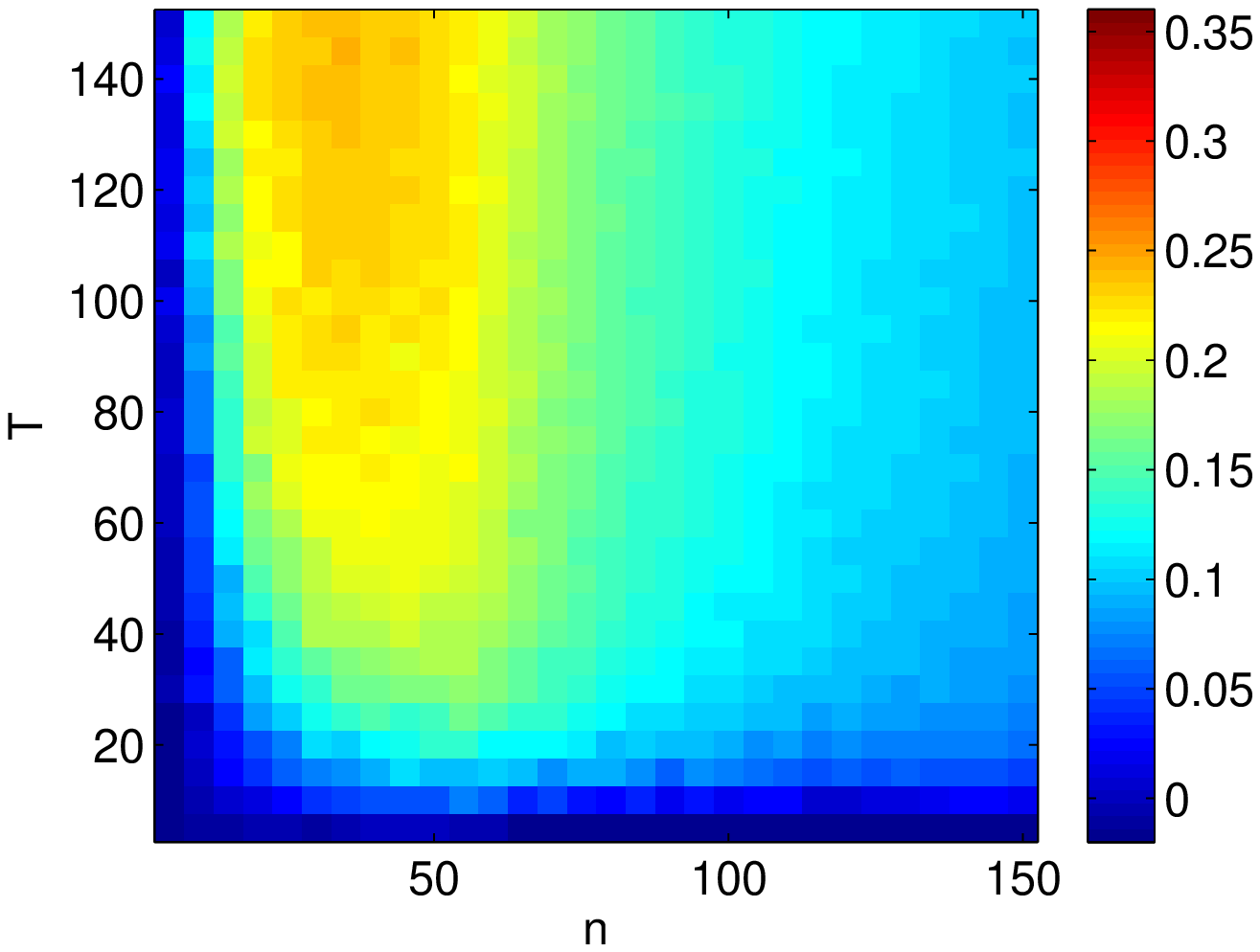}
\includegraphics[width=0.45\linewidth]{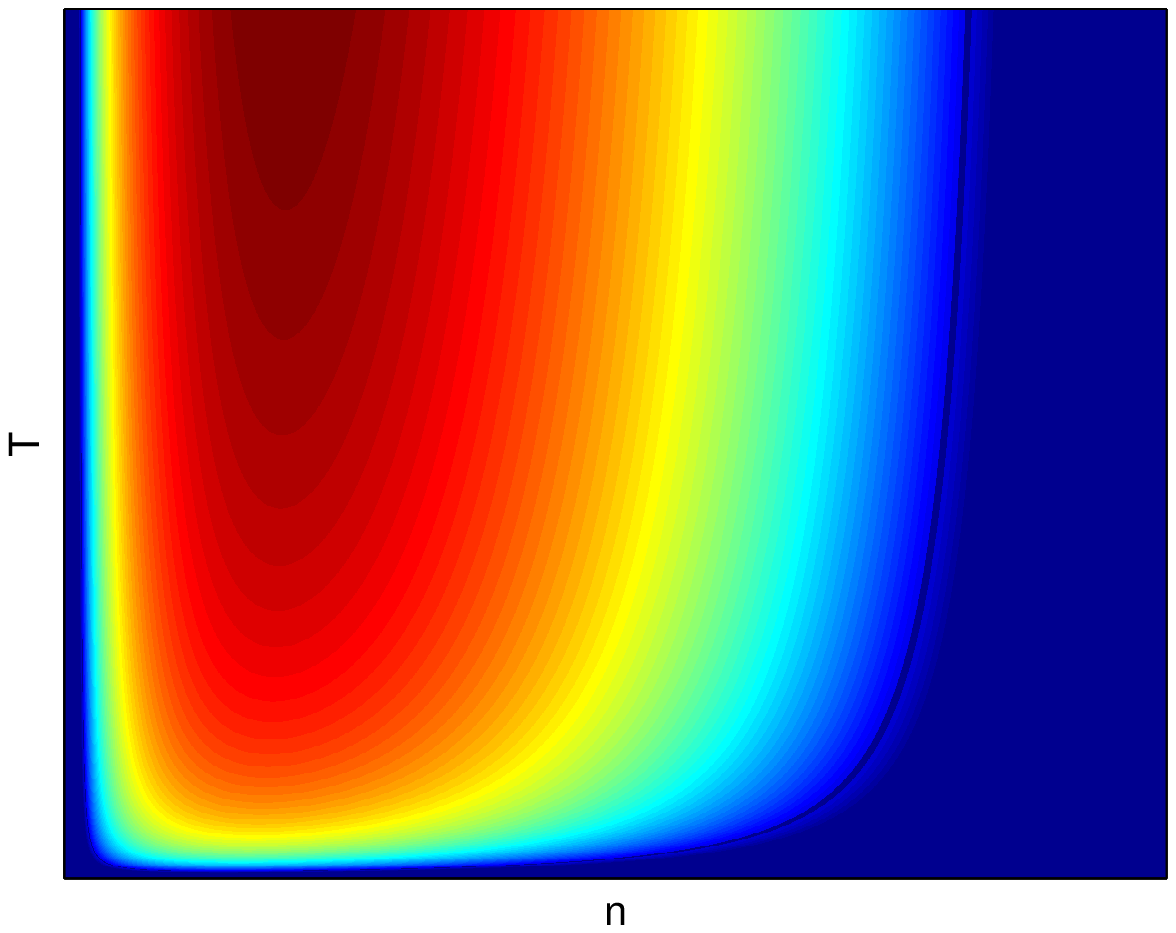}
\includegraphics[width=0.45\linewidth]{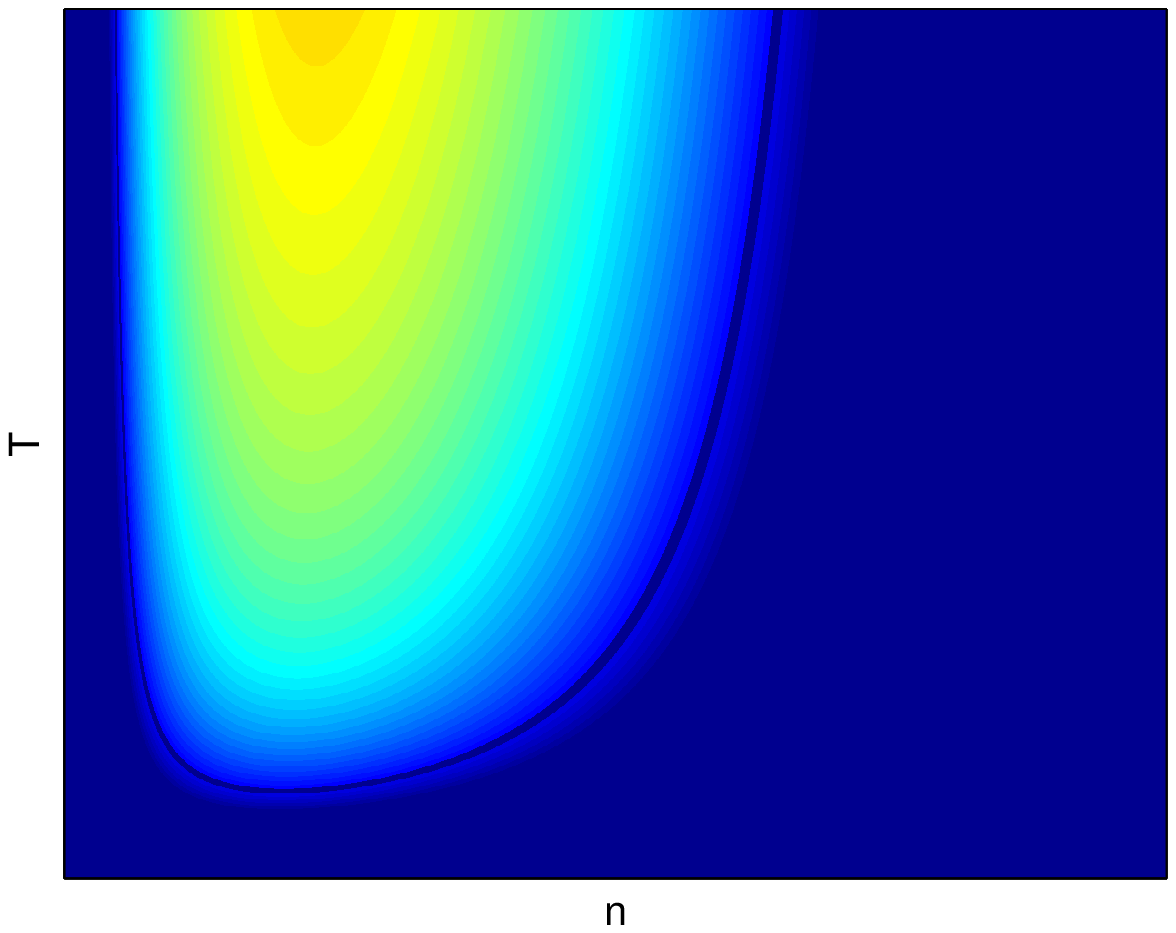}
\caption{The vertical axis represents the number of training tasks, and the horizontal axis the number of training instances per task. 
Plots in the top row show the difference of test classification error, computed on $50$ new tasks, between ITL and LTL. Plots in the bottom row show the region where the upper bound for LTL is smaller than the lower bound
for any equivariant algorithm for ITL (see the discussion in Section 3.1, in particular Equation \ref{upper bound half-space learning}) using $1\leq T\leq 10^{11}$, $1\leq n\leq 10^{5}$
, $d=10^{5}$, and $\protect\delta =0.0001$. 
 In the left column $K=2$, and in the right column $K=5$.}
\label{fig:LTL_results}
\end{figure}

The reader may object about the much larger parameter values used to generate the plots of theoretical differences, in comparison to the experimental settings. These large parameters are partly a consequence of an accumulation of somewhat loose
estimates in the derivation of both the upper and lower bounds. Another
reason is that in applying it to a noiseless, finite-dimensional problem
(for clarity) we have sacrificed two strong points of our results:
independence of input dimension and its agnostic nature. Apart from the large parameter values the theoretical prediction shown in
Figure \ref{fig:LTL_results} (Bottom) is in very good agreement with the
experimental results in Figure \ref{fig:LTL_results} (Top).

We have also performed experiments in order to evaluate the influence of noise and under/overestimation of the dictionary size on the difference between ITL and LTL. We obtained similar results as the ones reported for MTL in Figures \ref{fig:Different_K} and \ref{fig:noise}.

Finally, we have compared the learned dictionary, $\hat{D}$, with the ground truth, $D$, in the same regime of parameters used for the previous experiments. Note that a dictionary could be correct up to permutations and changes of sign of its atoms. To overcome this issue we use the similarity measure 
\beq
s(\hat{D},D)=\frac{1}{K}\left\Vert D^{\top}\hat{D}\right\Vert _{{\rm tr}},
\label{eq:sss}
\eeq
where $\|\cdot\|_{\rm tr}$ is the sum of singular values of a matrix. Note that $s(\hat{D},D)=1$ if $\hat{D}$ and $D$ are the same matrix up to permutation of columns and changes of sign, as requested. The results are found in Figure \ref{fig:DictionaryDiscovery}. 

Figure \ref{fig:DictionaryDiscovery} indicate that the learned dictionary is close to the true dictionary even for small sample sizes, provide $T$ is large. This supports the results in Figure \ref{fig:MTL_results} and the top plots in Figure \ref{fig:LTL_results}, where MTL or LTL are found to be superior to ITL in this regime, respectively. 

\begin{figure}[t]
\centering
\includegraphics[width=0.45\linewidth]{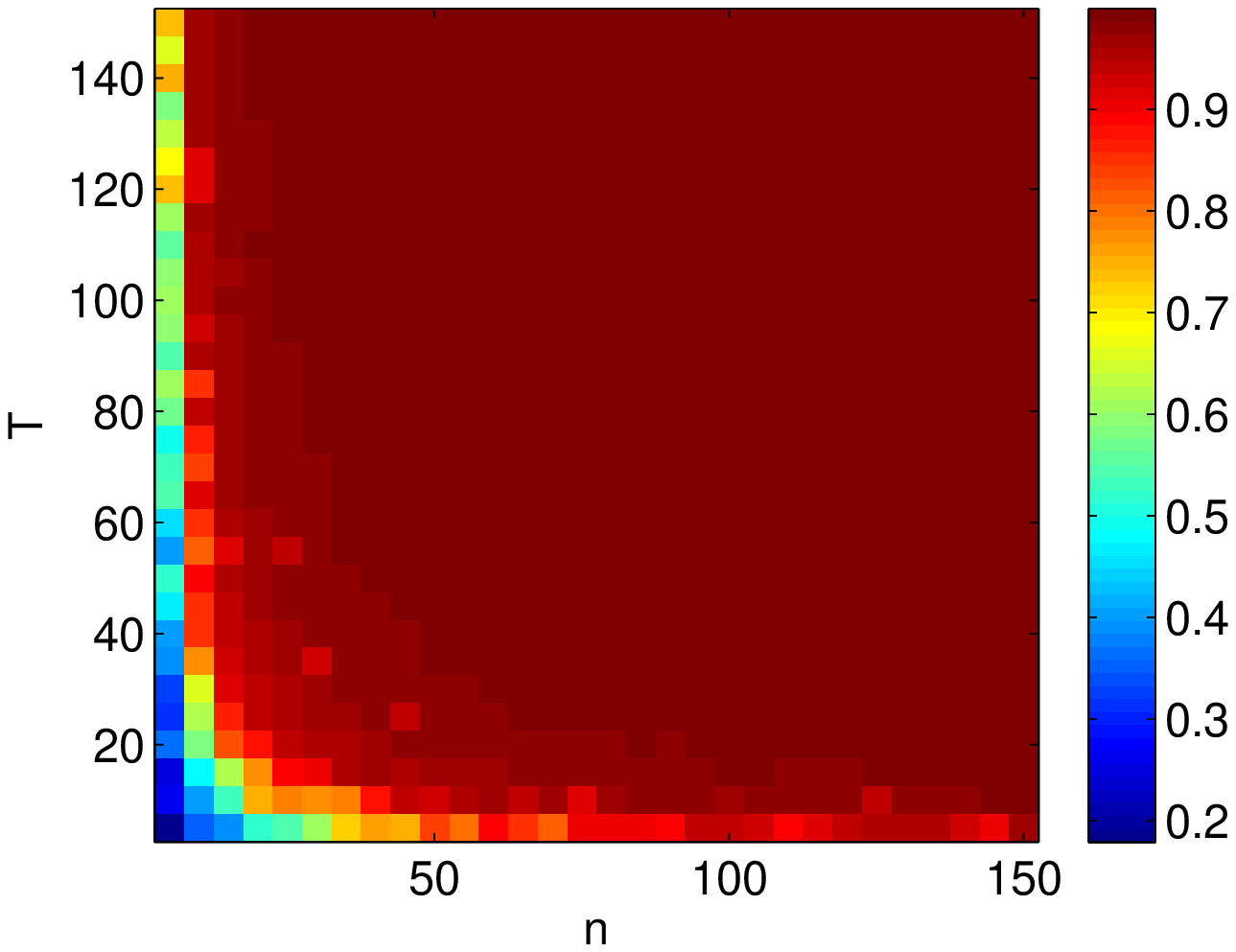}
\includegraphics[width=0.45\linewidth]{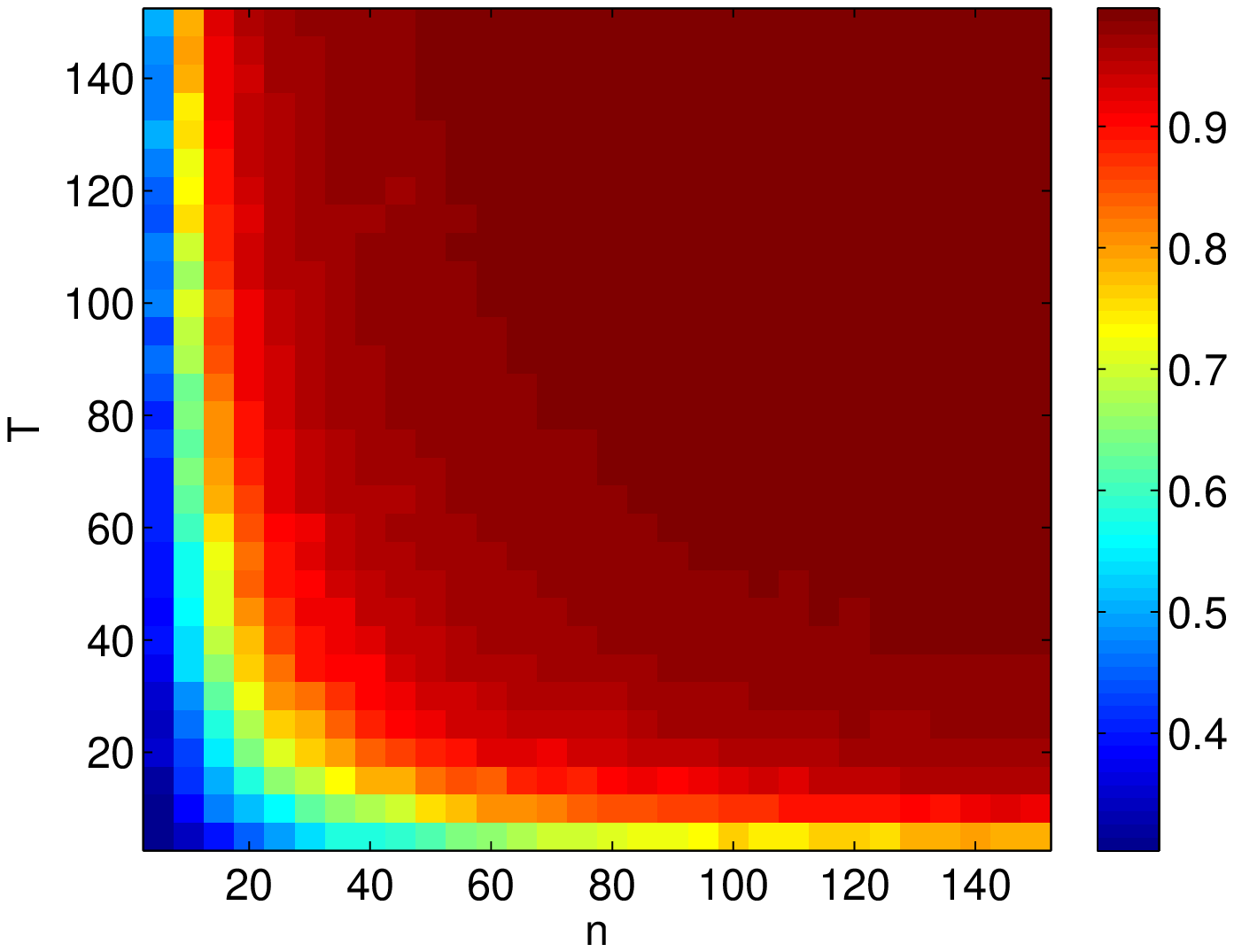}
\caption{Similarity between the learned dictionary $\hat{D}$ and the ground truth dictionary $D$, according the similarity measure $s(\hat{D},D)$ in Equation \eqref{eq:sss}. The vertical axis represents the number of training tasks, and the horizontal axis the number of training instances per task. Left plot: $K=2$. Right plot: $K=5$.}
\label{fig:DictionaryDiscovery}
\end{figure}

\section{Proofs of the Main Theorems}
\label{sec:proofs}
In this section we prove our principal results, Theorem \ref{thm:MTL} and
Theorem \ref{thm:LTL}. In preparation for the proofs we will first present
some important auxiliary results.

\subsection{Tools}

We denote by $\gamma $ a generic vector of independent standard normal
variables, whose dimension will be clear from context. A central role in
this paper is played by the {Gaussian average} $G(Y)$ of a set $Y\subseteq 
\mathbb{R}^{n}$, which is defined as 
\begin{equation*}
G\left( Y\right) =\mathbb{E}\sup_{y\in Y}\left\langle \gamma ,y\right\rangle
=\mathbb{E}\sup_{y\in Y}\sum_{i=1}^{n}\gamma _{i}y_{i}.
\end{equation*}%
The reader who is concerned about the measurability of the random variable
on the right hand side should replace $Y$ by a countable dense subset of $Y$%
, with similar adjustments wherever the Gaussian averages occur.

Rademacher averages, where the $\gamma _{i}$ are replaced by uniform $%
\left\{ -1,1\right\} $-distributed variables, are somewhat more popular in
the literature. We use Gaussian averages instead, because in most cases they
are just as easy to bound and possess special properties (Theorem \ref%
{Slepian Lemma} and Theorem \ref{Theorem Chain Rule} below) which we need in
our analysis$.$

The first result is a standard tool to prove uniform bounds on the
estimation error in terms of Gaussian averages \citep{Bartlett 2002}.

\begin{theorem}
\label{Theorem Generalization}Let $\mathcal{\tciFourier }$ be a real-valued
function class on a space $\mathcal{X}$ and let $\mathbf{X}=\left(
X_{1},...,X_{n}\right) $ be a vector of independent random variables and $%
\mathbf{X}^{\prime }$ iid to $\mathbf{X}$. Then

(i)%
\begin{equation*}
\mathbb{E}_{\mathbf{X}}\sup_{f\in \mathcal{F}}\frac{1}{n}\sum_{i=1}^{n}
\left( \mathbb{E}_{\mathbf{X}^{\prime }}\left[ f\left( X_{i}^{\prime
}\right) \right] -f\left( X_{i}\right) \right) \leq \frac{\sqrt{2\pi } 
\mathbb{E}_{\mathbf{X}}G\left( \mathcal{F}\left( \mathbf{X}\right) \right) }{
n}
\end{equation*}

(ii) if the members of $\tciFourier $ have values in $\left[ 0,1\right] $
then with probability greater than $1-\delta $ in $\mathbf{X}$ for all $f\in 
\mathcal{\tciFourier }$ 
\begin{equation*}
\frac{1}{n}\sum_{i=1}^{n}\left( \mathbb{E}_{\mathbf{X}^{\prime }}\left[
f\left( X_{i}^{\prime }\right) \right] -f\left( X_{i}\right) \right) \leq 
\frac{\sqrt{2\pi }G\left( \mathcal{F}\left( \mathbf{X}\right) \right) }{n}+%
\sqrt{\frac{9\ln \left( 2/\delta \right) }{2n}}.
\end{equation*}
\end{theorem}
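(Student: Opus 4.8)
The plan is to handle the two parts separately, establishing (i) by symmetrization and then (ii) by concentration of measure layered on top of (i). Throughout, write $\Phi(\mathbf{X}) = \sup_{f\in\tciFourier}\frac{1}{n}\sum_i(\mathbb{E}_{\mathbf{X}'}[f(X_i')] - f(X_i))$ and observe that, since $X_i'$ is distributed as $X_i$, the inner expectation equals $\mathbb{E}[f(X_i)]$, so $\mathbb{E}_{\mathbf{X}}\Phi$ is exactly the quantity to be bounded in (i).

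For part (i) I would first introduce the \emph{ghost sample}: keeping the term $\mathbb{E}_{\mathbf{X}'}[f(X_i')]$ inside the supremum and pulling the supremum outside the expectation over $\mathbf{X}'$ by Jensen's inequality gives $\mathbb{E}_{\mathbf{X}}\Phi \leq \mathbb{E}_{\mathbf{X},\mathbf{X}'}\sup_f\frac{1}{n}\sum_i(f(X_i')-f(X_i))$. Next comes \emph{symmetrization}: for each $i$ the pair $(X_i,X_i')$ is exchangeable, so inserting independent Rademacher signs $\epsilon_i$ leaves the law of $f(X_i')-f(X_i)$ unchanged; splitting the supremum over the two halves then yields $\mathbb{E}_{\mathbf{X}}\Phi \leq \frac{2}{n}\mathbb{E}_{\mathbf{X},\epsilon}\sup_f\sum_i\epsilon_i f(X_i)$, twice the Rademacher average over $\mathcal{F}(\mathbf{X})$. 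Finally I would pass to Gaussian averages through the standard comparison $\mathbb{E}_\epsilon\sup\sum_i\epsilon_i a_i \leq \sqrt{\pi/2}\,\mathbb{E}_\gamma\sup\sum_i\gamma_i a_i$, which follows by writing $\gamma_i = \mathrm{sign}(\gamma_i)|\gamma_i|$, conditioning on $|\gamma_i|$, and combining Jensen's inequality with $\mathbb{E}|\gamma_i| = \sqrt{2/\pi}$. Since $2\sqrt{\pi/2} = \sqrt{2\pi}$, this produces exactly the claimed constant $\frac{\sqrt{2\pi}}{n}\mathbb{E}_{\mathbf{X}}G(\mathcal{F}(\mathbf{X}))$.

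For part (ii) I would apply McDiarmid's bounded-differences inequality twice and combine the two events by a union bound. The function $\Phi(\mathbf{X})$ changes by at most $1/n$ when a single $X_i$ is replaced (because $f\in[0,1]$), so with probability at least $1-\delta/2$ one has $\Phi(\mathbf{X}) \leq \mathbb{E}\Phi + \sqrt{\ln(2/\delta)/(2n)}$, and (i) bounds the mean by $\frac{\sqrt{2\pi}}{n}\mathbb{E}_{\mathbf{X}}G(\mathcal{F}(\mathbf{X}))$. To make the bound data-dependent I would then replace the expected Gaussian average by the empirical one: the map $\mathbf{X}\mapsto G(\mathcal{F}(\mathbf{X}))$ has bounded differences $\sqrt{2/\pi}$ per coordinate, since for each fixed $\gamma$ changing $X_i$ moves the supremum by at most $|\gamma_i|$ and $\mathbb{E}|\gamma_i| = \sqrt{2/\pi}$, so a second McDiarmid application gives, with probability at least $1-\delta/2$, $\mathbb{E}_{\mathbf{X}}G(\mathcal{F}(\mathbf{X})) \leq G(\mathcal{F}(\mathbf{X})) + \sqrt{n\ln(2/\delta)/\pi}$. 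Multiplying this by $\frac{\sqrt{2\pi}}{n}$ converts the deviation into $\sqrt{2\ln(2/\delta)/n}$, and adding the two residuals, $\sqrt{\ln(2/\delta)/(2n)} + \sqrt{2\ln(2/\delta)/n} = \frac{3}{\sqrt 2}\sqrt{\ln(2/\delta)/n} = \sqrt{9\ln(2/\delta)/(2n)}$, which is precisely the stated term.

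The symmetrization identities and the first bounded-differences estimate are routine. The step deserving the most care, and the one that fixes the final constants, is the second concentration inequality: one must check that $G(\mathcal{F}(\mathbf{X}))$ has bounded differences exactly $\sqrt{2/\pi}$ rather than $1$, because it is the $\gamma$-averaged supremum and not the supremum itself, and then track how the prefactor $\frac{\sqrt{2\pi}}{n}$ turns $\sqrt{n\ln(2/\delta)/\pi}$ into the $\sqrt{2}$-coefficient that combines with the $\Phi$-deviation to yield $9/2$. A minor technical point, addressed exactly as in the remark preceding the theorem, is the measurability of the suprema, which is handled by restricting to a countable dense subclass of $\tciFourier$.
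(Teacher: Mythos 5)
Your proof is correct, and it is exactly the standard argument that the paper relies on by citing Bartlett and Mendelson (2002) for this tool rather than proving it: symmetrization with a ghost sample and Rademacher signs, the $\sqrt{\pi/2}$ Rademacher-to-Gaussian comparison giving the $\sqrt{2\pi}$ constant, and two applications of the bounded-differences inequality (with constants $1/n$ for $\Phi$ and $\sqrt{2/\pi}$ for $G$) combined by a union bound to produce the $\sqrt{9\ln(2/\delta)/(2n)}$ term. All constants check out, so there is nothing to correct.
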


The following theorem is a vector-valued version of the above, is useful for
bounds on the task-averaged estimation error (\cite{Zhang 2005}, \cite%
{Maurer 2006}).

\begin{theorem}
\label{Theorem Generalization Multitask}Let $\mathcal{\tciFourier }$ be a
class of functions $f:\mathcal{X\rightarrow }\left[ 0,1\right] ^{T}$, and
let $\mu _{1},...,\mu _{T}$ be probability measures on $\mathcal{X}$ with $%
\mathbf{\bar{X}=}\left( \mathbf{X}_{1},...,\mathbf{X}_{T}\right) \sim
\prod_{t=1}^{T}\left( \mu _{t}\right) ^{n}$ where $\mathbf{X}_{t}=\left(
X_{t1},...,X_{tn}\right) $. Then with probability greater than $1-\delta $
in $\mathbf{\bar{X}}$ for all $f\in \mathcal{\tciFourier }$ 
\begin{equation*}
\frac{1}{T}\sum_{t}\left( \mathbb{E}_{X\sim \mu _{t}}\left[ f_{t}\left(
X\right) \right] -\frac{1}{n}\sum_{ti}f_{t}\left( X_{ti}\right) \right) \leq 
\frac{\sqrt{2\pi }G\left( Y\right) }{nT}+\sqrt{\frac{9\ln \left( 2/\delta
\right) }{2nT}},
\end{equation*}%
where $Y\subset \mathbb{R}^{Tn}$ is the random set defined by $Y=\left\{
\left( f_{t}\left( X_{ti}\right) \right) :f\in \tciFourier \right\} .$%
\bigskip 
\end{theorem}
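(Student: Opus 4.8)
The plan is to follow the standard route for data-dependent uniform bounds via Gaussian averages, exactly paralleling the single-task Theorem~\ref{Theorem Generalization}, but carried out jointly over all $nT$ sample points, and then to add a second concentration step that replaces the expected Gaussian average by its empirical (data-dependent) value. First I would introduce the random variable
$$\Phi(\mathbf{\bar{X}}) = \sup_{f\in\tciFourier}\frac{1}{T}\sum_t\left(\E_{X\sim\mu_t}[f_t(X)] - \frac{1}{n}\sum_i f_t(X_{ti})\right),$$
and observe that replacing a single point $X_{ti}$ changes $\Phi$ by at most $1/(nT)$, since the affected summand carries the factor $\tfrac1T\cdot\tfrac1n$ and $f_t$ has range in $[0,1]$. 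The bounded-differences (McDiarmid) inequality over the $nT$ independent coordinates then gives, with probability at least $1-\delta/2$, that $\Phi(\mathbf{\bar{X}}) \le \E\Phi + \sqrt{\ln(2/\delta)/(2nT)}$.

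Second, I would bound $\E\Phi$ by symmetrization. Introducing a ghost multisample $\mathbf{\bar{X}}'$ and Jensen's inequality yields $\E\Phi \le \E\sup_f \tfrac{1}{nT}\sum_{ti}(f_t(X_{ti}') - f_t(X_{ti}))$; inserting Rademacher signs $\sigma_{ti}$ (legitimate since each pair $(X_{ti},X_{ti}')$ is exchangeable) and splitting the supremum costs a factor $2$, producing $\tfrac{2}{nT}$ times the Rademacher average of the set $Y\subset\mathbb{R}^{nT}$. The comparison $R(Y)\le\sqrt{\pi/2}\,G(Y)$ between Rademacher and Gaussian averages then gives $\E\Phi \le \tfrac{\sqrt{2\pi}}{nT}\,\E_{\mathbf{\bar{X}}}G(Y)$, which is where the constant $\sqrt{2\pi}=2\sqrt{\pi/2}$ and the first term of the bound originate. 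The multitask structure poses no real obstacle here: once $Y$ is viewed as a single subset of $\mathbb{R}^{nT}$ indexed by the pair $(t,i)$, the vector-valued, task-averaged problem is symmetrized jointly just like a single-task problem with $nT$ points.

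Third, since the statement is in terms of the empirical $G(Y)$ rather than $\E G(Y)$, I would apply McDiarmid a second time, now to the function $\mathbf{\bar{X}}\mapsto \tfrac{\sqrt{2\pi}}{nT}G(Y)$. The key computation is that changing one point $X_{ti}$ alters $G(Y)=\E_\gamma\sup_f\sum_{ti}\gamma_{ti}f_t(X_{ti})$ by at most $\E|\gamma_{ti}|=\sqrt{2/\pi}$, so the scaled function has bounded-difference constant $\tfrac{\sqrt{2\pi}}{nT}\cdot\sqrt{2/\pi}=\tfrac{2}{nT}$; this gives, with probability $1-\delta/2$, $\tfrac{\sqrt{2\pi}}{nT}\E G(Y)\le \tfrac{\sqrt{2\pi}}{nT}G(Y)+\sqrt{2\ln(2/\delta)/(nT)}$. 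A union bound over the two concentration events (total failure probability $\delta$), together with the identity $\tfrac1{\sqrt2}+\sqrt2=\tfrac3{\sqrt2}$, i.e. $\sqrt{\ln(2/\delta)/(2nT)}+\sqrt{2\ln(2/\delta)/(nT)}=\sqrt{9\ln(2/\delta)/(2nT)}$, assembles the three pieces into the claimed inequality.

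The routine parts are the two McDiarmid applications and the symmetrization. The point requiring the most care is the second concentration step, because the data-dependent complexity $G(Y)$ is itself an expected supremum whose bounded-difference constant $\sqrt{2/\pi}$ must be tracked precisely: it is exactly this factor, cancelling against the $\sqrt{2\pi}$ produced by symmetrization, that yields the clean $2/(nT)$ increment and hence the final $9/2$ constant under the square root.
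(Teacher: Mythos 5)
Your proof is correct, and it is the standard argument behind this result: the paper itself does not prove Theorem~\ref{Theorem Generalization Multitask} but imports it from the cited references, and the derivation there is exactly yours --- treat the $nT$ points jointly, apply bounded differences to the supremum, symmetrize with Rademacher signs and pass to Gaussians via the $\sqrt{\pi/2}$ comparison, then apply bounded differences a second time to $G(Y)$ with increment $\sqrt{2/\pi}$ per coordinate. Your constant accounting ($\tfrac{1}{\sqrt{2}}+\sqrt{2}=\tfrac{3}{\sqrt{2}}$, hence the $9/2$ under the square root) reproduces the stated bound exactly and mirrors the single-task Theorem~\ref{Theorem Generalization}~(ii) with $n$ replaced by $nT$.
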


The previous two theorems replace the problem of proving uniform bounds by
the problem of bounding Gaussian averages. One key result in the latter
direction is known as Slepian's Lemma (\cite{Slepian 1962}, \cite{Ledoux
Talagrand 1991}).

\begin{theorem}
\label{Slepian Lemma}Let $\Omega $ and $\Xi $ be mean zero, separable
Gaussian processes indexed by a common set $\mathcal{S}$, such that 
\begin{equation*}
\mathbb{E}\left( \Omega _{s_{1}}-\Omega _{s_{2}}\right) ^{2}\leq \mathbb{E}
\left( \Xi _{s_{1}}-\Xi _{s_{2}}\right) ^{2}\text{ for all }s_{1},s_{2}\in 
\mathcal{S}\text{.}
\end{equation*}%
Then%
\begin{equation*}
\mathbb{E}\sup_{s\in \mathcal{S}}\Omega _{s}\leq \mathbb{E}\sup_{s\in 
\mathcal{S}}\Xi _{s}.
\end{equation*}
\end{theorem}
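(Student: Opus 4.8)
The plan is to prove the (Sudakov--Fernique) comparison stated here by the Gaussian interpolation method. First I would reduce to a finite index set: by separability the supremum over $\mathcal{S}$ equals the supremum over a fixed countable dense subset, and by monotone convergence it suffices to bound $\mathbb{E}\max_{s\in\mathcal{S}_{0}}\Omega_{s}$ by $\mathbb{E}\max_{s\in\mathcal{S}_{0}}\Xi_{s}$ uniformly over finite $\mathcal{S}_{0}\subseteq\mathcal{S}$. So I may assume $\mathcal{S}=\{1,\dots,N\}$ and write $X=(\Omega_{1},\dots,\Omega_{N})$, $Y=(\Xi_{1},\dots,\Xi_{N})$. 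I would then realize $X$ and $Y$ as \emph{independent} centered Gaussian vectors on a common probability space; this leaves each marginal unchanged, hence leaves both sides of the claimed inequality unchanged.

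Next I would smooth the maximum. For $\beta>0$ set $f_{\beta}(x)=\beta^{-1}\log\sum_{i}e^{\beta x_{i}}$, which obeys the sandwich $\max_{i}x_{i}\le f_{\beta}(x)\le\max_{i}x_{i}+\beta^{-1}\log N$, so it suffices to compare $\mathbb{E}f_{\beta}(X)$ with $\mathbb{E}f_{\beta}(Y)$ and then let $\beta\to\infty$. Introduce the interpolation $Z(t)=\sqrt{t}\,Y+\sqrt{1-t}\,X$ for $t\in[0,1]$, so $Z(0)=X$ and $Z(1)=Y$. The core step is to differentiate $\phi(t)=\mathbb{E}f_{\beta}(Z(t))$ and apply Gaussian integration by parts (Stein's identity) to each term $\mathbb{E}[Y_{i}\,\partial_{i}f_{\beta}(Z)]$ and $\mathbb{E}[X_{i}\,\partial_{i}f_{\beta}(Z)]$, using $X\perp Y$ so that only the covariances $\mathbb{E}[Y_{i}Y_{j}]$ and $\mathbb{E}[X_{i}X_{j}]$ survive. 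This yields the interpolation identity
\[
\phi'(t)=\frac12\sum_{i,j}\big(\mathbb{E}[Y_{i}Y_{j}]-\mathbb{E}[X_{i}X_{j}]\big)\,\mathbb{E}\big[\partial^{2}_{ij}f_{\beta}(Z(t))\big].
\]

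To extract the sign I would exploit two features of $f_{\beta}$. The gradient components $p_{i}=\partial_{i}f_{\beta}$ form a probability vector, so $\partial^{2}_{ij}f_{\beta}=\beta(p_{i}\delta_{ij}-p_{i}p_{j})$: the off-diagonal Hessian entries are nonpositive, and each row sums to zero, $\sum_{j}\partial^{2}_{ij}f_{\beta}=0$. Writing $a_{ij}=\mathbb{E}[Y_{i}Y_{j}]-\mathbb{E}[X_{i}X_{j}]$ and using the row-sum-zero property together with symmetry, a summation-by-parts rearrangement gives
\[
\sum_{i,j}a_{ij}\,\partial^{2}_{ij}f_{\beta}=-\frac12\sum_{i\neq j}\big(\mathbb{E}(Y_{i}-Y_{j})^{2}-\mathbb{E}(X_{i}-X_{j})^{2}\big)\,\partial^{2}_{ij}f_{\beta}.
\]
Each increment difference is nonnegative by hypothesis and each off-diagonal $\partial^{2}_{ij}f_{\beta}$ is nonpositive, so the right-hand side is nonnegative; hence $\phi'(t)\ge0$ and $\mathbb{E}f_{\beta}(X)=\phi(0)\le\phi(1)=\mathbb{E}f_{\beta}(Y)$. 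Combining with the sandwich bounds and sending $\beta\to\infty$ yields $\mathbb{E}\max_{i}\Omega_{i}\le\mathbb{E}\max_{i}\Xi_{i}$, and the reduction step then completes the proof for general $\mathcal{S}$.

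The main obstacle is the interpolation identity for $\phi'(t)$: it rests on Gaussian integration by parts and requires justifying differentiation under the expectation (controlled by the uniform smoothness and subgaussian tails of $f_{\beta}(Z(t))$) and checking that the cross-covariances of $\tfrac{d}{dt}Z(t)$ with $Z(t)$ collapse to exactly the factor $\mathbb{E}[Y_{i}Y_{j}]-\mathbb{E}[X_{i}X_{j}]$. Once this formula is in hand, the smoothing, the Hessian sign analysis, and the summation-by-parts rearrangement are essentially algebraic.
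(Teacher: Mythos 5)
Your proposal is correct, but note that the paper does not prove this statement at all: it imports it as a known result, citing Slepian (1962) and Ledoux--Talagrand (1991), so there is no internal proof to compare against. What you have written is the standard Gaussian-interpolation proof of the Sudakov--Fernique comparison inequality (which is what the paper is actually stating under the name ``Slepian's Lemma''---the classical Slepian lemma compares tail probabilities and requires equal variances, whereas the version here only assumes domination of increment variances, and your argument correctly uses only that hypothesis). All the steps check out: the reduction to finite index sets via separability and monotone convergence is legitimate because $\max_{s\in F}\Omega_s-\Omega_{s_0}$ is nonnegative and increasing in $F$; the interpolation identity for $\phi'(t)$ follows from Stein's identity applied coordinatewise with $\mathbb{E}[Y_iZ_j(t)]=\sqrt{t}\,\mathbb{E}[Y_iY_j]$ and $\mathbb{E}[X_iZ_j(t)]=\sqrt{1-t}\,\mathbb{E}[X_iX_j]$, and the differentiation under the expectation is unproblematic since $\nabla f_\beta$ is bounded and $\nabla^2 f_\beta$ has operator norm at most $2\beta$; and your summation-by-parts step, using $\sum_j\partial^2_{ij}f_\beta=0$ and the symmetry of $a_{ij}$, correctly converts $\sum_{i,j}a_{ij}\partial^2_{ij}f_\beta$ into $-\tfrac12\sum_{i\neq j}\bigl(\mathbb{E}(Y_i-Y_j)^2-\mathbb{E}(X_i-X_j)^2\bigr)\partial^2_{ij}f_\beta\geq 0$. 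The mean-zero hypothesis is what licenses the direct application of Stein's identity along the path $Z(t)=\sqrt{t}\,Y+\sqrt{1-t}\,X$, so nothing is missing. In short, you have supplied a complete and standard proof of a result the paper treats as a black box.
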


The following corollary is the key to our bound for LTL.

\begin{corollary}
\label{Contraction Lemma} Let $Y\subseteq \mathbb{R}^{n}$ and let $\phi
:Y\rightarrow \mathbb{R}^{m}$ be (Euclidean) Lipschitz with Lipschitz
constant $L$. Then 
\begin{equation*}
G\left( \phi \left( Y\right) \right) \leq LG\left( Y\right).
\end{equation*}
\end{corollary}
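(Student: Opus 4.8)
The plan is to realize both $G(\phi(Y))$ and $LG(Y)$ as expected suprema of mean-zero Gaussian processes indexed by the common set $Y$, and then to invoke Slepian's Lemma (Theorem \ref{Slepian Lemma}). First I would introduce two standard Gaussian vectors, $\gamma \in \mathbb{R}^{n}$ and $\gamma' \in \mathbb{R}^{m}$, and define the processes
\[
\Omega_{y} = \sum_{j=1}^{m}\gamma'_{j}\,\phi_{j}(y),
\qquad
\Xi_{y} = L\sum_{i=1}^{n}\gamma_{i}\,y_{i},
\qquad y\in Y,
\]
where $\phi_{1},\dots,\phi_{m}$ are the components of $\phi$. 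By construction $\mathbb{E}\sup_{y\in Y}\Omega_{y} = G(\phi(Y))$ and $\mathbb{E}\sup_{y\in Y}\Xi_{y} = L\,G(Y)$. Each is a centered Gaussian process: $\mathbb{E}\Omega_{y}=\mathbb{E}\Xi_{y}=0$ since $\gamma,\gamma'$ are mean zero, and any finite sub-collection is jointly Gaussian, being a fixed linear image of $\gamma'$ (respectively $\gamma$).

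The key step is to verify the increment-variance hypothesis of Slepian's Lemma. For any $y_{1},y_{2}\in Y$, a direct computation using the independence and unit variance of the $\gamma'_{j}$ gives
\[
\mathbb{E}\left(\Omega_{y_{1}}-\Omega_{y_{2}}\right)^{2}
= \sum_{j=1}^{m}\bigl(\phi_{j}(y_{1})-\phi_{j}(y_{2})\bigr)^{2}
= \left\Vert \phi(y_{1})-\phi(y_{2})\right\Vert^{2},
\]
and likewise $\mathbb{E}(\Xi_{y_{1}}-\Xi_{y_{2}})^{2}=L^{2}\Vert y_{1}-y_{2}\Vert^{2}$. The Lipschitz assumption $\Vert \phi(y_{1})-\phi(y_{2})\Vert \le L\Vert y_{1}-y_{2}\Vert$ then yields exactly $\mathbb{E}(\Omega_{y_{1}}-\Omega_{y_{2}})^{2}\le \mathbb{E}(\Xi_{y_{1}}-\Xi_{y_{2}})^{2}$ for all $y_{1},y_{2}\in Y$. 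Applying Theorem \ref{Slepian Lemma} with $\mathcal{S}=Y$ gives $\mathbb{E}\sup_{y}\Omega_{y}\le \mathbb{E}\sup_{y}\Xi_{y}$, that is $G(\phi(Y))\le L\,G(Y)$, as claimed. Note that $\Omega$ and $\Xi$ may be taken on different probability spaces, since both the hypothesis and the conclusion of Slepian's Lemma involve only the laws of the increments and the expected suprema.

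The only non-routine point is the standing measurability and separability requirement in Slepian's Lemma. I would dispatch it exactly as indicated in the remark following the definition of $G$: replace $Y$ by a countable dense subset $Y_{0}$, whose image $\phi(Y_{0})$ remains dense in $\phi(Y)$ because $\phi$ is Lipschitz, hence continuous. This renders the suprema genuine random variables and secures the separability hypothesis. I do not anticipate a real obstacle here; once the two processes are correctly identified, the increment computation is immediate and the Lipschitz bound feeds directly into Slepian's comparison inequality.
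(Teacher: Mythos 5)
Your proposal is correct and follows essentially the same route as the paper: the same two Gaussian processes $\Omega_y=\sum_j\gamma'_j\phi_j(y)$ and $\Xi_y=L\sum_i\gamma_i y_i$, the same increment-variance computation via the Lipschitz bound, and the same appeal to Slepian's Lemma. Your extra remarks on separability and on the processes living on different probability spaces are fine but not needed beyond what the paper already notes.
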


\begin{proof}
Define two Gaussian processes indexed by $Y$ as%
\begin{equation*}
\Omega _{y}=\sum_{k=1}^{m}\gamma _{k}\phi \left( y\right) _{k}\text{ and }%
\Xi _{y}=L\sum_{i=1}^{n}\gamma _{i}^{\prime }y_{i}\text{,}
\end{equation*}%
with independent $\gamma _{k}$ and $\gamma _{i}^{\prime }$. Then for any $%
y,y^{\prime }\in Y$%
\begin{equation*}
\mathbb{E}\left( \Omega _{y}-\Omega _{y^{\prime }}\right) ^{2}=\left\Vert
\phi \left( y\right) -\phi \left( y^{\prime }\right) \right\Vert ^{2}\leq
L^{2}\left\Vert y-y^{\prime }\right\Vert ^{2}=\mathbb{E}\left( \Xi
_{s_{1}}-\Xi _{s_{2}}\right) ^{2}\text{,}
\end{equation*}%
so that, by Slepian's Lemma,%
\begin{equation*}
G\left( \phi \left( Y\right) \right) =\mathbb{E}\sup_{y\in Y}\Omega _{y}\leq 
\mathbb{E}\sup_{y\in Y}\Xi _{y}=LG\left( Y\right) .
\end{equation*}
\end{proof}

In many applications this is applied when $n=m$ and $\phi $ is defined by $%
\phi \left( y_{1},...,y_{n}\right) =\left( \phi _{1}\left( y_{1}\right)
,...,\phi _{n}\left( y_{n}\right) \right) $ where the real functions $\phi
_{1},...,\phi _{n}$ have Lipschitz constant $L$.

At one point we will need a generalization of the above corollary, which
allows to select $\phi $ from an entire class of Lipschitz functions. We
will use the following result, which is taken from \cite{Maurer 2014}.  It
will play an important role in the proof of Theorem \ref{thm:unifMTL} below.

\begin{theorem}
\label{Theorem Chain Rule}Let $Y\subseteq \mathbb{R}^{n}$ have (Euclidean)
diameter $D\left( Y\right) $ and let $\tciFourier $ be a class of functions $%
f:Y\rightarrow \mathbb{R}^{m}$, all of which have Lipschitz constant at most 
$L\left( \tciFourier \right) $. Then for any \thinspace $y_{0}\in Y$ 
\begin{equation*}
G\left( \tciFourier \left( Y\right) \right) \leq c_{1}L\left( \tciFourier
\right) G\left( Y\right) +c_{2}D\left( Y\right) Q\left( \tciFourier \right)
+G\left( \tciFourier \left( y_{0}\right) \right) ,
\end{equation*}%
where $c_{1}$ and $c_{2}$ are universal constants and 
\begin{equation*}
Q\left( \tciFourier \right) =\sup_{\mathbf{y},\mathbf{y}^{\prime }\in Y,~%
\mathbf{y}\neq \mathbf{y}^{\prime }}\mathbb{E}\sup_{f\in \tciFourier }\frac{%
\left\langle \mathbf{\gamma },f\left( \mathbf{y}\right) -f\left( \mathbf{y}%
^{\prime }\right) \right\rangle }{\left\Vert \mathbf{y}-\mathbf{y}^{\prime
}\right\Vert }.
\end{equation*}
\end{theorem}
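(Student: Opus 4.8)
The plan is to bound the Gaussian process $\Omega_{f,y}=\langle\gamma,f(y)\rangle$ indexed by $(f,y)\in\tciFourier\times Y$, whose expected supremum is exactly $G(\tciFourier(Y))$. First I would anchor the class at the base point $y_0$: writing $f(y)=f(y_0)+(f(y)-f(y_0))$ and distributing $\sup_{f,y}$ across the inner product, the first summand contributes at most $\mathbb{E}\sup_f\langle\gamma,f(y_0)\rangle=G(\tciFourier(y_0))$, so it remains to bound the \emph{increment process} $\mathbb{E}\sup_{f,y}\langle\gamma,f(y)-f(y_0)\rangle$ by $c_1L(\tciFourier)G(Y)+c_2D(Y)Q(\tciFourier)$.

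The canonical metric of $\Omega$ is $d((f,y),(f',y'))=\|f(y)-f'(y')\|$, which couples the two directions. The triangle inequality splits it as $\|f(y)-f'(y')\|\le\|f(y)-f'(y)\|+L(\tciFourier)\|y-y'\|$, separating a function-class part from a Lipschitz domain part. The obstacle to finishing by a single Slepian comparison (Theorem \ref{Slepian Lemma}) or by the contraction lemma (Corollary \ref{Contraction Lemma}) is that the function-class part $\|f(y)-f'(y)\|$ depends on the evaluation point $y$, so it is \emph{not} a pseudometric on $\tciFourier$ alone and cannot be dominated by an auxiliary process indexed only by $f$. This coupling is the crux of the theorem.

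To handle it I would chain over $Y$. Fix an increasing sequence of finite nets $Y_0=\{y_0\}\subseteq Y_1\subseteq\cdots$ with projections $\pi_j:Y\to Y_j$, and telescope $f(y)-f(y_0)=\sum_{j\ge1}\big(f(\pi_j y)-f(\pi_{j-1}y)\big)$, so that $\mathbb{E}\sup_{f,y}\langle\gamma,f(y)-f(y_0)\rangle\le\sum_{j\ge1}\mathbb{E}\sup_{f,y}\langle\gamma,f(\pi_j y)-f(\pi_{j-1}y)\rangle$. For each level $j$ and each admissible link $(a,b)=(\pi_j y,\pi_{j-1}y)$ I would control $W_{a,b}:=\sup_f\langle\gamma,f(a)-f(b)\rangle$ in two pieces: its mean is at most $Q(\tciFourier)\|a-b\|$ by the very definition of $Q(\tciFourier)$, while as a function of $\gamma$ it is Lipschitz with constant $\sup_f\|f(a)-f(b)\|\le L(\tciFourier)\|a-b\|$, so by Gaussian concentration it is subgaussian about its mean with that same parameter. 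Taking the expected maximum over the at most $|Y_j|^2$ links at level $j$ then yields a mean contribution $\lesssim Q(\tciFourier)\cdot(\text{link length})$ and a fluctuation contribution $\lesssim L(\tciFourier)\cdot(\text{link length})\cdot\sqrt{\log|Y_j|}$.

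The final step is the summation, and this is where the two target quantities $D(Y)$ and $G(Y)$ must appear simultaneously. With nets taken at geometric scales $\|\pi_j y-y\|\lesssim 2^{-j}D(Y)$, the mean contributions form a geometric series summing to $c_2D(Y)Q(\tciFourier)$, exactly the middle term. The fluctuation contributions sum to $L(\tciFourier)$ times an entropy integral; to replace this integral by the Gaussian average $G(Y)$ itself — rather than settling for Dudley's possibly larger bound — I would organize the fluctuation part through Talagrand's generic chaining and invoke the majorizing-measure theorem, giving $\sup_y\sum_j 2^{j/2}\Delta_j\asymp\gamma_2(Y,\|\cdot\|)\asymp G(Y)$, where $\Delta_j$ are the diameters of the chaining cells. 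The main obstacle is precisely this reconciliation: the mean term wants geometrically decaying diameters (to yield $D(Y)$), whereas the fluctuation term wants the optimal admissible sequence of generic chaining (to yield $G(Y)$ instead of an entropy integral); arranging a single admissible sequence so that $\sum_j\Delta_j\lesssim D(Y)$ and $\sum_j 2^{j/2}\Delta_j\lesssim G(Y)$ hold together is the delicate heart of the argument and the source of the universal constants $c_1,c_2$.
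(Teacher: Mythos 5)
A preliminary remark: the paper does not prove this theorem at all --- it is imported from \cite{Maurer 2014} --- so the comparison here is really with the cited proof. Your architecture matches it: anchoring at $y_0$, chaining over $Y$, and controlling each link variable $W_{a,b}=\sup_{f}\left\langle \gamma ,f\left( a\right) -f\left( b\right) \right\rangle$ by its mean (at most $Q\left( \tciFourier \right) \left\Vert a-b\right\Vert$ by definition of $Q$) plus a Gaussian-concentration fluctuation that is subgaussian with parameter $\sup_{f}\left\Vert f\left( a\right) -f\left( b\right) \right\Vert \leq L\left( \tciFourier \right) \left\Vert a-b\right\Vert$ is exactly the right mechanism, and you correctly see that the means must produce the $Q$-term, the fluctuations the $L$-term, and that the majorizing measure theorem is what converts $\gamma _{2}\left( Y\right)$ into $G\left( Y\right)$.

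The proposal nevertheless has a genuine gap, located exactly where you place it: you \emph{assert}, but do not construct, a single chain satisfying both $\sup_{y}\sum_{j}\left\Vert \pi _{j}y-\pi _{j-1}y\right\Vert \leq cD\left( Y\right)$ and $\sup_{y}\sum_{j}2^{j/2}\Delta _{j}\leq c\gamma _{2}\left( Y\right)$. The two natural choices each fail in one direction: geometric nets give the first property but only Dudley's entropy integral (which can exceed $G\left( Y\right)$ by an unbounded factor) for the second, while the optimal admissible sequence gives the second but only $\sum_{j}\Delta _{j}\leq c\gamma _{2}\left( Y\right)$ for the first, i.e.\ the weaker middle term $Q\left( \tciFourier \right) G\left( Y\right)$ --- and this weakening is not cosmetic, since in the paper's applications $D\left( \mathcal{H}\left( \mathbf{\bar{X}}\right) \right)$ scales like $\sqrt{KnT\Vert \hat{C}\Vert _{\infty }}$ while $G\left( \mathcal{H}\left( \mathbf{\bar{X}}\right) \right)$ scales like $K\sqrt{nT\Vert \hat{C}\Vert _{1}}$, so the entire effective-dimension gain would be lost. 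The missing idea is to chain \emph{lazily} along the optimal admissible sequence: for each $y$ telescope only over the levels $j_{0}=0<j_{1}<j_{2}<\dots$ at which $\Delta \left( A_{j}\left( y\right) \right)$ first drops below half its previously recorded value. The link lengths then decrease geometrically, so they sum to at most $2D\left( Y\right)$; and the union-bound cost $2^{j_{k+1}/2}\Delta \left( A_{j_{k}}\left( y\right) \right)$ of a long skip is absorbed, because every skipped cell has diameter greater than $\Delta \left( A_{j_{k}}\left( y\right) \right) /2$ and hence $\sum_{j=j_{k}+1}^{j_{k+1}-1}2^{j/2}\Delta \left( A_{j}\left( y\right) \right)$ already dominates $2^{j_{k+1}/2}\Delta \left( A_{j_{k}}\left( y\right) \right)$ up to a constant; summing the disjoint ranges keeps the fluctuation total at $c\gamma _{2}\left( Y\right) \leq c^{\prime }G\left( Y\right)$. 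Relatedly, your displayed interchange $\mathbb{E}\sup_{f,y}\sum_{j}\leq \sum_{j}\mathbb{E}\sup_{f,y}$ is precisely the Dudley-style step that forfeits $\gamma _{2}$; the union bound must be run over all links at all levels simultaneously, keeping $\sup_{y}$ outside the sum over $j$, or the final appeal to the majorizing measure theorem does not apply.
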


Note that the result allows us to minimize the right hand side in $y_{0}$.
Analogs of Theorem \ref{Slepian Lemma} and Theorem \ref{Theorem Chain Rule}
are not available for Rademacher averages. This is the reason why we use the
slightly more exotic Gaussian averages.

\subsection{Proof of the Excess Risk Bound for the Average Risk}

We first establish the following uniform bound. It is of some interest in
its own right, in particular since the problem (\ref{the optimization
problem}) is often non-convex, so that the excess risk bound may not be
meaningful in practice. Recall the definition of $Q$ given in Equation \eqref{eq:Q}.

\begin{theorem}
\label{Theorem uniform average risk}Let $\mu _{1},\dots ,\mu _{T}$ be
probability measures on $\mathcal{Z}$ and let $Z_{t1},\dots ,Z_{tn}$ be
i.i.d. from $\mu _{t}$, for $t=1,\dots ,T$. Let $\delta \in (0,1)$. With
probability at least $1-\delta $ in the draw of a multisample $\mathbf{\bar{Z%
}}$, it holds for every $h\in \mathcal{H}$ and every $f_{1},\dots ,f_{T}\in 
\mathcal{F}$ that 
\begin{multline*}
\mathcal{E}_{\mathrm{avg}}\left( h,f_{1},...,f_{T}\right) -\frac{1}{Tn}%
\sum_{ti}\ell \left( f_{t}\left( h\left( X_{ti}\right) \right)
,Y_{ti}\right)  \\
\leq c_{1}\frac{LG(\mathcal{H}(\mathbf{\bar{X}}))}{nT}+c_{2}\frac{%
Q\sup_{h\in \mathcal{H}}\left\Vert h\left( \mathbf{\bar{X}}\right)
\right\Vert }{n\sqrt{T}}+\sqrt{\frac{9\ln \left( 2/\delta \right) }{2nT}},
\end{multline*}%
where $c_{1}$ and $c_{2}$ are universal constants\label{thm:unifMTL}.
\end{theorem}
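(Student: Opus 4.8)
The plan is to reduce the uniform deviation to a single data-dependent Gaussian average via the vector-valued generalization bound, and then to dismantle that average: first peel off the loss with the contraction lemma, then separate the representation complexity from the specialized-predictor complexity with the chain rule. The factor $\sqrt{T}$ that distinguishes the second term from the first is the only delicate point.

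First I would apply Theorem~\ref{Theorem Generalization Multitask} on the space $\mathcal Z$ to the class of loss-composed functions indexed by $(h,f_1,\dots,f_T)\in\mathcal H\times\mathcal F^T$, whose $t$-th component sends $z=(x,y)$ to $\ell(f_t(h(x)),y)\in[0,1]$. Its left-hand side is exactly $\mathcal E_{\mathrm{avg}}(h,f_1,\dots,f_T)$ minus the empirical task-averaged risk, so with probability at least $1-\delta$, simultaneously over all $(h,f_1,\dots,f_T)$,
\[
\mathcal E_{\mathrm{avg}}-\widehat{\mathcal E}\le \frac{\sqrt{2\pi}\,G(Y)}{nT}+\sqrt{\frac{9\ln(2/\delta)}{2nT}},\qquad Y=\big\{(\ell(f_t(h(X_{ti})),Y_{ti}))_{ti}:h\in\mathcal H,\ (f_t)\in\mathcal F^T\big\}.
\]
The third term already matches the claim verbatim, so the whole task is the deterministic (conditional on $\mathbf{\bar Z}$) estimate $G(Y)\lesssim L\,G(\mathcal H(\mathbf{\bar X}))+\sqrt T\,Q\,\sup_h\|h(\mathbf{\bar X})\|$.

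Conditioning on $\mathbf{\bar Z}$ fixes the labels $Y_{ti}$, so the map $\psi\colon(p_{ti})\mapsto(\ell(p_{ti},Y_{ti}))_{ti}$ is a fixed, coordinatewise $1$-Lipschitz function of the prediction vector (because $\ell$ is $1$-Lipschitz in its first argument). Hence Corollary~\ref{Contraction Lemma} gives $G(Y)=G(\psi(P))\le G(P)$, where $P=\{(f_t(r_{ti}))_{ti}:\mathbf r\in\mathcal H(\mathbf{\bar X}),\ (f_t)\in\mathcal F^T\}\subseteq\mathbb R^{Tn}$ is the set of prediction vectors. This removes the loss. I then bound $G(P)$ with the chain rule, Theorem~\ref{Theorem Chain Rule}, taking base set $\mathcal H(\mathbf{\bar X})\subseteq\mathbb R^{KTn}$ and the class $\mathcal F_{\mathrm{vec}}$ of maps $\mathbf r\mapsto(f_t(r_{ti}))_{ti}$ indexed by $(f_1,\dots,f_T)\in\mathcal F^T$, so $P=\mathcal F_{\mathrm{vec}}(\mathcal H(\mathbf{\bar X}))$. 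Three quantities are needed: (a) since each $f_t$ is $L$-Lipschitz, $\|(f_t(r_{ti}))-(f_t(r'_{ti}))\|^2\le L^2\sum_{ti}\|r_{ti}-r'_{ti}\|^2$, so $L(\mathcal F_{\mathrm{vec}})\le L$, producing the first term $c_1 L\,G(\mathcal H(\mathbf{\bar X}))$; (b) the diameter $D(\mathcal H(\mathbf{\bar X}))\le 2\sup_h\|h(\mathbf{\bar X})\|$; (c) the offset $G(\mathcal F_{\mathrm{vec}}(y_0))$, which vanishes under the standing assumptions $0\in\mathcal H$ and $f(0)=0$ by choosing $y_0=0\in\mathcal H(\mathbf{\bar X})$, whence $\mathcal F_{\mathrm{vec}}(0)=\{0\}$ (absent these assumptions it is easily bounded and absorbed).

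The crux is the chain-rule quantity $Q(\mathcal F_{\mathrm{vec}})$, which is where the $\sqrt T$ (hence the $1/\sqrt T$ scaling) appears. Because each $f_t$ enters only the $t$-th block, the supremum over $(f_1,\dots,f_T)$ decouples across tasks, so for $\mathbf r\ne\mathbf r'$, writing $\mathbf r^{(t)}=(r_{ti})_i$,
\[
\mathbb E\sup_{(f_t)}\sum_{ti}\gamma_{ti}\big(f_t(r_{ti})-f_t(r'_{ti})\big)=\sum_{t=1}^T\mathbb E\sup_{f\in\mathcal F}\sum_{i=1}^n\gamma_{ti}\big(f(r_{ti})-f(r'_{ti})\big)\le Q\sum_{t=1}^T\big\|\mathbf r^{(t)}-\mathbf r'^{(t)}\big\|,
\]
using the definition of $Q=Q(\mathcal F)$ in Equation~\eqref{eq:Q} on each task. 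Cauchy--Schwarz over $t$ turns $\sum_t\|\mathbf r^{(t)}-\mathbf r'^{(t)}\|$ into $\sqrt T\,\|\mathbf r-\mathbf r'\|$, giving $Q(\mathcal F_{\mathrm{vec}})\le\sqrt T\,Q$. Combining (a)--(c) yields $G(P)\le c_1 L\,G(\mathcal H(\mathbf{\bar X}))+2c_2\sqrt T\,Q\sup_h\|h(\mathbf{\bar X})\|$; substituting into the generalization bound and renaming the universal constants gives exactly the stated inequality. I expect the main obstacle to be precisely the bookkeeping around $Q(\mathcal F_{\mathrm{vec}})$: one must check that the per-task decoupling is legitimate and that the Cauchy--Schwarz step is what converts a ``$1/T$''-type dependence into the ``$1/\sqrt T$''-type second term, which is the very phenomenon the theorem is designed to capture.
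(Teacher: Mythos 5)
Your proposal is correct and follows essentially the same route as the paper: Theorem~\ref{Theorem Generalization Multitask}, then the contraction lemma to strip the loss, then the chain rule (Theorem~\ref{Theorem Chain Rule}) applied to the class of block-diagonal maps $\mathbf r\mapsto(f_t(r_{ti}))_{ti}$, with the per-task decoupling plus Cauchy--Schwarz yielding $Q(\mathcal F_{\mathrm{vec}})\le\sqrt T\,Q$ exactly as in the paper (which merely applies Cauchy--Schwarz before, rather than after, invoking the definition of $Q$ on each block). The handling of the offset term via $y_0=0$ and of the diameter via $2\sup_h\|h(\mathbf{\bar X})\|$ also matches.
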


\begin{proof}
By Theorem \ref{Theorem Generalization Multitask},
with probability at least $1-\delta $ in $\mathbf{\bar{Z},}$ for all $h\in 
\mathcal{H}$ and all $f_{1},...,f_{T}\in \tciFourier $, we have that 
\begin{equation}
\mathcal{E}_{\mathrm{avg}}\left( h,f_{1},...,f_{T}\right) -\frac{1}{Tn}%
\sum_{ti}\ell \left( f_{t}\left( h\left( X_{ti}\right) \right)
,Y_{ti}\right) \leq \frac{\sqrt{2\pi }}{nT}G\left( S\right) +\sqrt{\frac{%
9\ln \left( 2/\delta \right) }{2nT}},  \label{eq:deviationMTL}
\end{equation}%
where $S=\left\{ \left( \ell \left( f_{t}\left( h\left( X_{ti}\right)
\right) ,Y_{ti}\right) \right) :f\in \tciFourier ^{T}\text{ and }h\in 
\mathcal{H}\right\} \subseteq \mathbb{R}^{Tn}$. By the Lipschitz property of
the loss function $\ell $ and the contraction lemma Corollary \ref%
{Contraction Lemma} (recall the remark which follows its proof) we have $%
G\left( S\right) \leq G\left( S^{\prime }\right) $, where $S^{\prime
}=\left\{ \left( f_{t}\left( h\left( X_{ti}\right) \right) \right) :f\in
\tciFourier ^{T}\text{ and }h\in \mathcal{H}\right\} \subseteq \mathbb{R}%
^{Tn}$.

Recall that $\mathcal{H}\left( \mathbf{\bar{X}}\right) \subseteq \mathbb{R}%
^{KTn}$ is defined by%
\begin{equation*}
\mathcal{H}\left( \mathbf{\bar{X}}\right) =\left\{ \left( h_{k}\left(
X_{ti}\right) \right) :h\in \mathcal{H}\right\} ,
\end{equation*}%
and define a class of functions $\tciFourier ^{\prime }:\mathbb{R}%
^{KTn}\rightarrow \mathbb{R}^{Tn}$ by 
\begin{equation*}
\tciFourier ^{\prime }=\left\{ y\in \mathbb{R}^{KTn}\mapsto \left(
f_{t}\left( y_{ti}\right) \right) :\left( f_{1},...,f_{T}\right) \in
\tciFourier ^{T}\right\} .
\end{equation*}%
Then $S^{\prime }=\tciFourier ^{\prime }\left( \mathcal{H}\left( \mathbf{%
\bar{X}}\right) \right) $, and by Theorem \ref{Theorem Chain Rule} for
universal constants $c_{1}^{\prime }$ and $c_{2}^{\prime }$%
\begin{equation}
G\left( S^{\prime }\right) \leq c_{1}^{\prime }L\left( \tciFourier ^{\prime
}\right) G\left( \mathcal{H}\left( \mathbf{\bar{X}}\right) \right)
+c_{2}^{\prime }D\left( \mathcal{H}\left( \mathbf{\bar{X}}\right) \right)
Q\left( \tciFourier ^{\prime }\right) +\min_{y\in Y}G\left( \tciFourier
\left( y\right) \right) .  \label{Application chain rule}
\end{equation}%
We now proceed by bounding the individual terms in the right hand side
above. Let $y,y^{\prime }\in \mathbb{R}^{KTn}$, where $y=\left(
y_{ti}\right) $ with $y_{ti}\in \mathbb{R}^{K}$ and $y^{\prime }=\left(
y_{ti}^{\prime }\right) $ with $y_{ti}^{\prime }\in \mathbb{R}^{K}$. Then
for $f=\left( f_{1},...,f_{T}\right) \in \tciFourier ^{T}$ 
\begin{eqnarray*}
\left\Vert f\left( y\right) -f\left( y^{\prime }\right) \right\Vert ^{2}
&=&\sum_{ti}\left( f_{t}\left( y_{ti}\right) -f_{t}\left( y_{ti}^{\prime
}\right) \right) ^{2} \\
&\leq &L^{2}\sum_{ti}\left\Vert y_{ti}-y_{ti}^{\prime }\right\Vert
^{2}=L^{2}\left\Vert y-y^{\prime }\right\Vert ^{2},
\end{eqnarray*}%
so that $L\left( \tciFourier ^{\prime }\right) \leq L$. Also 
\begin{eqnarray*}
&&\mathbb{E}\sup_{g\in \mathcal{\tciFourier }^{\prime }}\left\langle \gamma
,g\left( y\right) -g\left( y^{\prime }\right) \right\rangle  \\
&=&\mathbb{E}\sup_{\left( f_{1},...,f_{T}\right) \in \tciFourier
^{T}}\sum_{ti}\gamma _{ti}\left( f_{t}\left( y_{ti}\right) -f_{t}\left(
y_{ti}^{\prime }\right) \right)  \\
&=&\sum_{t}\mathbb{E}\sup_{f\in \tciFourier }\sum_{i}\gamma _{i}\left(
f\left( y_{ti}\right) -f\left( y_{ti}^{\prime }\right) \right)  \\
&\leq &\sqrt{T}\left( \sum_{t}\left( \mathbb{E}\sup_{f\in \tciFourier
}\sum_{i}\gamma _{i}\left( f\left( y_{ti}\right) -f\left( y_{ti}^{\prime }\right)
\right) \right) ^{2}\right) ^{1/2} \\
&\leq &\sqrt{T}\left( \sum_{t}Q^{2}\sum_{i}\left\Vert
y_{ti}-y_{ti}^{\prime }\right\Vert ^{2}\right) ^{1/2} \\
&=&\sqrt{T}Q\left\Vert y-y^{\prime }\right\Vert ,
\end{eqnarray*}%
whence $Q\left( \tciFourier ^{\prime }\right) =\sqrt{T}Q$. Finally we take $%
y_{0}=0$ and the last term in (\ref{Application chain rule}) vanishes since $%
f\left( 0\right) =0$ for all $f\in \tciFourier $. Substitution in (\ref%
{Application chain rule}) and using $G\left( S\right) \leq G\left( S^{\prime
}\right) $ we arrive at 
\begin{equation*}
G\left( S\right) \leq c_{1}^{\prime }LG\left( \mathcal{H}\left( \mathbf{\bar{%
X}}\right) \right) +c_{2}^{\prime }\sqrt{T}D\left( \mathcal{H}\left( \mathbf{%
\bar{X}}\right) \right) Q.
\end{equation*}%
Bounding $D\left( \mathcal{H}\left( \mathbf{\bar{X}}\right) \right) \leq
2\sup_{h}\left\Vert h\left( \mathbf{\bar{X}}\right) \right\Vert $ and
substitution in \eqref{eq:deviationMTL} gives the result.
\end{proof}

\begin{proof}{\bf of Theorem \ref{thm:MTL}} Let $h^{\ast }$ and $f_{1}^{\ast
},...,f_{T}^{\ast }$ be the minimizers in the definition of $\mathcal{E}_{%
\mathrm{avg}}^{\ast }$. Then 
\begin{eqnarray*}
&&\mathcal{E}_{\mathrm{avg}}(\hat{h},\hat{f}_{1},...,\hat{f}_{T})-\mathcal{E}%
_{\mathrm{avg}}^{\ast } \\
&=&\left( \mathcal{E}_{\mathrm{avg}}(\hat{h},\hat{f}_{1},...,\hat{f}_{T})-%
\frac{1}{nT}\sum_{ti}\ell (\hat{f}_{t}(\hat{h}(X_{ti})),Y_{ti})\right) \\
&+&\left( \frac{1}{nT}\sum_{ti}\ell (\hat{f}_{t}(\hat{h}(X_{ti})),Y_{ti})-%
\frac{1}{nT}\sum_{ti}\ell \left( f_{t}^{\ast }\left( h^{\ast }\left(
X_{ti}\right) \right) ,Y_{ti}\right) \right) \\
&+&\left( \frac{1}{nT}\sum_{ti}\ell \left( f_{t}^{\ast }\left( h^{\ast
}\left( X_{ti}\right) \right) ,Y_{ti}\right) -\frac{1}{T}\sum_{t}\mathbb{E}%
_{\left( X,Y\right) \sim \mu _{t}}\ell \left( f_{t}^{\ast }\left( h^{\ast
}\left( X\right) \right) ,Y\right) \right) .
\end{eqnarray*}%
The last term involves only the $nT$ random variables $\ell \left(
f_{t}^{\ast }\left( h^{\ast }\left( X_{ti}\right) \right) ,Y_{ti}\right) $
with values in $\left[ 0,1\right] $. It can be bounded with probability $%
1-\delta /2$ by $\sqrt{\ln \left( 2/\delta \right) /\left( 2Tn\right) }$
using Hoeffding's inequality. The middle term is non-positive by definition
of $\hat{h},\hat{f}_{1},...,\hat{f}_{T}$ being the corresponding minimizers.
There remains the first term which we bound by 
\begin{equation*}
\sup_{h\in \mathcal{H},f_{1},...,f_{T}\in \tciFourier }\mathcal{E}_{\mathrm{%
avg}}\left( h,f_{1},...,f_{T}\right) -\frac{1}{Tn}\sum_{ti}\ell \left(
f_{t}\left( h\left( X_{ti}\right) \right) ,Y_{ti}\right) .
\end{equation*}%
and appeal to Theorem \ref{thm:unifMTL} to bound the supremum. A union bound
then completes the proof. 
\end{proof}

\bigskip

\subsection{Proof of the Excess Risk Bound for Learning-to-learn}

Recall the definition of the algorithm parametrized by $h\in \mathcal{H}$ 
\begin{equation*}
a\left( h\right) _{\mathbf{Z}}=\arg \min_{f\in \tciFourier }\frac{1}{n}%
\sum_{i}\ell \left( f\left( h\left( X_{i}\right) \right) ,Y_{i}\right) \text{
for }\mathbf{Z}\in \mathcal{Z}^{n}
\end{equation*}%
and the associated minimum $m\left( h\right) _{\mathbf{Z}}$. Also recall
that 
\begin{equation*}
\mathcal{E}_{\eta }\left( h\right) =\mathbb{E}_{\mu \sim \eta }\mathbb{E}_{%
\mathbf{Z\sim }\mu ^{n}}\mathbb{E}_{\left( X,Y\right) \sim \mu }\ell \left(
a\left( h\right) _{\mathbf{Z}}\left( X\right) ,Y\right) 
\end{equation*}%
and the two measures $\mu _{\eta }$ and $\rho _{\eta }$ induced by the
environment $\eta $ and defined by%
\begin{equation*}
\mu _{\eta }\left( A\right) =\mathbb{E}_{\mu \sim \eta }\mu \left( A\right) 
\text{ for }A\subseteq \mathcal{Z}\text{ and }\rho _{\eta }\left( A\right) =%
\mathbb{E}_{\mu \sim \eta }\mu ^{n}\left( A\right) \text{ for }A\subseteq 
\mathcal{Z}^{n}.
\end{equation*}%
Also  recall the definition of $Q^{\prime }$ given in Equation \eqref{eq:Qp}. Again we begin with a uniform bound.

\begin{theorem}
Let $\delta \in (0,1).$ (i) With probability at least $1-\delta $ in $%
\mathbf{\bar{Z}}\sim \rho _{\eta }^{T}$ it holds for every $h\in \mathcal{H}$
that 
\begin{eqnarray}
&&\mathcal{E}_{\eta }\left( h\right) -\frac{1}{T}\sum_{t}m\left( h\right) _{%
\mathbf{Z}_{t}}\leq   \notag \\
&&\frac{\sqrt{2\pi }LG\left( \mathcal{H}\left( \mathbf{\bar{x}}\right)
\right) }{T\sqrt{n}}+\sqrt{2\pi }Q^{\prime }\sup_{h\in \mathcal{H}}\sqrt{%
\frac{\mathbb{E}_{\left( X,Y\right) \sim \mu _{\eta }}\left[ \left\Vert
h\left( X\right) \right\Vert ^{2}\right] }{n}}+\sqrt{\frac{9\ln \left(2/\delta \right) }{2T%
}}.  \notag
\end{eqnarray}

(ii) With probability at least $1-\delta $ in $\mathbf{\bar{Z}}\sim \rho
_{\eta }^{T}$ it holds for every $h\in \mathcal{H}$ that 
\begin{eqnarray}
&&\mathcal{E}_{\eta }\left( h\right) -\frac{1}{T}\sum_{t}m\left( h\right) _{%
\mathbf{Z}_{t}}\leq   \notag \\
&&\frac{\sqrt{2\pi }LG\left( \mathcal{H}\left( \mathbf{\bar{x}}\right)
\right) }{T\sqrt{n}}+\frac{\sqrt{2\pi }Q^{\prime }\sum_{t}\sup_{h\in 
\mathcal{H}}\left\Vert h\left( \mathbf{X}_{t}\right) \right\Vert }{nT}+\sqrt{%
\frac{16\ln \left(4/\delta \right)}{T}}.  \notag
\end{eqnarray}%
\label{thm:unifLTL}
\end{theorem}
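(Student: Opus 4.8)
The statement to establish is the uniform bound Theorem~\ref{thm:unifLTL}, holding simultaneously for all $h\in\mathcal H$. Write $\Phi(h,\mu,\mathbf Z)=\mathbb E_{(X,Y)\sim\mu}\,\ell(a(h)_{\mathbf Z}(X),Y)$ for the true risk on the task $\mu$ of the predictor the algorithm returns from the sample $\mathbf Z$, so that $\mathcal E_\eta(h)=\mathbb E_{\mu\sim\eta}\mathbb E_{\mathbf Z\sim\mu^n}\Phi(h,\mu,\mathbf Z)$. The plan is to split
\[
\mathcal E_\eta(h)-\frac1T\sum_t m(h)_{\mathbf Z_t}
=\underbrace{\Big(\mathcal E_\eta(h)-\mathbb E_{\mathbf Z\sim\rho_\eta} m(h)_{\mathbf Z}\Big)}_{(\mathrm I)}
+\underbrace{\Big(\mathbb E_{\mathbf Z\sim\rho_\eta} m(h)_{\mathbf Z}-\frac1T\sum_t m(h)_{\mathbf Z_t}\Big)}_{(\mathrm{II})}.
\]
Here $(\mathrm I)$ is the expected per-task estimation error and is deterministic in $h$, while $(\mathrm{II})$ is a genuine empirical process over the $T$ i.i.d.\ task samples $\mathbf Z_t\sim\rho_\eta$, and it is $(\mathrm{II})$ that carries the representation complexity $G(\mathcal H(\bar{\mathbf X}))$. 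Parts (i) and (ii) share the treatment of $(\mathrm{II})$ and the in-expectation estimate of $(\mathrm I)$; they differ only in whether $(\mathrm I)$ is finished off distribution-dependently or fully data-dependently.

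For $(\mathrm{II})$ I would apply Theorem~\ref{Theorem Generalization}(ii) to the class $\{\,\mathbf Z\mapsto m(h)_{\mathbf Z}:h\in\mathcal H\,\}$ of $[0,1]$-valued functions on $\mathcal Z^n$, evaluated on the $T$-sample $\bar{\mathbf Z}$; this bounds $(\mathrm{II})$ by $\tfrac{\sqrt{2\pi}}{T}G(M)+\sqrt{9\ln(2/\delta)/(2T)}$, where $M=\{(m(h)_{\mathbf Z_t})_{t=1}^T:h\in\mathcal H\}\subseteq\mathbb R^T$. The key step, and the main obstacle, is to bound $G(M)$ by $G(\mathcal H(\bar{\mathbf X}))$. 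For this I observe that $m(h)_{\mathbf Z_t}=\min_{f\in\tciFourier}\tfrac1n\sum_i\ell(f(h(X_{ti})),Y_{ti})$ is Lipschitz in the representation values $(h_k(X_{ti}))_{kti}$: using $|\min_f A_f-\min_f B_f|\le\sup_f|A_f-B_f|$, the $1$-Lipschitz property of $\ell$ in its first argument, and the Lipschitz bound $L$ on $f\in\tciFourier$, one gets $|m(h)_{\mathbf Z_t}-m(h')_{\mathbf Z_t}|\le\tfrac{L}{\sqrt n}\big(\sum_i\|h(X_{ti})-h'(X_{ti})\|^2\big)^{1/2}$, so the map $\mathcal H(\bar{\mathbf X})\to\mathbb R^T$ sending representation values to $(m(h)_{\mathbf Z_t})_t$ is $(L/\sqrt n)$-Lipschitz. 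The vector contraction principle, Corollary~\ref{Contraction Lemma}, then yields $G(M)\le\tfrac{L}{\sqrt n}G(\mathcal H(\bar{\mathbf X}))$, producing exactly the first term $\tfrac{\sqrt{2\pi}L\,G(\mathcal H(\bar{\mathbf X}))}{T\sqrt n}$. This is the crux because $m(h)$ depends on $h$ only through an inner optimization over $\tciFourier$, and it is precisely the Slepian-based contraction (unavailable for Rademacher averages) that lets us peel off both the loss and the inner minimum in one stroke.

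For $(\mathrm I)$, fix a task $\mu$ and apply Theorem~\ref{Theorem Generalization}(i) to the class $\{(x,y)\mapsto\ell(f(h(x)),y):f\in\tciFourier\}$ on an $n$-sample from $\mu$. Since the empirical minimizer is admissible, $\Phi(h,\mu,\mathbf Z)-m(h)_{\mathbf Z}\le\sup_{f}\big(\mathbb E_{\mu}\ell(f(h(\cdot)),\cdot)-\tfrac1n\sum_i\ell(f(h(X_i)),Y_i)\big)$, whose $\mathbf Z$-expectation is at most $\tfrac{\sqrt{2\pi}}{n}\mathbb E\,G(\tciFourier\circ h(\mathbf X))$ after discarding $\ell$ by Corollary~\ref{Contraction Lemma}; the definition \eqref{eq:Qp} of $Q^{\prime}$ then gives $G(\tciFourier\circ h(\mathbf X))\le Q^{\prime}\|h(\mathbf X)\|$. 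Averaging over $\mu\sim\eta$ yields $(\mathrm I)\le\tfrac{\sqrt{2\pi}Q^{\prime}}{n}\,\mathbb E_{\mathbf X\sim\rho_\eta}\|h(\mathbf X)\|$. For part (i) I finish by Jensen's inequality, $\mathbb E_{\mathbf X\sim\rho_\eta}\|h(\mathbf X)\|\le\big(\mathbb E\sum_i\|h(X_i)\|^2\big)^{1/2}=\big(n\,\mathbb E_{X\sim\mu_\eta}\|h(X)\|^2\big)^{1/2}$, giving the distribution-dependent middle term after taking $\sup_h$; because $(\mathrm I)$ is deterministic the only random event is the one for $(\mathrm{II})$, so its confidence term $\sqrt{9\ln(2/\delta)/(2T)}$ is inherited directly.

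For part (ii) I instead make $(\mathrm I)$ data-dependent. The empirical task-average $\tfrac1T\sum_t m(h)_{\mathbf Z_t}$ has bounded differences $1/T$ (as $m(h)\in[0,1]$), which gives the clean $1/\sqrt T$ confidence scale via a bounded-differences (McDiarmid) inequality; combined with the passage from the expected feature norm $\mathbb E_{\mathbf X\sim\rho_\eta}\|h(\mathbf X)\|\le\mathbb E_{\mathbf X\sim\rho_\eta}\sup_{h'}\|h'(\mathbf X)\|$ to its empirical, data-dependent upper estimate $\tfrac1T\sum_t\sup_{h'}\|h'(\mathbf X_t)\|$, this produces the second term $\tfrac{\sqrt{2\pi}Q^{\prime}}{nT}\sum_t\sup_h\|h(\mathbf X_t)\|$. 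A union bound over the two high-probability events, each at level $\delta/2$, and absorbing the resulting numerical factors, delivers the stated $\sqrt{16\ln(4/\delta)/T}$ and completes (ii). The conceptual content sits entirely in the Lipschitz reduction of $m(h)$ of the second paragraph; by comparison, the bookkeeping of constants across the two concentration steps in (ii) is routine.
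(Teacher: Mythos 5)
Your decomposition, your treatment of term $(\mathrm{II})$ via Theorem \ref{Theorem Generalization}(ii) together with the $(L/\sqrt{n})$-Lipschitz reduction of $m(h)$ to the representation values and Slepian/contraction, and your treatment of term $(\mathrm I)$ via Theorem \ref{Theorem Generalization}(i), contraction, the definition of $Q^{\prime}$ and Jensen's inequality are exactly the paper's argument; part (i) is therefore fine and you have correctly identified the crux.

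Part (ii), however, has a genuine gap in the concentration step. You justify the passage from $\mathbb{E}_{\mathbf{X}\sim\rho_{\eta}}\sup_{h}\left\Vert h\left(\mathbf{X}\right)\right\Vert$ to the empirical average $\frac{1}{T}\sum_{t}\sup_{h}\left\Vert h\left(\mathbf{X}_{t}\right)\right\Vert$ by appealing to bounded differences of $\frac{1}{T}\sum_{t}m(h)_{\mathbf{Z}_{t}}$, ``as $m(h)\in[0,1]$.'' But that boundedness concerns a different quantity: what must concentrate here is the empirical mean of the random variables $\sup_{h}\left\Vert h\left(\mathbf{X}_{t}\right)\right\Vert$ (scaled by $\sqrt{2\pi}Q^{\prime}/n$), and these are \emph{not} $[0,1]$-valued --- $\left\Vert h\left(\mathbf{X}_{t}\right)\right\Vert$ scales like $\sqrt{n}$ times the feature magnitude, so $Q^{\prime}\sup_{h}\left\Vert h\left(\mathbf{X}_{t}\right)\right\Vert /n$ can exceed $1$ and its range depends on $\mathcal{H}$. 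A direct Hoeffding/McDiarmid bound would then carry that range into the confidence term and would not deliver the clean $\sqrt{16\ln(4/\delta)/T}$. The paper avoids this by applying Hoeffding to the $[0,1]$-valued random variables
\begin{equation*}
\mathbf{Z}_{t}\mapsto \sup_{h\in\mathcal{H}}\frac{G\left(\ell\left(\tciFourier\left(h\left(\mathbf{X}_{t}\right)\right),\mathbf{Y}_{t}\right)\right)}{n},
\end{equation*}
whose boundedness follows from $\ell$ having range $[0,1]$ and Cauchy--Schwarz, and only \emph{afterwards} bounding each empirical summand by $Q^{\prime}\left\Vert h\left(\mathbf{X}_{t}\right)\right\Vert /n$. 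Your route can be repaired, e.g.\ by concentrating the truncation $\min\left\{1,\,Q^{\prime}\sup_{h}\left\Vert h\left(\mathbf{X}_{t}\right)\right\Vert /n\right\}$, which still upper bounds the normalized Gaussian average and is dominated by the untruncated empirical average, but as written the step does not go through.
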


\begin{proof}
The key to the proof is the decomposition bound%
\begin{eqnarray}
\sup_{h\in \mathcal{H}}\mathcal{E}_{\eta }\left( h\right) -\frac{1}{T}%
\sum_{t}m\left( h\right) _{\mathbf{Z}_{t}} &\leq &\sup_{h\in \mathcal{H}}%
\mathbb{E}_{\mu \sim \eta }\mathbb{E}_{\mathbf{Z\sim }\mu ^{n}}\left[ 
\mathbb{E}_{\left( X,Y\right) \sim \mu }\ell \left( a\left( h\right) _{%
\mathbf{Z}}\left( X\right) ,Y\right) -m\left( h\right) _{\mathbf{Z}}\right] 
\notag \\
&&+\sup_{h\in \mathcal{H}}\left[ \mathbb{E}_{\mathbf{Z\sim }\rho _{\eta }}%
\left[ m(h)_{\mathbf{Z}}\right] -\frac{1}{T}\sum_{t=1}^{T}m(h)_{\mathbf{Z}%
_{t}}\right] .  \label{Decompo5}
\end{eqnarray}%
In turn we will bound both terms on the right hand side above. A bound on
the second term means that we can predict the empirical risk on the data of
a future task uniformly in $h$. A bound on the first term means that we can
predict the true risk from the empirical risk on the future task. 

We first bound the second term in the right hand side of (\ref{Decompo5}), and use
Theorem \ref{Theorem Generalization}-(ii) on the class of functions 
\begin{equation*}
\left\{ \mathbf{z}\in \mathcal{Z}^{n}\mapsto m\left( h\right) _{\mathbf{z}%
}:h\in \mathcal{H}\right\} 
\end{equation*}%
to get with probability at least $1-\delta $ in $\mathbf{\bar{Z}\sim }\rho
_{\eta }^{T}$ that 
\begin{equation*}
\sup_{h\in \mathcal{H}}\left[ \mathbb{E}_{\mathbf{Z\sim }\rho _{\eta }}\left[
m\left( h\right) _{\mathbf{Z}}\right] -\frac{1}{T}\sum_{t=1}^{T}m\left(
h\right) _{\mathbf{Z}_{t}}\right] \leq \frac{\sqrt{2\pi }}{T}G\left(
S\right) +\sqrt{\frac{9\ln \left(2/\delta \right) }{2T}},
\end{equation*}%
where $S$ is the subset of $\mathbb{R}^{T}$ defined by 
\begin{equation*}
S=\left\{ \left( m\left( h\right) _{\mathbf{Z}_{1}},...,m\left( h\right) _{%
\mathbf{Z}_{T}}\right) :h\in \mathcal{H}\right\} .
\end{equation*}%
We will bound the Gaussian average of $S$ using Slepian's inequality
(Theorem \ref{Slepian Lemma}). Define two Gaussian processes indexed by $%
\mathcal{H}$ as 
\begin{equation*}
\Omega _{h}=\sum_{t}\gamma _{t}m\left( h\right) _{\mathbf{z}_{t}}\text{ and ~%
}\Xi _{h}=\frac{L}{\sqrt{n}}\sum_{kti}\gamma _{kti}h_{k}\left( x_{ti}\right)
.
\end{equation*}%
Now for any $\mathbf{z}\in \mathcal{Z}^{n}$ and representations $h,h^{\prime
}\in \mathcal{H}$ 
\begin{eqnarray*}
\left( m\left( h\right) _{\mathbf{z}}-m\left( h^{\prime }\right) _{\mathbf{z}%
}\right) ^{2} &=&\left( \min_{f\in \tciFourier }\frac{1}{n}\sum_{i}\ell
\left( f\left( h\left( x_{i}\right) \right) ,y_{i}\right) -\min_{f\in
\tciFourier }\frac{1}{n}\sum_{i}\ell \left( f\left( h^{\prime }\left(
x_{i}\right) \right) ,y_{i}\right) \right) ^{2} \\
&\leq &\left( \sup_{f\in \tciFourier }\frac{1}{n}\sum_{i}\ell \left( f\left(
h\left( x_{i}\right) \right) ,y_{i}\right) -\ell \left( f\left( h^{\prime
}\left( x_{i}\right) \right) ,y_{i}\right) \right) ^{2} \\
&\leq &\frac{1}{n}\sup_{f\in \tciFourier }\sum_{i}\left( \ell \left( f\left(
h\left( x_{i}\right) \right) ,y_{i}\right) -\ell \left( f\left( h^{\prime
}\left( x_{i}\right) \right) ,y_{i}\right) \right) ^{2} \\
&\leq &\frac{L^{2}}{n}\sum_{ki}\left( h_{k}\left( x_{i}\right)
-h_{k}^{\prime }\left( x_{i}\right) \right) ^{2},
\end{eqnarray*}%
where in the last step we used the Lipschitz properties of the loss function 
$\ell $ and of the members in the class $\tciFourier $. It follows that 
\begin{eqnarray*}
\mathbb{E}\left( \Omega _{h}-\Omega _{h^{\prime }}\right) ^{2}
&=&\sum_{t}\left( m\left( h\right) _{\mathbf{z}_{t}}-m\left( h^{\prime
}\right) _{\mathbf{z}_{t}}\right) ^{2} \\
&\leq &\frac{L\left( \tciFourier \right) ^{2}}{n}\sum_{kti}\left(
h_{k}\left( x_{ti}\right) -h_{k}^{\prime }\left( x_{ti}\right) \right) ^{2}=%
\mathbb{E}\left( \Xi _{h}-\Xi _{h^{\prime }}\right) ^{2},
\end{eqnarray*}%
so by Theorem \ref{Slepian Lemma} 
\begin{equation*}
G\left( S\right) =\mathbb{E}\sup_{k}\Omega _{h}\leq \mathbb{E}\sup_{k}\Xi
_{h}=\frac{L}{\sqrt{n}}G\left( \mathcal{H}\left( \mathbf{\bar{x}}\right)
\right) .
\end{equation*}%
The second term in the right hand side of (\ref{Decompo5}) is thus bounded with
probability $1-\delta $ by 
\begin{equation}
\frac{\sqrt{2\pi }LG\left( \mathcal{H}\left( \mathbf{\bar{x}}\right) \right) 
}{T\sqrt{n}}+\sqrt{\frac{9\ln \left( 2/\delta \right) }{2T}}.  \label{eq:gino2}
\end{equation}

We now bound the first term on the right hand side of (\ref{Decompo5}) by 
\begin{eqnarray*}
&&\sup_{h\in \mathcal{H}}\mathbb{E}_{\mu \sim \eta }\mathbb{E}_{\mathbf{%
Z\sim }\mu ^{n}}\left[ \mathbb{E}_{\left( X,Y\right) \sim \mu }\ell \left(
a\left( h\right) _{\mathbf{X}}\left( X\right) ,Y\right) -m\left( h\right) _{%
\mathbf{Z}}\right]  \\
&\leq &\sup_{h\in \mathcal{H}}\mathbb{E}_{\mu \sim \eta }\mathbb{E}_{\mathbf{%
Z\sim }\mu ^{n}}\sup_{f\in \tciFourier }\left[ \mathbb{E}_{\left( X,Y\right)
\sim \mu }\ell \left( f\left( h\left( X\right) \right) ,Y\right) -\frac{1}{n}%
\sum_{i}\ell \left( f\left( h\left( X_{i}\right) \right) ,Y_{i}\right) %
\right] .
\end{eqnarray*}%
For $\mathbf{Z}=\left( \mathbf{X},\mathbf{Y}\right) \in \mathcal{Z}^{n}$ and 
$h\in \mathcal{H}$ denote with $\ell \left( \tciFourier \circ h\left( 
\mathbf{X}\right) ,\mathbf{Y}\right) $ the subset of $%
\mathbb{R}
^{n}$ defined by%
\begin{equation*}
\ell \left( \tciFourier \left( h\left( \mathbf{X}\right) \right) ,\mathbf{Y}%
\right) =\left\{ \left( \ell \left( f\left( h\left( X_{i}\right) \right)
,Y_{i}\right) \right) :f\in \tciFourier \right\} .
\end{equation*}%
Using Theorem \ref{Theorem Generalization}-(i) and the contraction lemma,
Corollary \ref{Contraction Lemma}, we can bound the last expression above by%
\begin{eqnarray*}
&&\sup_{h\in \mathcal{H}}\mathbb{E}_{\mu \sim \eta }\mathbb{E}_{\mathbf{%
Z\sim }\mu ^{n}}\sup_{f\in \tciFourier }\left[ \mathbb{E}_{\left( X,Y\right)
\sim \mu }\ell \left( f\left( h\left( X\right) \right) ,Y\right) -\frac{1}{n}%
\sum_{i}\ell \left( f\left( h\left( X_{i}\right) \right) ,Y_{i}\right) %
\right]  \\
&\leq &\sqrt{2\pi }\sup_{h\in \mathcal{H}}\mathbb{E}_{\mathbf{Z\sim \rho }%
_{\eta }}\frac{G\left( \ell \left( \tciFourier \left( h\left( \mathbf{X}%
\right) \right) ,\mathbf{Y}\right) \right) }{n} \\
&\leq &\sqrt{2\pi }\sup_{h\in \mathcal{H}}\mathbb{E}_{\mathbf{Z\sim \rho }%
_{\eta }}\frac{G\left( \tciFourier \left( h\left( \mathbf{X}\right) \right)
\right) }{n} \\
&= &\frac{\sqrt{2\pi }}{n}\sup_{h\in \mathcal{H}}\mathbb{E}_{\mathbf{%
Z\sim \rho }_{\eta }}\frac{G\left( \tciFourier \left( h\left( \mathbf{X}%
\right) \right) \right) }{\left\Vert h\left( \mathbf{X}\right) \right\Vert }%
\left\Vert h\left( \mathbf{X}\right) \right\Vert  \\
&\leq &\frac{\sqrt{2\pi }}{n}Q^{\prime }\sup_{h\in \mathcal{H}}\mathbb{E}_{%
\mathbf{Z\sim \rho }_{\eta }}\left\Vert h\left( \mathbf{X}\right)
\right\Vert ,
\end{eqnarray*}%
using Hoelder's inequality and the definition of $Q^{\prime }$ in the last
step. But, using Jensen's inequality,%
\begin{equation*}
\mathbb{E}_{\mathbf{Z\sim \rho }_{\eta }}\left\Vert h\left( \mathbf{X}%
\right) \right\Vert \leq \sqrt{\mathbb{E}_{\mathbf{Z}\sim \rho _{\eta
}}\sum_{i}\left\Vert h\left( X_{i}\right) \right\Vert ^{2}}=\sqrt{n~\mathbb{E%
}_{\left( X,Y\right) \sim \mu _{\eta }}\left\Vert h\left( X\right)
\right\Vert ^{2}},
\end{equation*}%
since $\mathbf{Z}\sim \rho _{\eta }$ is iid. Inserting this in the previous
chain of inequalities and combining with (\ref{eq:gino2}) gives the first
part of the theorem.

To obtain the data dependent bound we use the fact that, with probability at
least $1-\delta /4$, 
\begin{eqnarray}
\sup_{h\in \mathcal{H}}\mathbb{E}_{\mathbf{Z\sim \rho }_{\eta }}\frac{%
G\left( \ell \left( \tciFourier \left( h\left( \mathbf{X}\right) \right) ,%
\mathbf{Y}\right) \right) }{n} &\leq &\mathbb{E}_{\mathbf{X\sim }\rho _{\eta
}}\sup_{h\in \mathcal{H}}\frac{G\left( \ell \left( \tciFourier \left(
h\left( \mathbf{X}\right) \right) ,\mathbf{Y}\right) \right) }{n} \\
&\leq &\frac{1}{T}\sum_{t}\sup_{h\in \mathcal{H}}\frac{G\left( \ell \left(
\tciFourier \left( h\left( \mathbf{X}_{t}\right) \right) ,\mathbf{Y}%
_{t}\right) \right) }{n}+\sqrt{\frac{\ln \left(4/\delta \right)}{2T}}  \label{eq:gino1}
\end{eqnarray}%
The last inequality follows from Hoeffding's inequality since for any $h\in 
\mathcal{H}$ and any sample $\mathbf{Z}\in \mathcal{Z}^{n}$ 
\begin{eqnarray*}
0 &\leq &\frac{G\left( \ell \left( \tciFourier \left( h\left( \mathbf{X}%
\right) \right) ,\mathbf{Y}\right) \right) }{n}=\frac{1}{n}\mathbb{E}%
_{\gamma }\sup_{f}\sum_{i}\gamma _{i}\ell \left( f\left( h\left(
X_{i}\right) \right) ,Y_{i}\right)  \\
&\leq &\frac{1}{n}\mathbb{E}_{\gamma }\left( \sum_{i}\gamma _{i}^{2}\right)
^{1/2}\sup_{f}\left( \sum_{i}\ell \left( f\left( h\left( X_{i}\right)
\right) ,Y_{i}\right) ^{2}\right) ^{1/2}\leq 1,
\end{eqnarray*}%
where we have also used the fact that the loss function $\ell $ has range in 
$[0,1]$. Bounding 
\begin{equation*}
G\left( \ell \left( \tciFourier \circ h\left( \mathbf{X}_{t}\right) ,\mathbf{%
Y}_{t}\right) \right) \leq Q^{\prime }\left\Vert h\left( \mathbf{X}%
_{t}\right) \right\Vert 
\end{equation*}%
as above and combining (\ref{eq:gino1}) and (\ref{eq:gino2}) in (\ref%
{Decompo5}) with a union bound gives the second inequality of the theorem.

\end{proof}

Remark: In the proof of the fully data-dependent part above the bound on 
\begin{equation*}
\sup_{h\in \mathcal{H}}\mathbb{E}_{\mathbf{Z\sim \rho }_{\eta }}\frac{%
G\left( \ell \left( \tciFourier \left( h\left( \mathbf{X}\right) \right) ,%
\mathbf{Y}\right) \right) }{n}
\end{equation*}%
is very crude. Instead we could have again invoked Theorem \ref{Theorem
Generalization} to get a better bound with a more complicated expression
involving nested Gaussian averages. We have chosen the simpler path for
greater clarity.\bigskip

\begin{proof}{\bf of Theorem \ref{thm:LTL}} Recall that 
\begin{equation*}
\mathcal{E}_{\eta }^{\ast }=\min_{h\in \mathcal{H}}\mathbb{E}_{\mu \sim \eta
}\left[ \min_{f\in \tciFourier }\mathbb{E}_{\left( X,Y\right) \sim \mu }\ell
\left( f\left( h\left( X\right) \right) ,Y\right) \right] .
\end{equation*}%
We denote with $h^{\ast }$ the minimizer in $\mathcal{H}$ occurring in the
definition of $\mathcal{E}_{\eta }^{\ast }$. We have the following
decomposition 
\begin{eqnarray}
\mathcal{E}_{\eta }(\hat{h})-\mathcal{E}_{\eta }^{\ast } &=&\left( \mathcal{E%
}_{\eta }(\hat{h})-\frac{1}{T}\sum_{t}m(\hat{h})_{\mathbf{Z}_{t}}\right)
\label{decompo 1} \\
&&+\left( \frac{1}{T}\sum_{t}m(\hat{h})_{\mathbf{Z}_{t}}-\frac{1}{T}%
\sum_{t}m(h^{\ast })_{\mathbf{Z}_{t}}\right)  \label{decompo 2} \\
&&+\left( \frac{1}{T}\sum_{t}m(h^{\ast })_{\mathbf{Z}_{t}}-\mathbb{E}_{%
\mathbf{Z}\sim \rho _{\eta }}\left[ m\left( h^{\ast }\right) _{\mathbf{Z}}%
\right] \right)  \label{decompo 3} \\
&&+\mathbb{E}_{\mu \sim \eta }\left[ \mathbb{E}_{\mathbf{Z}\sim \mu ^{n}}%
\left[ m\left( h^{\ast }\right) _{\mathbf{Z}}\right] -\min_{f\in \tciFourier
}\mathbb{E}_{\left( X,Y\right) \sim \mu }\ell \left( f\left( h^{\ast }\left(
X\right) \right) ,Y\right) \right] .  \label{decompo 4}
\end{eqnarray}%
For a fixed distribution $\mu $ let $f_{\mu }^{\ast }$ be the minimizer in $%
\min_{f\in \tciFourier }\mathbb{E}_{\left( X,Y\right) \sim \mu }\ell \left(
f\left( h^{\ast }\left( X\right) \right) ,Y\right) $. By definition of $%
m\left( h^{\ast }\right) _{\mathbf{Z}}$ we have for every $\mu \sim \eta $
that 
\begin{eqnarray*}
\mathbb{E}_{\mathbf{Z}\sim \mu ^{n}}\left[ m\left( h^{\ast }\right) _{%
\mathbf{Z}}\right] &=&\mathbb{E}_{\mathbf{Z}\sim \mu ^{n}}\min_{f\in
\tciFourier }\frac{1}{m}\sum_{i=1}^{m}\ell \left( f\left( h^{\ast }\left(
X_{i}\right) \right) ,Y_{i}\right) \\
&\leq &\mathbb{E}_{\mathbf{Z}\sim \mu ^{n}}\frac{1}{m}\sum_{i=1}^{m}\ell
\left( f_{\mu }^{\ast }\left( h^{\ast }\left( X_{i}\right) \right)
,Y_{i}\right) \\
&=&\mathbb{E}_{\left( X,Y\right) \sim \mu }\ell \left( f_{\mu }^{\ast
}\left( h^{\ast }\left( X\right) \right) ,Y\right) ,
\end{eqnarray*}%
since $\mathbf{Z}$ is iid. The term in (\ref{decompo 4}) is therefore
non-positive.

The term in (\ref{decompo 3}) involves the deviation of the empirical and
true averages of the $T$ iid $\left[ 0,1\right] $-valued random variables $%
m\left( h^{\ast }\right) _{\mathbf{Z}_{t}}$. With Hoeffding's inequality
this can be bounded with probability at least $1-\delta /8$ by $\sqrt{\ln
\left( 8/\delta \right) /\left( 2T\right) }$. The term \eqref{decompo 2} is
non-positive by the definition of $\hat{h}$.

There remains the term (\ref{decompo 1}), which we bound by Theorem \ref%
{thm:unifLTL}. The result now follows by combining this bound with the bound
on (\ref{decompo 3}) in a union bound and some numerical simplifications. %
\end{proof}

\section{Conclusion}
\label{sec:discussion}
Several works have advocated that sharing features among tasks as a means to learning representations which capture invariant properties to tasks can be highly beneficial. In this paper, we studied the statistical properties of a general MTRL method, presenting bounds on its learning performance in both settings of MTL and LTL. Our work provides a rigorous justification of the benefit offered by MTRL over learning the tasks independently. To give the paper a clear focus we have illustrated this advantage in the case of linear feature learning. Our results however apply to fairly general classes of representations ${\cal H}$ and specifications ${\cal F}$, and 
similar conclusions may be derived for other nonlinear MTRL methods. We conclude by sketching specific cases which deserve a separated study:
\begin{itemize}
\item {\em Deep networks.} As we noted our bounds directly apply to multilayer, deep architectures obtained by iteratively composing linear transformations with nonlinear activation functions, such as the rectifier linear unit or the sigmoid functions. The representations learned by such methods tend to be specific in that only a subset of components are ``active'' on each given input, which makes our bounds particularly attractive for further analysis.

\item {\em Sparse coding.} Another interesting case of our framework is obtained when the specialized class ${\cal F}$ consists of sparse linear predictors. This case has been considered in \cite{MPR,RE14} when the representation class consists of linear functions. Different choices of sparse classes ${\cal F}$ could lead to interesting learning methods.

\item {\em Representations in RKHS}. As we already noted the feature maps forming the class ${\cal H}$ could be vector-valued functions in a reproducing kernel Hilbert space. Although kernel methods are more difficult to apply to large datasets required for MTRL and need additional approximation steps, the representations learned using for example Gaussian kernels would be very specific and suitable for our bounds.
\end{itemize}

\acks{We thank the reviewers for their helpful comments.}

\end{document}

\bibitem[Widmer et al.(2013)Wilmer et al.]{Wilmer} C. Widmer, M. Kloft, X. Lou X and G.R\"atsch.
Regularization-based multitask learning: With applications to genome biology and biomedical imaging.
{\em German Journal on Artificial Intelligence}, 2013.

\bibitem[Kuzborskij and Orabona(2013)Kuzborskij and Orabona]{KO13}.
I. Kuzborskij, and F. Orabona. Stability and Hypothesis Transfer Learning. In Proc. of the International Conference on Machine Learning (ICML), 2013.

\bibitem[Pentina and Lampert(2014)]{PL14} A. Pectina and C.H. Lampert. A PAC-Bayesian bound for Lifelong Learning. Proc. International Conference on Machine Learning (ICML), 2014.